\newlist{thmlist}{enumerate}{1}
\setlist[thmlist]{label=(\roman{thmlisti}), ref=\thetheorem(\roman{thmlisti}),noitemsep}
\Crefname{lemma}{Lemma}{Lemmata}
\newtheorem{theorem}{Theorem}
\newtheorem{lemma}[theorem]{Lemma}
\newtheorem{observation}[theorem]{Observation}
\newcommand{\ldc}{\textsc{LambdaDC}\xspace}
\newcommand{\dc}{\textsc{DoubleCoverage}\xspace}
\newcommand{\ftpdc}{\textsc{FtP}\&\textsc{DC}\xspace}
\newcommand{\ftp}{{\normalfont\textsc{FtP}}\xspace}
\newcommand{\alg}{{\normalfont\textsc{Alg}}}
\newcommand{\opt}{{\normalfont\textsc{Opt}}\xspace}
\newcommand{\A}{\ensuremath{\mathcal{A}}\xspace}
\newcommand{\B}{\mathcal{B}}
\newcommand{\bigO}{\mathcal{O}}
\newcommand{\E}{\mathbb{E}}
\newcommand{\Hyp}{\mathcal{H}}
\newcommand{\pred}{\tau}
\newcommand{\seq}{\sigma}
\newcommand{\D}{\mathcal{D}}
\newcommand{\cons}{\alpha}
\newcommand{\rob}{\beta}
\newcommand{\local}{{locally-consistent}\xspace}
\newcommand{\memoryless}{{memory-constrained}\xspace}
\newcommand{\defn}[1]{\textbf{{#1}}\xspace}
\DeclarePairedDelimiter{\abs}{\lvert}{\rvert}
\DeclarePairedDelimiter\ceil{\lceil}{\rceil}
\DeclarePairedDelimiter\floor{\lfloor}{\rfloor}
\tikzset{
       server/.style = {circle, fill, minimum size=#1, inner sep=0pt, outer sep=0pt},
       server/.default = 3mm,
       active_server/.style = {circle, fill=active, draw=active, minimum size=#1, inner sep=0pt, outer sep=0pt},
       active_server/.default = 3mm,
       predicted_server/.style={circle, fill=black, draw=algLightGreen, very thick, minimum size=#1, inner sep=0pt, outer sep=0pt},
       predicted_server/.default = 3mm,
       active_predicted_server/.style={circle, fill=active, draw=algLightGreen, very thick, minimum size=#1, inner sep=0pt, outer sep=0pt},
       active_predicted_server/.default = 3mm,
       req/.style = {circle, fill=reqRed, minimum size=#1, inner sep=0pt, outer sep=0pt},
       req/.default = 2.25mm,
       sline/.style = {line width=2.1pt},
       edge/.style = {line width=1.5pt},
       helperline/.style={dashed},
       algMove/.style = {ultra thick, ->, algBlue},     
       relocation/.style = {ultra thick, ->, algPurple},
       dcMove/.style = {ultra thick, ->, algGreen},
       optMove/.style = {ultra thick, ->, algOrange},
       predMove/.style = {ultra thick, ->, algLightGreen},
}
\definecolor{algBlue}{HTML}{3c63c6}
\definecolor{algOrange}{HTML}{ee9f14}
\definecolor{algRed}{HTML}{b9372a}
\definecolor{algGreen}{HTML}{26a68f}
\definecolor{algLightGreen}{HTML}{40bd1e}
\definecolor{algPurple}{HTML}{58118f}
\definecolor{algBrown}{HTML}{9c5f0b}
\definecolor{reqRed}{HTML}{96183a}
\definecolor{active}{HTML}{edbe00}
\title{Double Coverage with Machine-Learned Advice}
\author{
Alexander Lindermayr\thanks{Faculty of Mathematics and Computer Science, University of Bremen, Germany. \emph{\{linderal,nmegow\}@uni-bremen.de}} 
\and
Nicole Megow\footnotemark[1] 
\and
Bertrand Simon\thanks{IN2P3 Computing Center, CNRS, Villeurbanne, France. \emph{bertrand.simon@cc.in2p3.fr}}
}
\date{}
\begin{document}

\maketitle

\begin{abstract}
We study the fundamental online $k$-server problem in a learning-augmented setting. While in the traditional online model, an algorithm has no information about the request sequence, we assume that there is given some advice (e.g.~machine-learned predictions) on an algorithm's decision. There is, however, no guarantee on the quality of the prediction and it might be far from being correct.

Our main result is a learning-augmented variation of the well-known Double Coverage algorithm for $k$-server on the line~(Chrobak et al., SIDMA 1991) in which we integrate predictions as well as our trust into their quality.
We give an error-dependent competitive ratio, which is a function of a user-defined confidence %
parameter, and which  interpolates smoothly between an optimal consistency, the performance in case that all predictions are correct, and the best-possible robustness regardless of the prediction quality. When given good predictions, we improve upon known lower bounds for online algorithms without advice. 
We further show that our algorithm achieves for any $k$ an almost %
optimal consistency-robustness tradeoff, within a class of deterministic algorithms respecting \emph{local} and \emph{memoryless} properties. %

Our algorithm outperforms a previously proposed (more general) 
learning-augmented algorithm. %
It is remarkable that the previous algorithm crucially exploits memory, whereas our algorithm is \emph{memoryless}. %
Finally, we demonstrate in experiments the practicability and the superior performance of our algorithm on real-world data.

\end{abstract}

\thispagestyle{empty}

\section{Introduction}\label{sec:intro}

The $k$-server problem is one of the most fundamental online optimization problems. Manasse et al.~\cite{manasse1990server,ManasseMS88} introduced it in 1988 as a generalization of other online problems, such as the prominent paging problem, and since then, it has been a corner stone for developing new models and techniques. We follow this line and investigate the $k$-server problem in the recently evolving framework of learning-augmented online computation.

We consider the~{\em $k$-server problem on the line}, in which there are given~$k$ distinct servers~$s_1,\ldots,s_k$ located at initial positions on the real line. A sequence of requests~$r_1,\ldots,r_n \in \mathbb{R}$ is revealed online one-by-one, that is, an algorithm only knows the current (unserved) request, serves it and only then sees the next request; it has no knowledge about future requests. To serve a request, (at least) one of the servers has to be moved to the requested point. The cost of serving a request is defined as the distance traveled by the server(s). The task is to give an online strategy of minimum total cost for serving a request sequence.

In standard competitive analysis, an online algorithm~$\A$ %
is called~$\mu$-\emph{competitive} if for every instance~$I$, there is some constant~$c$ depending only on the initial configuration %
such that~$\A(I) \leq \mu \cdot \opt(I) + c $, where~$\A(I)$ denotes the cost of~$\A$ on~$I$ whereas~$\opt(I)$ is the cost of an optimal solution that can be obtained when having full information about $I$ in~advance. %

Manasse et al.~\cite{manasse1990server} gave a strong lower bound which rules out any  deterministic online algorithm with a competitive ratio better than~$k$. They also stated the famous \emph{k-server conjecture} in which they conjecture that there is a $k$-competitive online algorithm for the $k$-server problem in any metric space and for any $k$. The conjecture has been proven to be true for special metric spaces such as the line~\cite{chrobak1991dc}, %
considered in this paper, the uniform metric space (paging problem)~\cite{SleatorT85} and tree metrics~\cite{ChrobakL91}.  
For the $k$-server problem on the line, Chrobak et al.~\cite{chrobak1991dc} devised the \dc algorithm and proved a best possible competitive ratio~$k$. %
For a given request, \dc moves the (at most) two adjacent servers towards the requested point  until the first of them reaches that point. %

The past decades have witnessed a rapid advancement of machine learning (ML) methods, which nowadays can be expected to predict often---but not always---uncertain data with good accuracy. The lack of guarantees on the predictions and %
the need for trustable performance guarantees lead to the area of {\em learning-augmented online algorithms}. This recently emerging research area investigates online algorithms that have access to predictions, e.g., on 
parts of the instance or the algorithm's execution, while not making any assumption on the quality of the predictions. Formally, we assume that a prediction has a certain quality~$\eta \geq 0$. In the context of learning theory one may think of the \emph{loss} of a prediction with respect to the ground truth. Accordingly,~$\eta = 0$ refers loosely speaking to the case where the prediction was correct. In the field of learning-augmented algorithm this quantity is called \emph{prediction error}. An algorithm does not know what quality a prediction has, but we can use it in the analysis to measure an algorithm's performance depending on~$\eta$.   If a learning-augmented algorithm is~$\mu(\eta)$-competitive for some function~$\mu$, we say that the algorithm is~$\alpha$-\emph{consistent} if~$\alpha =\mu(0)$ and~$\beta$-\emph{robust} if~$\mu(\eta) \leq \beta$ for any prediction with prediction error~$\eta$~\cite{purohit2018improving}. 

Very recently, Antoniadis et al.~\cite{antoniadis2020mts} proposed learning-augmented online algorithms for general metrical task systems, a generalization of our problem.  Their algorithm relies on simulating several online algorithms in parallel and keeping track of their solutions and cost. This technique crucially employs additional {\em memory} which 
can be a serious drawback in practice when decisions must be made without access to the history.

In this work, we introduce \defn{\memoryless}\ learning-augmented %
 algorithms for the~$k$-server problem on the line. An algorithm~$\A$ is intuitively \emph{\memoryless}, if the decision for the next move of~$\A$ only depends on the current situation {(server positions, request and prediction)}. It is especially independent of previous requests. However, as the algorithm is allowed to move a server to any point of the real line, it could use its position to encode any information at a negligible cost. This issue is often addressed by forbidding algorithms to move several servers per request~(hence, restricting to so-called \emph{lazy} algorithms) which leads to the classical \emph{memoryless} property, although variations of this definition exist~\cite{koutsCNN}. A downside of this restriction is that deterministic memoryless algorithms cannot be competitive, and there is no distinction between {the type of information gathered by \dc and unconstrained information encoding.} %
This difference has been nevertheless acknowledged by informally considering \dc as  memoryless%
~\cite{kouts09}, although noting immediately that such a definition for a non-lazy algorithm is cumbersome. In order to allow the behavior of \dc, we formally define \emph{\memoryless} algorithms as algorithms allowed to move several servers, making decisions independently of previous requests, but with an \emph{erasable} memory: for any set of~$k$ distinct points and any starting configuration, there exists a finite sequence of requests among these~$k$ points after which each point contains exactly one server. We will refer to such a sequence as a~\emph{force} {to these~$k$ points}. This definition is quite general as it allows to pre-move some servers as \dc does, and even allows information encoding, but provides a possibility to erase any information gathered. The algorithms we design will not abuse information encoding, but our lower bounds will hold in this context.

\paragraph*{Further related work}

The past few years have exhibited several
demonstrations of the power of learning-augmented algorithms improving on traditional online algorithms. %
Studied online problems include caching~\cite{lykouris2018caching,rohatgi2020caching,antoniadis2020mts,wei2020better}, paging~\cite{jiang2020paging}, ski rental~\cite{purohit2018improving,gollapudiP19,wangLW20,wei2020tradeoff, banerjee2020improving}, TCP acknowledgement~\cite{banerjee2020improving, bamas2020primal}, bin packing~\cite{angelopoulos2020online}, scheduling~\cite{purohit2018improving,bamas2020speedscaling,mitzenmacher2020scheduling,lattanzi2020scheduling,wei2020tradeoff,Im0QP21,AzarLT21flow}, secretary problems~\cite{antoniadis2020secretary,DuttingLLV21}, linear search~\cite{Angelopoulos21search}, matching~\cite{LavastidaMRX21instance-robust,KumarPSSV19semionline}, %
sorting~\cite{lu2021generalized}, online covering problems~%
\cite{bamas2020primal}, and possibly more by now.
Learning-augmented algorithms have proven to be successful also in other areas, e.g., to speed up search queries~\cite{kraska2018indexlearning}, in revenue optimization~\cite{munoz2017revenue}, to compute low rank approximations~\cite{indyk2019low-rank}, %
frequency estimation~\cite{hsu2019learning} and bloom filters~\cite{mitzenmacher2018bloom}.

More than a decade ago, Mahdian et al.~\cite{MahdianNS07} demonstrated performance improvements for online allocation algorithms when there is access to an accurate solution estimation. They further bounded the case where the estimation is inaccurate. While these bounds essentially correspond to consistency and robustness, they did not precisely measure the prediction quality. Yet they introduced a parameter to express %
the tradeoff between both bounds. In the recent field of learning-augmented algorithms, Kumar et al.~\cite{purohit2018improving} initiated the use of a similar 
parameter $\lambda \in [0,1]$.
It can be interpreted as an algorithm's indicator of trust in the given predictions: smaller $\lambda$ indicates stronger trust and gives a higher priority to a better consistency at the cost of a %
worse robustness, and vice versa. %
{Such parameterized consistency-robustness tradeoff has become standard for expressing the performance of learning-augmented algorithms when aiming for constant} \mbox{factors~\cite{purohit2018improving,wangLW20,wei2020tradeoff,angelopoulos2020online,banerjee2020improving,bamas2020primal,antoniadis2020secretary,Im0QP21}.}

As mentioned, Antoniadis~et~al.~\cite{antoniadis2020mts} provide %
a general learning-augmented framework for any metrical task systems which includes %
the~$k$-server problem. Applied to %
the line metric, they devise a learning-augmented algorithm that crucially requires %
memory and obtains a $9$-consistent and $9k$-robust algorithm. %

The~$k$-server problem has been studied also in the context of reinforcement learning (RL), originating at~\cite{junior2005k} and including hierarchical RL learning~\cite{costa16hierarchical} as well as deep RL learning~\cite{DBLP:journals/eswa/LinsNM19}.

The classical online $k$-server problem without access to predictions has been studied extensively, also in general metric spaces. The best known deterministic algorithm is the \textsc{WorkFunction} algorithm~\cite{koutsoupias1995workfunction} with a competitive ratio of~$2k-1$. For several special metric spaces there are even tighter bounds known for this algorithm~\cite{bartal04wf,zhang2020wfbound}. 
When allowing randomization, a~$\Omega(\log k / \log \log k)$ lower bound holds~\cite{BartalBM06} and a~${(\log k)}^{\mathcal{O}(1)}$-competitive randomized algorithm is conjectured~\cite{kouts09}.
Restricting further to memoryless randomized algorithms increases the lower bound on the competitive ratio exponentially to~$k$~\cite{kouts09} and some recent efforts focus on a more general variant in this setting~\cite{DBLP:journals/talg/ChiplunkarV20}.

The power of \dc goes beyond its optimality for the $k$-server problem in tree metrics~\cite{ChrobakL91}.  Recently, Buchbinder et al.~\cite{BuchbinderCN21} showed that it is a best possible deterministic algorithm for the more general~$k$-taxi problem, even in general metric spaces using an embedding into hierarchically separated~trees.

\paragraph*{Our contribution} 

We design learning-augmented \memoryless online algorithms for the $k$-server problem on the line. %
Firstly, we define some more notation and the precise prediction model.
We denote a server's name as well as its position on the line by~$s_i$, for $i\in \{1,2\ldots,k\}$. %
A configuration~$C_t = (s_1,\ldots,s_k) \in \mathbb{R}^k$ is a snapshot of the server positions at a certain point in time. 
For a given instance, a $k$-server algorithm outputs a sequence of configurations~$C_1,\ldots,C_n$ (also called \emph{schedule}) such that for every~$t = 1,\ldots,n$, we have~$r_t \in C_t$. We denote the initial configuration by~$C_0$. %
The objective function can be expressed as~$\sum_{t=1}^n d(C_{t-1}, C_t)$, where~$d(C_{t-1}, C_t)$ denotes the cost for moving the servers from~$C_{t-1}$ to~$C_t$.
We assume w.l.o.g.~$s_1 \leq \ldots \leq s_k$, as server overtakings can be uncrossed without increasing the~total~cost.

We employ a prediction model that predicts algorithmic choices of an optimal algorithm, that is predicting which server should serve a certain request. 
Given an instance~$I$ composed of the request sequence~$r_1,\ldots,r_n$, we define a~\emph{prediction} for~$I$ as a sequence of indices~$p_1,\ldots,p_n$ from the set~$\{1,\ldots,k\}$.
If~$s_1,\ldots,s_k$ are the servers of some learning-augmented algorithm, we call~$s_{p_t}$ the~\emph{predicted server} for the $t$-th request.
We call the algorithm that simply {\em follows the predictions} \ftp, that is, it serves each request by the predicted server (to simplify computations, we still remove overtakings as mentioned above, which is equivalent to relabel servers by their position order). We denote its cost by $\ftp(I)$. 
We define the \emph{prediction error}~$\eta = \ftp(I) - \opt(I)$ as quality measure for our predictions. Note that this error definition is independent of our algorithm.

Our main result is a parameterized algorithm for the $k$-server problem on the line with an error-dependent performance guarantee that---when having access to good-quality predictions---beats the known lower bound for deterministic online algorithms.

\begin{theorem}\label{thm:lambdaDC-competitive-ratio}
    Let~$\lambda \in [0,1]$. We define $\rob(k) = \sum_{i=0}^{k-1} \lambda^{-i}$, for $\lambda > 0 $, and $\rob(k)=\infty$, for~$\lambda=0$. Further, let 
    \[
        \cons(k) =
        \begin{cases}
            1 + 2\lambda + 2\lambda^2 + \ldots + 2\lambda^{(k-1)/2} 
             & \text{ if } k \text{ is odd} \\
            1 + 2\lambda + 2\lambda^2 + \ldots + 2\lambda^{k/2-1} + \lambda^{k/2}
            & \text{ if } k \text{ is even}.
        \end{cases}
    \]
    Let~$\eta$ denote the total prediction error and~$\opt$ the cost of an optimal solution.
    Then, there exists a learning-augmented \memoryless online algorithm for the~$k$-server problem on the line with a competitive ratio of at most
    \[
        \min \left\{ \cons(k)\left(1 + \frac{\eta}{\opt}\right), \rob(k) \right\}.
    \]
    In particular, the algorithm is~$\cons(k)$-consistent and~$\rob(k)$-robust, for~$\lambda >0$.
\end{theorem}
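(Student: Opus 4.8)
The plan is to design the algorithm \ldc as a learning-augmented blend of \ftp and \dc, governed by the trust parameter~$\lambda$, and then to analyze it with a potential function argument in the spirit of the original \dc analysis of Chrobak et al. Concretely, for each request I would consider the predicted server~$s_{p_t}$ together with its two neighbors as in \dc, and define the move as a $\lambda$-interpolation: the predicted server moves toward (and onto) the request, while the adjacent server on the other side moves a $\lambda$-fraction (scaled appropriately so the algorithm remains \memoryless and the servers never overtake) of the matching \dc-style move. Sending $\lambda \to 0$ should recover pure \ftp (hence optimal consistency when $\eta=0$), and $\lambda \to 1$ should recover \dc (hence robustness~$k = \beta(1)$, matching $\sum_{i=0}^{k-1} 1^{-i} = k$); the geometric-series shapes of $\alpha(k)$ and $\beta(k)$ strongly suggest that each ``layer'' of servers between the request and its eventual server contributes a factor~$\lambda$ (or~$\lambda^{-1}$) to the relevant bound, which is exactly what a layered potential should produce.

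The two bounds in the $\min$ are proved separately against the same algorithm. For the robustness bound~$\beta(k)$ I would use a potential~$\Phi$ that is a weighted sum comparing the algorithm's configuration to \opt's configuration — the natural candidate being a generalization of the standard \dc potential $\Phi = \beta(k)\cdot(\text{min-cost matching to \opt}) + \sum_{i<j}(s_j - s_i)$, but with the spreading term rescaled by powers of~$\lambda$ so that the $\lambda$-damped moves still yield a telescoping inequality $\Delta\alg + \Delta\Phi \le \beta(k)\,\Delta\opt$. For the consistency/error bound I would instead track the algorithm against \ftp directly: show that per request, $\alg$'s cost plus the change in a second potential $\Psi$ (measuring how far $\alg$'s configuration has drifted from \ftp's configuration) is at most $\alpha(k)$ times \ftp's cost on that request, then sum and use $\ftp(I) \le \opt(I) + \eta = \opt(1 + \eta/\opt)$. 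The case distinction between odd and even~$k$ in $\alpha(k)$ presumably reflects whether the ``middle layer'' of the server chain gets a one-sided ($\lambda^{k/2}$) or two-sided ($2\lambda^{(k-1)/2}$) contribution — this is a parity bookkeeping detail in the potential, not a conceptual obstacle.

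I expect the main obstacle to be the simultaneous validity of both potential inequalities for a single fixed algorithm: the damped move must be aggressive enough toward the prediction that the drift from \ftp stays controlled (for consistency), yet retain enough of the \dc ``double coverage'' behavior that the classical adversary argument still closes (for robustness), and these pull the interpolation coefficient in opposite directions. Getting the exact coefficient $\lambda$ (as opposed to some other function of $\lambda$) to make \emph{both} telescoping inequalities tight simultaneously — and verifying the boundary behavior of the moves (request lying outside the span of all servers, coincident servers, the \memoryless ``force'' requirement) — is where the real work lies. A secondary technicality is handling the interaction of \opt's moves with the potential when \opt serves a request with a server other than the one \ldc would pick; here I would rely, as in the \dc analysis, on decomposing \opt's move into an ``adversary moves a server'' phase (bounded via $\Phi$'s matching term) followed by the algorithm's response, and on the assumption $s_1 \le \cdots \le s_k$ together with the fact that uncrossing never increases cost.
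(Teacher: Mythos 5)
There is a genuine gap, and it is in the algorithm itself rather than in the analysis plan. You let the predicted server $s_{p_t}$ travel all the way onto the request and only damp the move of the adjacent server on the opposite side. No such rule can be $\rob(k)$-robust for finite $\rob(k)$: take $k=2$, servers at $0$ and $M$ for large $M$, and alternate requests at $1$ and $M$ with the prediction always naming the far server. Your algorithm drags a server across distance roughly $M$ for every request (the $\lambda$-damped companion move does not prevent this), while \opt pays a constant after moving $s_1$ to $1$ once; the ratio grows with $M$, violating the second term of the min, which must hold independently of the prediction quality. The same example exposes that your sanity check fails: at $\lambda=1$ your rule does not reduce to \dc, because \dc never moves a non-adjacent server. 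The paper's \ldc avoids exactly this trap: it only ever moves the (at most two) servers adjacent to the request, and the prediction is used solely to decide which of those two moves with speed $1$ and which with speed $\lambda$, both stopping as soon as one of them reaches the request; at $\lambda=0$ this is \ftp with shortcuts, at $\lambda=1$ it is \dc, and in the example above its cost per request is bounded by $1+1/\lambda$ rather than $M$.

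Your proof architecture is otherwise the paper's roadmap: one potential comparing the algorithm to \ftp to get $\alg\leq\cons(k)\cdot\ftp+c$, the identity $\ftp=\opt+\eta$ to obtain the error-dependent term, and a separate potential against \opt for $\rob(k)$. But be aware that for general $k$ the robustness potential requires more than a spreadness term rescaled by powers of $\lambda$: the paper must also introduce server-dependent weights $\omega_i$ in the matching part and the symmetrized spreadness coefficients $(\lambda^{\ell}+\lambda^{k-2-\ell})/(1+\lambda^{k-2})$, chosen so that the telescoping inequality closes for every possible server \opt uses and for both possible speed assignments of \ldc; the simpler $\lambda^{\ell}$ weights suffice only for the consistency bound.
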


Interpreting both bounds as functions of~$\lambda \in [0,1]$ illustrates that~$\alpha(k)$ interpolates monotonously between~$1$ and~$k$ while~$\beta(k)$ grows from~$k$ as~$\lambda$ decreases. This matches our expectation on a learning-augmented online algorithm, as it improves in consistency but loses in robustness compared to the best possible online algorithm. From another
perspective, for a fixed value of~$\lambda$, $\alpha(k)$ is bounded by a constant (equal to $1+\frac 2 {1-\lambda})$ which highlights the algorithm consistency but this comes at the price of an exponential dependency on $k$ for $\beta(k)$.

To show this result, we design an algorithm that carefully balances between (i) the wish to simply follow the predictions (\ftp) which is obviously optimal if the predictions are correct,~i.e. is~$1$-consistent, and (ii) the best possible online algorithm when not having access to (good) predictions \dc~\cite{chrobak1991dc}, which is $k$-robust. An additional challenge is to preserve the \memoryless property. 
We achieve this, by generalizing the classical \dc~\cite{chrobak1991dc} in an intuitive way. Essentially, our algorithm \ldc includes the information about predicted servers and our trust into them by varying server~speeds.

The analysis of our algorithm is tight. %
On the technical side, our  analysis builds on the powerful {\em potential function method}, as does the analysis of the classical \dc~\cite{chrobak1991dc}. While %
\ldc is quite simple (a precise definition follows), the analysis is much more intricate and requires a careful re-design for the learning-augmented setting. Our main technical contribution is the definition and analysis of different parameterized potential functions for proving robustness and consistency, that capture the different speeds for moving servers and the {accordingly more difficult tracing of the server moves.}

We remark that our performance bound also holds (with an additional factor of $2$ on the error) using the error measure of Antoniadis et al.~\cite{antoniadis2020mts} for our problem~\cite{Lindermayr2020}. Their error definition sums up the distances between the configurations of $\opt$ and $\ftp$ after every request, thus, it may seem more intuitive as server positions are compared instead of solution costs.
However, our error definition allows to establish learnability results and also simplifies some analyses.

While our result is tailored to the $k$-server problem, the %
framework by Antoniadis et al.~\cite{antoniadis2020mts} is designed for more general metrical task systems. Interestingly, one of their methods is a
deterministic combination of \dc and \ftp, we refer to it as \ftpdc. It is shown %
that \ftpdc is $9$-consistent and~$9k$-robust. Our methods differ substantially. While \ftpdc carefully tracks states and costs of the simulated individual algorithms,  %
\ldc is a simple algorithm that %
only requires knowledge of the current configuration. %
Further, \ldc %
has a better performance for $k<20$ and an appropriate parameter $\lambda$ (e.g., $k=19$ and $\lambda=0.83$), 
but does not offer such a good tradeoff for larger~$k$. {Actually,} this is unavoidable for a certain class of %
\memoryless algorithms, that includes~\ldc.

Indeed, we complement our main result with an almost matching
lower bound on the consistency-robustness tradeoff. {We construct a
non-trivial bound for the class of \memoryless algorithms that satisfy an
additional locality property;} its precise definition is formulated in
\Cref{sec:tradeoff}. Intuitively, the locality property enforces an algorithm to
achieve a better competitive ratio for a subinstance served by fewer servers.
Other locality restrictions have been {required before to establish lower bounds}, 
e.g., for matching on the line, see~\cite{antoniadis18Matching}.

\begin{restatable}[]{theorem}{theoremTradeoffLB}\label{thm:pareto-k}
    Let~$\lambda \in (0,1]$,~$\rho(k) = \sum_{i=0}^{k-1} \lambda^{i}$ and~$\rob(k) = \sum_{i=0}^{k-1} \lambda^{-i}$.
    Let~$\A$ be a learning-augmented \local and \memoryless deterministic online algorithm for the~$k$-server problem on the line. Then, if~$\A$ is~$\rho(k)$-consistent, it is at least~$\rob(k)$-robust.
\end{restatable}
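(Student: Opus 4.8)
\emph{Proof idea.} The plan is to argue by induction on $k$. For $k=1$ we have $\rho(1)=\rob(1)=1$ and the claim is vacuous, since a single server has no freedom and every one-server algorithm is $1$-competitive. For the inductive step we assume the statement for $k-1$ and let $\A$ be a $\rho(k)$-consistent, \local, \memoryless deterministic algorithm with $k$ servers. We will exhibit a request sequence together with an adversarially chosen prediction on which $\A$ pays at least $\rob(k)\cdot\opt - O(1)$; iterating this building block many times — resetting $\A$ to a fixed configuration between repetitions via a \emph{force}, which the \memoryless property provides — drives $\opt \to \infty$ and hence yields robustness at least $\rob(k)$. The recursion mirrors the identities $\rob(k) = 1 + \lambda^{-1}\rob(k-1)$ and $\rho(k) = 1 + \lambda\rho(k-1)$: the building block is a ``$k$-th shell'' that costs $\opt$ one unit and contains, nested inside it, a $\lambda^{-1}$-scaled copy of the $(k-1)$-server lower-bound instance.

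Concretely, after a force we may assume $\A$ starts in a canonical configuration with geometrically spaced servers — the outermost gap proportional to $\rob(k-1)$, a unit-length region around the innermost server — and we issue a single probing request just outside the convex hull of the servers on the outer side, with honest prediction ``the outermost server serves it''. Being deterministic, $\A$'s response is known to the adversary. If $\A$ keeps its inner $k-1$ servers in place, then the residual instance, after rescaling and translating the inner region, is an arbitrary $(k-1)$-server instance; here is where locality enters: on this region only $k-1$ of $\A$'s servers ever operate (the outer one having served the probe and, by construction, never again being needed under the honest continuation), so $\A$ acts there as a $\rho(k-1)$-consistent $(k-1)$-server algorithm, and the induction hypothesis gives cost at least $\rob(k-1)$ times the inner $\opt$. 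Unscaling by $\lambda^{-1}$ and adding the unit charged to the outer server yields total cost at least $(1 + \lambda^{-1}\rob(k-1))\cdot\opt = \rob(k)\cdot\opt$ up to additive terms. In the complementary case $\A$ drags an inner server outward in every repetition (its configuration being reset each time); this forced movement, compounded over the repetitions while the honest continuation keeps $\opt$ small, again pushes $\A$'s ratio to at least $\rob(k)$.

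The work beyond this skeleton is twofold, and I expect the first part to be the main obstacle. First, one must fix the exact geometry — server positions, the location of the probe, the scaling constant $\lambda^{-1}$ — so that the $\opt$-bookkeeping across the shell and the nested instance closes to precisely $\rob(k) = 1 + \lambda^{-1}\rob(k-1)$, and so that the additive losses at each of the $k$ levels stay bounded. Second, one must verify that the \local property genuinely licenses the reduction: the parked $k$-th server must provably never re-enter the inner region under the adversary's continuation, the prediction restricted to the nested instance must remain error-free so that the $\rho(k-1)$-consistency hypothesis of the inductive step is actually available there, and the adaptive adversary must keep its prediction simultaneously honest on the nested recursion and deceptive globally, all without leaving the \memoryless model. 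Reconciling these requirements is the crux of the argument.
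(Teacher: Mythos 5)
Your plan is genuinely different from the paper's argument, but as it stands it has a gap at its core: the inductive step is not licensed by the hypotheses. The statement you want to invoke for $k-1$ servers is a robustness \emph{lower bound} for algorithms that are themselves deterministic, \memoryless and \local; the restriction of $\A$ to its $k-1$ inner servers is not such an algorithm. The \local property only provides an upper bound on $\A$'s cost restricted to a block of consecutive servers in terms of $\ftp$ on a derived sub-instance (a consistency-type inequality); it does not assert that the restricted behavior is itself a \memoryless, \local $(k-1)$-server algorithm, and in general it is not, since the inner servers' moves may depend on the parked outer server's position and on the global history of the embedding. Moreover, even if it were, ``at least $\rob(k-1)$-robust'' is an existential statement about the existence of bad instance/prediction pairs; it cannot be used as a per-instance cost lower bound inside your outer shell without re-running the whole adversarial construction against the restricted behavior, at which point you are no longer using the induction hypothesis as a black box. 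You also flag, but do not resolve, the tension that the nested instance must carry error-free predictions (to make the consistency hypothesis bite) while the global instance must make those same predictions bad relative to the true optimum; and the quantitative bookkeeping closing to $\rob(k)=1+\lambda^{-1}\rob(k-1)$, as well as the ``drags an inner server outward'' case, are asserted rather than proved.

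For comparison, the paper avoids any recursion on sub-algorithms. It builds a single adaptive instance $I$ on $k+1$ points (unit gaps, one huge gap $d_k$), gives the per-iteration honest prediction ``$p_i$ is served by $s_i$,'' and applies the \local property to every prefix $\{s_1,\dots,s_i\}$ of servers \emph{within that one instance} to prove $D_1 \lesssim \lambda^{i-1} D_i$ for all $i$ (\Cref{lem:D1Di}), where $D_i$ is the distance traveled by $s_i$. The conflict you worry about (honest vs.\ deceptive predictions) is resolved exactly as in \Cref{thm:pareto-k2}: the prediction is optimal for one iteration, but repeating $I$ via forces yields $I^\omega$ whose optimum parks the rightmost server and pays only about $D_1$ per iteration, so the ratio tends to $\sum_i D_i/D_1 \geq \sum_{i=0}^{k-1}\lambda^{-i}$ after letting $\omega$, $d_k$ and $\varepsilon^{-1}$ grow. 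If you want to salvage your recursive scheme, you would essentially have to prove a statement like \Cref{lem:D1Di} for the restricted behavior anyway, so the honest advice is to adopt the prefix-of-servers argument directly rather than inducting on $k$.
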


Algebraic transformations {(see \Cref{lemma:consistencies})} show that~$\alpha(k) < 2 \rho(k)$, %
which implies that %
{\ldc} achieves a tradeoff within a factor of at most~$2$ of the \emph{optimal} consistency-robustness tradeoff {(among \local and \memoryless algorithms)}. %
For~$k=2$, \ldc achieves the optimal~tradeoff {(among \memoryless algorithms)}. 

We demonstrate the %
power of our approach in empirical experiments %
on real-world data. We show that for a reasonable choice of~$\lambda$ our method outperforms the classical online algorithm \dc as well as the algorithm %
in~\cite{antoniadis2020mts} for nearly all prediction errors.

Finally, we address the learnability of our predictions, even though this is not the focus of our work. We show that a {\em static} prediction sequence is PAC-learnable~\cite{Valiant84,vapnik1971}. We show a bound on the sample complexity that is polynomial in the number of requests, $n$, and the number of servers, $k$, and we give a learning algorithm 
with a polynomial running time in $n, k$ and the number of samples.

\section{Algorithm and Roadmap for the Analysis}\label{sec:roadmap}

\paragraph{The Algorithm \ldc}

We generalize the classical \dc~\cite{chrobak1991dc}   %
by including the 
information about predicted servers as well as our trust into this advice, in an intuitive way. %
{If a request $r_t$ appears between two servers, the one closer to the predicted server $p_t$ moves by a greater distance towards the request---as if it traveled at a higher speed.  

Formally, we define \ldc for a given $\lambda\in [0,1]$ as follows.}
If~$r_t < s_1$ or~$r_t > s_k$, then \ldc only moves the closest server. 
Otherwise, we have~$s_{i} < r_t < s_{i+1}$. If~$p_t \leq i$, then \ldc moves~$s_i$ with speed 1 and~$s_{i+1}$ with speed~$\lambda$ towards~$r_t$ until one server reaches the request. %
If~$p_t \geq i+1$, the speeds of $s_i$ and $s_{i+1}$ are swapped. 
Hence, \ldc equals \ftp (with shortcuts) for~$\lambda = 0$, and \dc for~$\lambda = 1$. Using nonintegral values for~$\lambda$ gives an algorithm that interpolates between both.

\paragraph{Potential Function Analysis} The analysis of our algorithm  builds on the powerful {\em potential function method}, as does the analysis of the classical \dc~\cite{chrobak1991dc}.

Our potential analysis follows the well-known \emph{interleaving moves} technique~\cite{borodin98}. To compare two algorithms~$\A$ and~$\B$ in terms of competitiveness, we simulate both in parallel on some instance~$I$.
Then, we employ a potential function~$\Phi$ which maps at every time~$t$ the state of both algorithms (i.e. the algorithms current configurations) to a value~$\Phi_t \geq 0$, the potential at time~$t$. We define~$\Delta \Phi_t = \Phi_t - \Phi_{t-1}$. 
Let~$\Delta \B_t(I)$ resp.~$\Delta \A_t(I)$ denote the cost~$\A$ resp.~$\B$ charges for serving the request at time~$t$ and let~$\mu > 0$.
For every request~$r_t$, we assume that first~$\B$ serves the request, and second~$\A$. If
\begin{compactenum}[(i)]
    \item the move of~$\B$ increases~$\Phi$ by at most~$\mu \cdot \Delta \B_t(I)$, whereas
    \item the move of~$\A$ decreases~$\Phi$ by at least~$\Delta \A_t(I)$, 
\end{compactenum}
we can use a %
telescoping sum argument to conclude~$\A(I) \leq \mu \cdot \B(I) + \Phi_0$. Note that if~$\B$ is the optimal algorithm,~$\mu$ is equal to the competitive ratio of~$\A$ since~$\Phi_0$ only depends on $C_0$. %

To show an error-dependent competitive ratio in the learning-augmented setting, we follow three steps. We show first that the cost of \ldc is close to the cost of \ftp, that is~$\alg(I) \leq {\cons(k)} \cdot \ftp(I) + c$ for some~$c>0$ and for every instance~$I$. Note that this corresponds to the consistency case as \ftp is the optimal algorithm if~$\eta = 0$.
Second we plug in the definition of our prediction error~$\eta$ to bound the cost of~\ftp by the cost of the fixed optimal solution (fixed with respect to the definition of $\eta$) and~$\eta$.
Combining both results yields the first part of the competitive ratio of~\Cref{thm:lambdaDC-competitive-ratio}. Lastly we prove a robustness bound, i.e. a general bound independent of the prediction, on the cost of \ldc with respect to~$\opt$. %
All additive constants in the competitive ratios only depend on the initial configuration of the servers, being zero if all servers start at the same~position.

The potential functions we use to analyze \ldc are inspired by the potential function in the classical  analysis of \dc~\cite{chrobak1991dc}. %
{It is composed of a {\em matching part} $\Psi$, summing the distances between the server positions of an algorithm and the reference algorithm (\opt, \ftp) and a {\em spreadness part}~$\Theta$, summing the distances between an algorithms server positions.}
To incorporate the more sophisticated server moves at different speeds, we %
introduce multiplicative coefficients {to both parts}. The main technical contribution lies in identifying the proper weights and performing the much more involved~analysis.

\paragraph{Lower Bounds for \ldc} In \Cref{app:lDCrobust} we show that our analysis is tight.

\begin{restatable}{lemma}{lemmatightness}\label{lem:LBconsDC}
    \ldc is at least~$\cons(k)$-consistent and~$\rob(k)$-robust. 
\end{restatable}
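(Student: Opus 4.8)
The goal is to exhibit, for each $k$, two instances: one with a correct prediction (zero error) on which \ldc pays at least $\cons(k)\cdot\opt$, and one on which \ldc pays at least $\rob(k)\cdot\opt$ regardless of the prediction. Since both are lower bounds on the worst case, it suffices to construct a single request sequence (together with a prediction) for each. The natural idea is to mimic the classical lower-bound construction for \dc, but track how the different server speeds $\{1,\lambda\}$ accumulate cost.

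**Robustness lower bound.** For the $\rob(k)=\sum_{i=0}^{k-1}\lambda^{-i}$ bound, I would start all $k$ servers at distinct points, say at $0,1,\dots,k-1$, and feed requests that always land just inside the leftmost gap with a prediction pointing to the \emph{right} endpoint of that gap — so that the slow server (speed $\lambda$) is the one on the side of the request, and the fast server (speed $1$) is the far one. Concretely, if $s_i<r_t<s_{i+1}$ and $p_t=i+1$, then \ldc moves $s_{i+1}$ toward $r_t$ at speed $1$ and $s_i$ at speed $\lambda$; by choosing $r_t$ arbitrarily close to $s_i$, the server $s_{i+1}$ travels essentially the full gap length while $s_i$ barely moves, so \ldc "wastes" a long move of the far server. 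Repeating this, with requests cycling so that \opt can serve everything by moving a single server a bounded total distance (as in the standard \dc lower bound), each cheap move of \opt forces \ldc to move a chain of servers, and the $\lambda$-vs-$1$ speed ratio makes the $j$-th server in the chain move a factor $\lambda^{-(j-1)}$ more than it "should." Summing over the $k$ servers gives the geometric sum $\sum_{i=0}^{k-1}\lambda^{-i}=\rob(k)$. I would make this precise by picking a potential-free, direct accounting: let \opt keep $k-1$ servers parked and serve with one, incurring cost $\Theta(1)$ per phase, while \ldc's configuration drifts, and lower-bound \ldc's per-phase cost by tracking the displacement of each server under the speed rule. When $\lambda=0$ this degenerates to \ftp, which is not competitive, matching $\rob(k)=\infty$.

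**Consistency lower bound.** For the $\cons(k)$ bound I would use a prediction that is \emph{consistent} with the optimal solution ($\eta=0$, so \ftp is optimal) but still forces \ldc to move extra servers. Here the request should land in a gap with the prediction pointing to the \emph{near} side — then \ldc moves the near server at speed $1$ and the far server at speed $\lambda$, so \ftp/\opt moves only the near server while \ldc additionally drags the far one by a $\lambda$-fraction; iterating this across nested gaps and back produces the alternating pattern $1+2\lambda+2\lambda^2+\cdots$. The factor-of-two and the truncation at $\lambda^{\lfloor k/2\rfloor}$ (with the last term halved when $k$ is even) should come from a "zig-zag" instance: requests oscillate between the two ends of an interval so that on each sweep a server is pushed inward by the accumulated $\lambda$-drift from both sides, and after $\lfloor k/2\rfloor$ sweeps the middle server(s) meet. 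The careful part is choosing the request positions so that \ftp's total cost stays exactly the interval length (the "$1$" term) while \ldc's slow-server drift sums to the claimed polynomial in $\lambda$; I would verify the $k=2$ and $k=3$ base cases explicitly and then argue the pattern propagates by adding one server on each side.

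**Main obstacle.** The delicate point is not the speed accounting per move — that is immediate from the definition of \ldc — but rigorously controlling the \emph{long-run} behavior: ensuring that the adversarial sequence can be continued indefinitely (or for enough phases) so that the additive constant $c$ is dominated, and that \opt (resp.\ \ftp) really achieves the small cost claimed, i.e.\ that no cheaper offline/predicted schedule exists. For the robustness bound one must also argue the lower bound holds for \emph{every} prediction, which requires either an adversary that adapts the request to whatever the speed rule does, or a symmetrization argument over the two possible tie-broken directions. I expect the cleanest route is to reuse the exact instance families from the classical $k$ lower bound for \dc (Chrobak et al.) and from \ftp's non-competitiveness, interpolating the request offsets by $\lambda$, and then to check that the geometric sums telescope to $\cons(k)$ and $\rob(k)$ respectively; the write-up burden is the bookkeeping of which server moves at which speed on each request in a multi-server phase.
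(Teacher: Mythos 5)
Your high-level plan (two explicit instance families, one with a correct prediction for the consistency bound and one with an adversarially chosen prediction for the robustness bound, both repeated to swallow the additive constant) is the right one and is what the paper does, but the concrete constructions you sketch do not work as stated. For robustness, the claim that ``by choosing $r_t$ arbitrarily close to $s_i$, the server $s_{i+1}$ travels essentially the full gap length'' is false: \ldc moves both neighbors only until the \emph{first} of them reaches the request, so if the request is at distance $\varepsilon$ from the slow (speed-$\lambda$) server, the fast server moves only $\varepsilon/\lambda$; in any single step the fast server moves at most $1/\lambda$ times the slow one and at most a $1/(1+\lambda)$ fraction of the gap. With your equally spaced configuration $0,1,\dots,k-1$ this gives at best a constant per-step loss and does not compound into the chain $\sum_{i=0}^{k-1}\lambda^{-i}$. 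The compounding requires the gaps themselves to be geometric: the paper starts the servers at $\rob(1),\dots,\rob(k)$ (consecutive gaps $\lambda^{-i}$), requests $0,\rob(1),\rob(2),\dots,\rob(k)$ in order, and always predicts the rightmost server; then for each interior request the two neighboring servers arrive \emph{simultaneously}, the $i$-th server travels $2\lambda^{-(i-1)}$ in total, while \opt pays $2$ by moving only the server initially at $\rob(1)$, and repetition gives the ratio $\rob(k)$. Also, your stated obstacle of handling ``every prediction'' is a misreading: robustness is a guarantee over all predictions, so a lower bound on robustness needs only \emph{one} instance together with one (bad) prediction; no adaptive or symmetrization argument is needed.

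The consistency half is likewise not yet a proof: no instance is actually specified, and ``verify $k=2,3$ and argue the pattern propagates'' leaves exactly the hard bookkeeping open, including why \ftp's cost (with $\eta=0$) stays equal to $\opt$ while the $\lambda$-drift accumulates to $1+2\lambda+2\lambda^2+\cdots$ with the halved last term for even $k$. For comparison, the paper's construction is very short: servers start at $0,1,-1,2,-2,\dots$, the requests are $0.5,0,1,-1,2,-2,\dots$, and the prediction is the optimal schedule that moves only the server starting at $0$ (cost $1$, hence $\eta=0$). Under \ldc the first two requests make that server travel distance $1$ while dragging the server at $1$ out by $\lambda/2$ and the server at $-1$ out by $\lambda/2$; each subsequent request returns one displaced server (distance $\lambda^{|i|}/2$ out and back, total $\lambda^{|i|}$) while displacing the next one by half of $\lambda^{|i|+1}$, so the total cost is exactly $\cons(k)$ against $\opt=1$, the parity of $k$ accounting for whether the outermost position is matched on both sides; repeating the block finishes the argument. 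I would encourage you to adopt these explicit instances (or fully work out equivalents): the speed accounting per move is indeed immediate, but as written your robustness step fails quantitatively and your consistency step is only a program for finding a construction.
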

\paragraph{Organization of the paper} For ease of exposition, 
we first consider the setting of~$2$ servers in~\Cref{sec:2-server}. Then, we extend the techniques to the general setting in~{\Cref{sec:k-server,sec:tradeoff}} %
while maintaining the same structure as for $k=2$. %
We illustrate and discuss the results of computational experiments in~\Cref{sec:experiments}, and, finally, talk about PAC learnability of our predictions in~\Cref{sec:learnability}.  

\section{Full Analysis for Two Servers}\label{sec:2-server}

\subsection{Error-dependent Competitive Ratio of \ldc} %

We show the theoretical guarantees of \ldc claimed in \Cref{thm:lambdaDC-competitive-ratio} restricted to two servers. We denote the cost of \ldc for some instance~$I$ by~$\alg(I)$, and the cost for serving a request~$r_t$ by~$\Delta \alg_t(I)$. If~$t$ is clear from the context then we omit the index.
\begin{theorem}\label{thm:lambdaDC-competitive-ratio-k2}
    For any parameter~$\lambda \in [0,1]$, \ldc has a competitive ratio of at most
    \[ 
        \min \left \{ (1 + \lambda)\left(1 + \frac{\eta}{\opt}\right), 1 + \frac{1}{\lambda} \right \}.
    \]
    Thus, it is~$(1 + \lambda)$-consistent and~$(1 + 1/\lambda)$-robust.
\end{theorem}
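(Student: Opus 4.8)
The plan is to follow the three-step roadmap described in the paper, specialized to $k=2$, and to build everything on the interleaving-moves potential function method. The core of the argument is to exhibit two potential functions: one comparing \ldc to \ftp (for the consistency bound $\cons(2) = 1+\lambda$) and one comparing \ldc to \opt (for the robustness bound $\rob(2) = 1 + 1/\lambda$). Each potential will be a nonnegative combination of a matching part $\Psi$ — the sum (with a suitable coefficient) of distances between \ldc's two servers and the reference algorithm's two servers — and a spreadness part $\Theta$ — a coefficient times the distance $s_2 - s_1$ between \ldc's own servers. I would first pin down the right coefficients for $k=2$ by working backwards from what inequalities (i) and (ii) demand.

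First I would prove $\alg(I) \le (1+\lambda)\ftp(I) + c$. Set up the interleaving simulation: on each request, \ftp moves its predicted server first, then \ldc performs its double-coverage-style move. For step (i), I bound the increase of $\Phi$ caused by \ftp's move by $(1+\lambda)$ times \ftp's cost; since the matching part changes by at most the distance \ftp travels (triangle inequality) and the spreadness part is untouched by \ftp's move, this is routine once the matching coefficient is at most $1+\lambda$. For step (ii), I analyze \ldc's move by cases: (a) the request is outside $[s_1,s_2]$, so only one \ldc server moves (toward a point where the reference server already sits after its move, or at least the geometry is controlled); (b) the request lies in $(s_1,s_2)$, and then \ldc moves the two servers at speeds $1$ and $\lambda$, so if it travels distance $x$ with the fast server and $\lambda x$ with the slow one, \ldc's cost is $(1+\lambda)x$, the spreadness $\Theta$ decreases by the coefficient times $(1+\lambda)x$, while the matching part can increase by at most the coefficient times the fast-server displacement on the side away from the prediction. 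The whole point of letting the server nearer the predicted position move faster is that the matching-part increase is charged against the spreadness decrease, and the leftover covers $\Delta\alg_t$. Choosing the matching coefficient and the spreadness coefficient so that these bookkeeping inequalities hold simultaneously in every case — this case analysis, especially verifying that the predicted-server cases and the one-sided cases all close with the same constants — is what I expect to be the main obstacle; it is exactly the "more involved analysis" the introduction warns about.

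Second, I would combine this with the definition $\eta = \ftp(I) - \opt(I)$, which immediately gives $\alg(I) \le (1+\lambda)(\opt(I) + \eta) + c = (1+\lambda)(1 + \eta/\opt(I))\opt(I) + c$, i.e. the first term in the minimum. Third, for the robustness bound, I would run the same interleaving machinery but now with \opt as the reference algorithm and a second potential $\Phi'$, again of the form (matching coefficient)$\cdot\Psi$ + (spreadness coefficient)$\cdot\Theta$, and show \opt's move raises $\Phi'$ by at most $(1+1/\lambda)$ times its cost while \ldc's move lowers $\Phi'$ by at least its own cost. The key geometric fact here is that even in \ldc's worst case the slow server moves only a $\lambda$ fraction of the fast one, so the adversarial matching increase is at most a factor $1/\lambda$ of what it would be for \dc, which is precisely where the $1+1/\lambda$ comes from. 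For $\lambda = 0$ the robustness statement is vacuous ($\infty$), and for $\lambda > 0$ we read off $(1+\lambda)$-consistency by plugging $\eta = 0$ and $(1+1/\lambda)$-robustness from the second bound. Finally, the additive constant $c$ equals $\Phi_0$ (resp. $\Phi'_0$), which depends only on $C_0$ and vanishes when all servers start co-located, matching the claim; a telescoping-sum argument over all requests assembles the per-request inequalities into the global bound in each of the two cases.
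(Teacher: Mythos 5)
Your proposal is correct and follows essentially the same route as the paper: a three-step argument (compare \ldc to \ftp via an interleaving potential of the form matching part plus spreadness part, plug in $\ftp(I)=\opt(I)+\eta$, then repeat the potential argument against \opt for robustness), with the telescoping sum and $\Phi_0$ supplying the additive constant. The coefficients you would solve for do exist and are, up to rescaling by $1/\lambda$, exactly the paper's choices --- $\frac{1+\lambda}{\lambda}\,\Psi+\Theta$ for the consistency bound and $(1+\lambda)\,\Psi+\Theta$ for robustness (equivalently $(1+\lambda)\Psi+\lambda\Theta$ and $(1+1/\lambda)\Psi+\tfrac{1}{\lambda}\Theta$ in your normalization, where the increase in the matching part comes from the \emph{slow} server on the unpredicted side) --- and the two-case analysis closes as you describe.
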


We follow the three-step approach outlined in the previous section. The definition of~$\eta$ immediately gives for any instance~$I$ and prediction with error~$\eta$ that~$\ftp(I) = \opt(I) + \eta$. %
With \Cref{lemma:k2-alg-to-prd,lem:k2-robustness} this implies \Cref{thm:lambdaDC-competitive-ratio-k2}.
We firstly compare the algorithm %
to \ftp.

\begin{lemma}\label{lemma:k2-alg-to-prd}
    For any instance~$I$ and~$\lambda \in [0,1]$, there is some $c \geq 0$ that only depends on the initial configuration such that~$\alg(I) \leq (1+\lambda) \cdot\ftp(I) + c$.
\end{lemma}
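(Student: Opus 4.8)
The plan is to use the interleaving-moves potential-function technique described in the roadmap, comparing \ldc (playing the role of~$\A$) against \ftp (playing the role of~$\B$). We simulate both algorithms in parallel on the instance~$I$, and we need a potential~$\Phi_t \geq 0$ such that when \ftp serves~$r_t$ the potential rises by at most~$(1+\lambda)\,\Delta\ftp_t(I)$, and when \ldc subsequently serves~$r_t$ the potential drops by at least~$\Delta\alg_t(I)$. Telescoping then yields~$\alg(I) \le (1+\lambda)\ftp(I) + \Phi_0$, with~$c = \Phi_0$ depending only on~$C_0$. For two servers the natural candidate, in the spirit of the classical \dc analysis, is~$\Phi = a\cdot\Psi + b\cdot\Theta$, where~$\Psi$ is the matching part~$\Psi = |s_1 - f_1| + |s_2 - f_2|$ (here~$f_1 \le f_2$ are \ftp's server positions and~$s_1\le s_2$ are \ldc's, after uncrossing), and~$\Theta$ is the spreadness part~$\Theta = s_2 - s_1$. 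The first subtask is to determine the correct coefficients~$a,b$; I expect~$a=1$ and~$b = \lambda$ (or a small variant), chosen precisely so that the asymmetric speeds~$1$ and~$\lambda$ balance in the \ldc step.

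First I would analyze the \ftp move. \ftp moves a single (predicted) server by~$\Delta\ftp_t(I)$, so~$\Theta$ changes by at most~$\Delta\ftp_t(I)$ and~$\Psi$ changes by at most~$\Delta\ftp_t(I)$ (moving one endpoint of a matching edge). Hence~$\Delta\Phi_t \le (a+b)\Delta\ftp_t(I)$; with~$a=1, b=\lambda$ this is~$(1+\lambda)\Delta\ftp_t(I)$, exactly matching~$\mu = 1+\lambda$. One subtlety is the uncrossing step: after \ftp moves, its servers may need to be relabelled, but uncrossing never increases~$\Psi$ (it is the standard exchange argument), so this is harmless. Next I would analyze the \ldc move, splitting into cases by the position of~$r_t$ relative to~$s_1, s_2$: (a) if~$r_t \le s_1$ or~$r_t \ge s_2$, \ldc moves only the nearest server toward~$r_t$, which is a single-server move of distance~$\Delta\alg_t(I)$ that decreases~$\Psi$ by~$\Delta\alg_t(I)$ (since \ftp already sits on~$r_t$, so that server moves straight toward its matched partner) and changes~$\Theta$ by at most~$\Delta\alg_t(I)$ in either direction, and one checks the net drop is at least~$\Delta\alg_t(I)$; (b) if~$s_1 < r_t < s_2$, both servers move toward~$r_t$, the predicted-side one at speed~$1$ and the other at speed~$\lambda$, until one hits~$r_t$; here~$\Theta$ strictly decreases by the total distance travelled, while~$\Psi$ may increase by the movement of the server going the "wrong" way relative to its match — and the weighted combination must still net a drop of at least~$\Delta\alg_t(I) = $ (sum of the two travel distances). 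This case (b), with the interaction between which server is predicted, which direction each match-edge points, and the speed asymmetry, is where the coefficient choice is forced and where the bulk of the calculation lives.

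The main obstacle I anticipate is case (b): verifying that for every configuration of the two \ldc servers, the two \ftp servers, and the prediction~$p_t \in \{1,2\}$, the inequality~$\Delta(a\Psi + b\Theta) \le -\Delta\alg_t(I)$ holds during the simultaneous two-speed move. Because \ftp has already moved its server onto~$r_t$ and~$r_t$ lies strictly between~$s_1$ and~$s_2$, exactly one of \ldc's two moving servers approaches its matched \ftp-server and the other may recede, so~$\Psi$ can go either way; the gain on~$\Theta$ (which always shrinks, by~$(1+\lambda)$ times the per-unit-time motion) is what must compensate, and getting the constants to line up exactly to~$(1+\lambda)$-consistency is delicate. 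A secondary technical point is handling the instant when the faster server reaches~$r_t$ while the slower one is still in motion: one should think of the move as continuous in time and integrate (or equivalently argue the inequality holds on each infinitesimal sub-move and sum), which also cleanly handles the subcase where the request is reached by the slower server first is impossible given the speed ordering — a sanity check worth stating. Finally, I would note the additive constant is~$\Phi_0 = a\Psi_0 + b\Theta_0$, which is~$0$ when all servers start colocated, as claimed in the roadmap.
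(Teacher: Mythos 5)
Your overall strategy is the paper's: simulate \ftp and \ldc in parallel, use a potential combining a matching part $\Psi$ and a spreadness part $\Theta$, and split \ldc's move into the cases ``request outside the hull'' and ``request between the servers''. The gap is in the coefficients, and it is not a cosmetic one: with your proposed $\Phi=\Psi+\lambda\Theta$ (i.e.\ $a=1$, $b=\lambda$), case (a) already fails. If the request lies left of $s_1$, some \ftp server covers it, so $\Psi$ decreases by exactly the distance moved while $\Theta$ \emph{increases} by that same distance; per unit of movement $\Delta\Phi=-1+\lambda>-1$ for $\lambda>0$, so the potential does not drop by $\Delta\alg$, contradicting your claim that ``one checks the net drop is at least $\Delta\alg_t$''. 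Writing out the constraints your scheme imposes makes the weights unique: case (a) forces $a-b\ge 1$, case (b) forces $a(1-\lambda)+b(1+\lambda)\ge 1+\lambda$, and the \ftp half-step forces $a\le 1+\lambda$; together these pin down $a=1+\lambda$, $b=\lambda$. That is exactly the paper's potential $\Phi=\frac{1+\lambda}{\lambda}\Psi+\Theta$ up to rescaling by $\lambda$ (the paper instead shows the \ftp step raises $\Phi$ by at most $\frac{1+\lambda}{\lambda}\Delta\ftp$ and the \ldc step lowers it by at least $\frac{1}{\lambda}\Delta\alg$, which telescopes to the same bound with constant $\lambda\,\abs{s_1^0-s_2^0}$). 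With the correct weights both cases close: case (a) gives $-(1+\lambda)+\lambda=-1$ and case (b) gives $(1+\lambda)(\lambda-1)-\lambda(1+\lambda)=-(1+\lambda)$ per unit time.

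A second, related slip: you assert that when \ftp moves, $\Theta$ changes by up to $\Delta\ftp$. By your own definition $\Theta=s_2-s_1$ involves only \ldc's servers, which do not move during that half-step, so $\Theta$ is unchanged and the \ftp step costs only $a\,\Delta\ftp$. This matters once the correct $a=1+\lambda$ is in place, since there is no slack left: if $\Theta$ really could grow by $\Delta\ftp$ there, the $(1+\lambda)$ bound would be lost. Your coefficient guess $a=1$ only made the \ftp step appear to come out right because of this misreading. The remaining ingredients of your plan (uncrossing does not increase $\Psi$, treating the two-speed move continuously, the key fact that the predicted-side \ftp server already sits on the request in case (b), and $c=\Phi_0$ depending only on $C_0$) all match the paper's argument, so once the weights are corrected the proof goes through as you outlined.
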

\begin{proof}
    Let~$I$ be an arbitrary instance {and let} servers start at positions $s_1^0$ and $s_2^0$. If~$\lambda = 0$, \ldc only shortcuts \ftp's moves, hence~$\alg(I) \leq \ftp(I)$. Now assume that~$\lambda > 0$. Let~$s_1,s_2$ be  \ldc's servers and~$x'_1,x'_2$ be \ftp's servers. We simulate~$I$ in parallel for both algorithms. At every time~$t$, we map the configurations of both algorithms to a non-negative value using the potential function
    \[
        \Phi~~ =~~ \underbrace{\frac{1 + \lambda}{\lambda} \left( \abs{s_1 - x'_1} + \abs{s_2 - x'_2} \right)}_{\Psi \text{ (matching part)}} ~~+ \underbrace{\abs{s_1 - s_2}}_{\Theta \text{ (spreadness part)}}.
    \]
    Suppose that %
    {a new} request arrives. First, \ftp serves the request. Assume that~$x'_1$ moves and charges cost~$\Delta \ftp$. Since \ldc remains in its previous configuration,~$\abs{x'_1 - s_1}$ increases by at most~$\Delta \ftp$, and~$\Phi$ increases by at most~$(1+\lambda)/\lambda \cdot \Delta \ftp$.
    Second, \ldc moves. Assume by scaling the instance that the algorithm serves the request after exactly one time unit,~i.e., the fast server moves distance~$1$ and the slow server distance~$\lambda$. We distinguish whether the request is between the algorithm's servers or not, and prove in each case that~$\Phi$ decreases by at least~$1 / \lambda \cdot \Delta \alg$.
    \begin{enumerate}[(a)]
        \item Suppose the request is not between the servers $s_1$ and $s_2$; say, it is left of $s_1$. Then \ldc moves only $s_1$ and $\Delta \alg =1$.
        Either~$x'_1$ or~$x'_2$ covers the request, hence moving~$s_1$ decreases~$\Psi$ by~$(1+\lambda)/\lambda$ while it increases~$\Theta$ by~$1$. Thus,
              \[
                  \Delta \Phi \leq - \frac{1 + \lambda}{\lambda} + 1 = -\frac{1}{\lambda} = -\frac{1}{\lambda} \cdot \Delta \alg.
              \]
        \item Suppose the request is between~$s_1$ and~$s_2$, and suppose that~$s_1$ is predicted. \ldc moves both servers and $\Delta \alg = 1+\lambda$. This means that~$x_1'$ already covers the request. Thus, moving~$s_1$ towards the request decreases~$\Psi$ by~$(1 + \lambda) / \lambda$, while~$s_2$ increases~$\Psi$ by at most~$(1 + \lambda) / \lambda \cdot \lambda$. Also,~$\Theta$ decreases by~$1+\lambda$. We can conclude that
              \[
                  \Delta \Phi \leq \frac{1 + \lambda}{\lambda} (-1 + \lambda) - (1 + \lambda) = -\frac{1}{\lambda}(1 + \lambda) = %
                  -\frac{1}{\lambda} \cdot \Delta \alg.
              \]
    \end{enumerate}
	Summing over all rounds, we obtain $\alg(I) \leq (1+\lambda)\ftp(I) + \lambda |s_1^0-s_2^0|$.
\end{proof}

Finally, we give a robustness guarantee for \ldc's performance independently of the prediction quality.
\begin{lemma}\label{lem:k2-robustness}
    For any instance~$I$ and~$\lambda \in (0,1]$, there is some~$c \geq 0$ that only depends on the initial configuration such that~$\alg(I) \leq (1 + 1/\lambda) \cdot \opt(I) + c$.
\end{lemma}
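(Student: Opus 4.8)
The plan is to mimic the structure of the proof of \Cref{lemma:k2-alg-to-prd}, running \ldc in parallel with an optimal offline algorithm \opt and designing a potential function that pays off \ldc's movement against \opt's at rate $1+1/\lambda$. A natural first guess, by analogy with the classical \dc analysis, is a potential of the form $\Phi = c_1(\abs{s_1-o_1}+\abs{s_2-o_2}) + c_2\abs{s_1-s_2}$, where $o_1\le o_2$ are \opt's servers and $s_1\le s_2$ are \ldc's. Since when $\lambda=1$ the algorithm is exactly \dc, for which the classical analysis uses matching coefficient $k=2$ and spreadness coefficient $1$, I expect the right coefficients to degenerate to those values at $\lambda=1$; a plausible choice is matching part $\frac{2}{\lambda}\Psi$ (or $(1+1/\lambda)\Psi$) and spreadness part $\Theta$ with a $\lambda$-dependent weight, tuned so that the two case inequalities below both close.

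First I would handle the \opt move: \opt moves one or two servers a total distance $\Delta\opt$ while \ldc stays put, so each matching term $\abs{s_i-o_i}$ grows by at most the distance $o_i$ travels, and $\Theta$ is unchanged; hence $\Phi$ increases by at most (matching coefficient)$\cdot\Delta\opt$, and I must check this is at most $(1+1/\lambda)\Delta\opt$ — this pins down an upper bound on the matching coefficient. Then, scaling so \ldc serves the request in one time unit (fast server moves $1$, slow server moves $\lambda$), I do a case analysis on the position of $r_t$ relative to $s_1,s_2$: (a) $r_t$ outside $[s_1,s_2]$ — only the near server moves, $\Delta\alg=1$, and since \opt already covers $r_t$ with some $o_j$ sitting at $r_t$, that move shrinks $\Psi$ while possibly growing $\Theta$ by $1$; (b) $r_t\in(s_1,s_2)$ with, say, $s_1$ the faster server — both move, $\Delta\alg=1+\lambda$, $\Theta$ shrinks by $1+\lambda$, and the matching part changes by at most $+(\text{coeff})(\lambda-1)$ or similar depending on where \opt's servers lie relative to $r_t$. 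The subtle sub-case is when \opt does not have a server exactly at $r_t$ after its move — but by the interleaving convention \opt serves first, so some $o_j = r_t$, and then I argue that the \ldc server approaching from that side decreases its matching term at full rate while the other increases its term by at most the distance it moves.

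The main obstacle I anticipate is that, unlike in the $\alg$-vs-$\ftp$ comparison where $\ftp$'s server is pinned to the request, here \opt's two servers can both be on the same side of $r_t$, so the "good" direction of one \ldc server's move might be fighting the matching distance to the wrong \opt server; choosing the coefficient on $\Theta$ large enough to absorb this while keeping the \opt-move inequality valid is the delicate balancing act, and it is exactly where the factor $1+1/\lambda$ (rather than something smaller) should emerge. I would also need to double-check the boundary behaviour at $\lambda=1$ reproduces the known \dc bound $2\cdot\opt+c$, and confirm the additive constant $c$ comes out as a function of $\abs{s_1^0-s_2^0}$ and the initial matching distance only, matching the statement. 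Once both inequalities are verified in every case, summing over all requests and telescoping gives $\alg(I)\le(1+1/\lambda)\opt(I)+\Phi_0$, which is the claim with $c=\Phi_0$.
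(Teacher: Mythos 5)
Your plan is correct and is essentially the paper's own proof: the same matching-plus-spreadness potential with the same interleaving case analysis, where your anticipated coefficients (matching $1+1/\lambda$, spreadness $1/\lambda$) are just the paper's potential $(1+\lambda)\left(\abs{s_1-x_1}+\abs{s_2-x_2}\right)+\abs{s_1-s_2}$ rescaled by $1/\lambda$, and you correctly identify the decisive worst case where the \opt server covering the request is matched to the slow server. The balancing you describe does close exactly as you predict, forcing the factor $1+1/\lambda$.
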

The proof of this claim is similar to the proof of~\Cref{lemma:k2-alg-to-prd} {with the crucial} difference that the reference algorithm is unknown. Hence, the multiplicative factor is larger but relative to the optimal solution and, thus, independent of the prediction error.
\begin{proof}
    Let~$I$ be an arbitrary instance and let~$\lambda \in (0,1]$.
    Let~$s_1, s_2$ be \ldc's servers and~$x_1, x_2$ the servers of an optimal algorithm. We define
    \[
        \Phi = \underbrace{(1 + \lambda) \left( \abs{s_1 - x_1} + \abs{s_2 - x_2} \right)}_{\Psi} + \underbrace{\abs{s_1 - s_2}}_{\Theta}.
    \]
    Upon arrival of a request, first the optimal algorithm moves and~$\Phi$ increases by at most $(1 + \lambda) \cdot \Delta \opt$. Second \ldc moves and, by scaling the instance, we assume that the request is served after exactly one time unit. We distinguish whether the request is between the algorithm's servers or not, and show that in each case~$\Phi$ decreases by at least~$\lambda \cdot \Delta \alg$.
    \begin{enumerate}[(a)]
        \item Let the request be not between the servers, say on the left of~$s_1$. Either~$x_1$ or~$x_2$ covers the request, hence moving~$s_1$ decreases~$\Psi$ by~$1+\lambda$ while it increases~$\Theta$ by~$1$. Thus,
              \[
                  \Delta \Phi \leq - (1 + \lambda) + 1 = -\lambda = -\lambda \cdot \Delta \alg.
              \]
        \item Let the request be between~$s_1$ and~$s_2$, and suppose that~$s_1$ is predicted. The request is covered by~$x_1$ or~$x_2$. In the worst case ($x_2$ covers the request), moving~$s_1$ towards the request increases~$\Psi$ by at most~$1+\lambda$, while~$s_2$ decreases~$\Psi$ only by~$(1+\lambda) \lambda$. Also,~$\Theta$ decreases by~$1+\lambda$. Put together,
              \[
                  \Delta \Phi \leq (1+\lambda) (1 - \lambda) - (1 + \lambda) = -\lambda (1 + \lambda) = -\lambda \cdot \Delta \alg.\qedhere
              \]
    \end{enumerate}
\end{proof}

\subsection{Optimality of \ldc: the Consistency-Robustness Tradeoff}\label{sec:k2-tradeoff}

We now show that \ldc is optimal for two servers, in the sense that no \memoryless algorithm can achieve a better robustness-consistency tradeoff. As we target \memoryless algorithms, {at any time, we can use \emph{force} requests, {cf., \cref{sec:intro},} to enforce the algorithm to place its servers at prescribed locations.} %

\begin{theorem}\label{thm:pareto-k2}
    Let~$\A$ be a learning-augmented \memoryless algorithm for the~$2$-server problem on the line and let~$\lambda \in (0,1]$. If~$\A$ is~$(1+\lambda)$-consistent, it is at least~$(1 + 1/\lambda)$-robust.
\end{theorem}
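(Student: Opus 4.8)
The plan is to construct an adversarial argument that forces the tradeoff. Since $\A$ is \memoryless, at any moment we can issue a \emph{force} sequence to move $\A$'s two servers to any prescribed pair of locations at negligible cost; so we may think of the adversary as having full control over the starting configuration of each "phase" it builds, and the competitive inequalities (with their additive constants) must hold asymptotically over arbitrarily long instances. The strategy is to exhibit a single family of request sequences, parameterized by a geometric scale, on which (1) if the prediction is declared correct, consistency forces $\A$ to behave almost exactly like \ftp, which in turn pins down how far $\A$'s servers must travel; and (2) feeding $\A$ the \emph{same} requests but with an adversarial (wrong) prediction, we use the motion forced in (1) to drive up $\A$'s cost against a cheap offline solution, yielding the robustness lower bound $1+1/\lambda$.

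Concretely, I would first set up the geometric gadget: place the two servers at positions $0$ and $1$, and issue a request at a point $r$ strictly between them, with a prediction pointing at the server the adversary wants $\A$ to move a lot. Using $(1+\lambda)$-consistency against \ftp (whose cost on such a between-request is just the distance from the predicted server to $r$), I would argue that $\A$ cannot move the non-predicted server "too much" — quantitatively, over a long run of such requests interleaved with force sequences, the fraction of work $\A$ does with the "wrong" server relative to the predicted one is controlled by $\lambda$; otherwise its cost exceeds $(1+\lambda)\ftp + c$ on a suitable prefix. The cleanest way to make this rigorous is to design the consistent instance so that \ftp's total cost telescopes to a small value (e.g.\ \ftp repeatedly uses the same server to serve a sequence converging geometrically toward it), while any deviation by $\A$ accumulates linearly; then letting the number of repetitions grow washes out the additive $c$.

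Next, for robustness, I would reuse the motion extracted above. Take the same request points but now an offline optimum that sits one server at the cluster of requests and barely moves — cost $O(1)$ — while $\A$, being forced by the consistency constraint to shuttle its slow server at "rate $\lambda$" toward requests that the adversary keeps pulling back, incurs cost proportional to $\sum_{i\ge 0}\lambda^{-i}$ truncated appropriately, i.e.\ at least $(1+1/\lambda)\cdot\opt - O(1)$. The key identity to exploit is exactly the one underlying \Cref{thm:lambdaDC-competitive-ratio-k2}: a server moving at relative speed $\lambda$ toward a target that recedes geometrically travels a $1/\lambda$ factor more than the target's displacement. Dividing the instance into $N$ independent scaled copies and taking $N\to\infty$ kills the constant and gives the clean ratio $1+1/\lambda$.

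The main obstacle I anticipate is making the two-sided "pinning" argument airtight: I need the consistent-case analysis to force $\A$'s behavior tightly enough (not just "close to \ftp" but close in the specific coordinate that the robustness gadget then exploits), while still leaving $\A$ enough freedom that the argument applies to \emph{every} \memoryless algorithm, not a convenient one. Handling the additive constants correctly — in particular ensuring the force sequences used to reset configurations between phases contribute only $O(1)$ independent of $N$, and that the consistency/robustness guarantees are invoked on the right prefixes — is the delicate bookkeeping. I expect the proof to proceed by: (i) defining the scaled gadget and the two predictions; (ii) a lemma bounding $\A$'s slow-server displacement via consistency; (iii) a lemma lower-bounding $\A$'s cost on the wrong-prediction instance via that displacement; (iv) concatenating $N$ copies and taking limits.
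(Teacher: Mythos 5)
Your high-level plan (force configurations via memorylessness, extract a quantitative constraint from consistency, then repeat to kill additive constants) is the right shape, but there is a genuine gap at the hinge between your steps (1) and (2): you propose to feed $\A$ the \emph{same requests with a different (wrong) prediction} and then reuse "the motion forced in (1)." A \memoryless algorithm's move depends on the current prediction, so once you swap the prediction, nothing established under the correct prediction constrains its behavior any more — consistency gives you no handle on what $\A$ does when told something else, and the displacement bound from (1) simply does not transfer. The paper avoids this by never changing the prediction: it builds one instance $I$ (servers forced to $(a,c)$ with $c$ at a large distance $L$, then alternating requests at $b$ and $a$, predictions equal to the optimal schedule), uses $(1+\lambda)$-consistency only to bound a single scalar — the number $n_L\le \lambda L+\nu/2$ of left-server oscillations before $\A$ brings the right server back from $c$ — and then concatenates $\omega$ copies of $I$ \emph{with the same predictions}. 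Memorylessness (plus determinism) then forces $\A$ to repeat exactly the behavior already analyzed, costing $\omega(2n_L+2L)$, while the offline adversary switches to a different solution for the repeated instance (park a server at $c$ once, oscillate the other), costing about $2(n_L+1)$ per iteration; letting $\omega\to\infty$ and $L\to\infty$ gives $1+1/\lambda$. The "wrongness" of the prediction in the robustness phase comes from repeating the instance, not from altering the advice.

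A second, related problem is that your quantitative mechanism ("a server moving at relative speed $\lambda$ toward a receding target travels a $1/\lambda$ factor more") presumes \ldc-like dynamics. The theorem quantifies over \emph{every} \memoryless $(1+\lambda)$-consistent algorithm, whose per-request behavior is unconstrained; only aggregate cost inequalities are available. Your geometric-recession gadget matches the tightness construction for \ldc (\Cref{lem:LBconsDC}), not a lower bound for arbitrary algorithms. To repair your proof you would need to replace the speed assumption by a purely cost-based accounting, and the natural way to do that is essentially the paper's: introduce a far point at distance $L$ whose retrieval the algorithm cannot postpone forever without violating consistency, and let the consistency inequality bound how much "cheap" work the algorithm can interleave before paying the $2L$ round trip. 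Without the far point and without keeping the prediction fixed across the repetition, the bookkeeping you describe in (ii)--(iv) cannot be closed.
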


\begin{proof}%
    Let~$\lambda \in (0,1]$ and~$\A$ be a~$(1+\lambda)$-consistent, \memoryless algorithm for the~$2$-server problem on the line. This means for every instance~$I$,~$\A(I) \leq (1+\lambda) \cdot \opt(I) + \nu$ if~$\eta = 0$, where $\nu$ depends on the initial configuration. Let~$a, b$ and~$c$ be consecutive points on the line at position $-1$, $0$ and $L\geq 1+1/\lambda$, and~$(a,b)$ the algorithm's initial configuration. 
 
    Consider the instance~$I^{\infty}$ which is composed of a force to $(a,c)$, followed by arbitrarily many alternating requests at~$b$ and~$a$. Clearly, an optimal solution for instance~$I^{\infty}$ is to move the right server to~$c$ and then immediately back to~$b$ with a total cost of~$2L$.
    
    Assume that~$\A$ gets this optimal solution as prediction.~$\A$ moves one server to~$c$ for the first request. Since the consistency implies that~$\A(I^{\infty}) \leq (1+\lambda) \opt$, at some point in time~$\A$ has to move the right server to $b$. Denote the instance which ends at this point in time by~$I$. Note that~$\A(I^{\infty}) \geq \A(I)$. Let~$n_L$ denote the number of times in instance~$I$ where the left server moves from~$a$ to~$b$ and back to~$a$ (cost of 2). Since the right server pays at least~$L$ for moving from~$c$ to~$b$, we conclude~$\A(I) \geq 2 n_L + 2 L$. The consistency of $\A$ leads to $2 n_L + {2}L \leq (1 + \lambda)2 L+\nu$, which means $n_L\leq \lambda L + \nu / 2$. %

    We now construct another instance~$I^\omega$ by concatenating~$\omega$ copies of instance~$I$, each starting by the force to $(a,c)$. We call such a copy an~\emph{iteration}, and in each iteration we use the same predictions as in instance~$I$. $\A$ has to pay at least $L$ for the force, as the right server was previously on $b$, and then $\A$ follows the same behavior as in $I$ in each iteration. So $ \A(I^\omega) \geq \omega \cdot (2n_L + 2 L)$. 
    Another solution for instance~$I^\omega$ is to move the right server to~$c$ in the beginning with cost~$L$  and leave it there, while the left server alternates between~$a$ and~$b$. Hence,~$\opt(I^\omega) \leq L + \omega \cdot 2(n_L+1)$. Indeed, $b$ is requested $n_L+1$ times per iteration: $n_L$ where $\A$ uses the left server and one where it uses the right server. The ratio is then   
   \[
          \frac{\A(I^\omega)}{\opt(I^\omega)} \geq \frac{\omega \cdot (2n_L + 2 L)}{L + \omega \cdot 2(n_L+1)} \xrightarrow{\omega \to \infty} \frac{2n_L + 2 L}{2(n_L+1)} = 1 + \frac{L-1}{n_L+1} \geq 1+\frac{L-1}{\lambda L + \frac{\nu}{2} + 1} \xrightarrow{L \to \infty} 1+\frac 1\lambda\,,
    \]
    which implies that~$\A$ is at least~$(1+ 1 / \lambda)$-robust.
\end{proof}

\section{The General Case with $k$ Servers: Upper Bound}\label{sec:k-server}

We present two lemmas which imply%
~\Cref{thm:lambdaDC-competitive-ratio}. The novelty lies in designing appropriate potential functions that capture the server movements at different speeds. %
This takes substantially more technical care than in the $2$-server case but builds on the same ideas.

In the first step of the analysis, we compare the performance of \ldc and~\ftp.

\begin{restatable}{lemma}{LemGenConsistency}\label{lemma:general-consistency-upper-bound}
    For every instance~$I$ and~$\lambda \in [0,1]$, there is some~$c > 0$ that only depends on the initial configuration such that~$\alg(I) \leq \cons(k) \cdot \ftp(I) + c$.
\end{restatable}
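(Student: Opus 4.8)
The plan is to mimic the structure of the two-server argument in \Cref{lemma:k2-alg-to-prd}, but now with a potential function that must track $k$ servers of \ldc against the $k$ servers of \ftp, while respecting the fact that, for a request falling between two servers, the server closer to the predicted one moves at speed $1$ and the farther one at speed $\lambda$. Write $s_1 \le \dots \le s_k$ for \ldc's servers and $x_1 \le \dots \le x_k$ for \ftp's servers. I would use a potential of the form
\[
    \Phi \;=\; \underbrace{\sum_{i=1}^k w_i\,\abs{s_i - x_i}}_{\Psi \text{ (matching part)}} \;+\; \underbrace{\sum_{i=1}^{k-1} \theta_i\,\abs{s_i - s_{i+1}}}_{\Theta \text{ (spreadness part)}},
\]
where the coefficients $w_i$ and $\theta_i$ are chosen to make the telescoping work out to the factor $\cons(k)$. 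As in the classical \dc analysis and the $k=2$ case, one symmetrizes: $w_i$ should be largest in the middle and decay geometrically (in powers of $\lambda$) towards both ends — roughly $w_i \asymp \lambda^{-\min(i-1,\,k-i)}\cdot(\text{const})$ — and the spreadness coefficients $\theta_i$ likewise decay from the center, which is exactly where the asymmetry between odd and even $k$ in $\cons(k)$ comes from. The case $\lambda = 0$ is handled separately and trivially, since \ldc is then just \ftp with shortcuts, so $\alg(I) \le \ftp(I)$; assume $\lambda > 0$ henceforth.

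The argument proceeds request by request, using interleaving moves: first \ftp serves $r_t$, then \ldc. For the \ftp move, only the matching part $\Psi$ can increase, and since \ldc has not moved, $\abs{s_i - x_i}$ grows by at most the distance \ftp's server travels; I would need $\max_i w_i \le \cons(k)$ so that $\Delta\Phi \le \cons(k)\cdot\Delta\ftp_t$, which pins down the overall normalization of the $w_i$. For the \ldc move, scale time so the request is served in one time unit; split into cases. (a) If $r_t$ is outside $[s_1,s_k]$, only the extreme server moves by $1$, decreasing $\Psi$ by its (end) coefficient — the smallest $w$, which should equal $\theta_1$ in value — and increasing $\Theta$ by $\theta_1$ times that step, and one checks $\Delta\Phi \le -\Delta\alg_t$. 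Wait: actually the end server's coefficient must be chosen so the net is $\le -\Delta\alg_t$; this forces the normalization at the ends. (b) If $s_i < r_t < s_{i+1}$, both $s_i$ and $s_{i+1}$ move, the predicted side at speed $1$ and the other at speed $\lambda$, so $\Delta\alg_t = 1+\lambda$. Here the key point, exactly as in case (b) of \Cref{lemma:k2-alg-to-prd}, is that \ftp's corresponding server already covers $r_t$, so one of $\abs{s_i - x_i}$, $\abs{s_{i+1} - x_{i+1}}$ decreases at the fast rate and the other increases at the slow rate; combined with the change of $\pm(1+\lambda)\theta_i$-type terms in $\Theta$, the telescoping collapse of the geometric coefficients yields $\Delta\Phi \le -\Delta\alg_t$. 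One must verify this in the sub-cases where the predicted server is $s_i$ versus $s_{i+1}$ and where $i$ lies in the increasing versus the decreasing half of the coefficient profile — the middle index is the delicate spot because the two halves meet there, and this is where the even/odd distinction in $\cons(k)$ is consumed.

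Summing $\Delta\Phi$ over all requests and telescoping gives $\alg(I) \le \cons(k)\cdot\ftp(I) + \Phi_0$, with $\Phi_0 = c$ depending only on $C_0$ (the initial matching term plus initial spreadness), which is the claim. The main obstacle I anticipate is purely in the coefficient bookkeeping: finding weights $w_i$ and $\theta_i$ that simultaneously satisfy (i) $\max_i w_i \le \cons(k)$, (ii) the outside-request inequality, and (iii) the between-servers inequality for every $i$ and every placement of the predicted server, with the two geometric "arms" of the profile matched correctly at the center for both parities of $k$. This is the real content of the proof; everything else is a direct generalization of the $k=2$ potential argument. I would guess the cleanest route is to guess the $w_i$ from the $\cons(k)$ formula (partial sums of the geometric series $1,2\lambda,2\lambda^2,\dots$ read from the center outward) and let the inequalities dictate the $\theta_i$, then verify the three families of inequalities by the same kind of "$\tfrac{1+\lambda}{\lambda}(-1+\lambda) \le -\tfrac1\lambda$"-style cancellations seen in the $k=2$ case.
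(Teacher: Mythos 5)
Your high-level plan is the same as the paper's: an interleaving-moves potential with a matching part against \ftp and a spreadness part among \ldc's own servers, the observation that the \ftp server on the predicted side already covers the request (so the fast-moving server's matching term decreases by the full fast step while the slow side increases by at most $\lambda$), and a telescoping sum giving $\alg(I) \le \cons(k)\cdot\ftp(I) + \Phi_0$. However, there is a genuine gap: everything that makes this a proof — exhibiting coefficients and verifying the case inequalities for every $i$, both placements of the predicted server, and both parities of $k$ — is exactly what you defer ("the coefficient bookkeeping"), and you acknowledge that this is the real content. Without a concrete choice of weights and the accompanying verification (in the paper this is carried by \Cref{lemma:consistency-theta-change} together with the identity $\cons(k)/\lambda = \cons(k-2) + 1 + 1/\lambda$ of \Cref{claim:consistency-technical-eqs}), the sketch does not establish the lemma.

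Moreover, the specific structure you guess points in a different direction from the one known to work. The paper's consistency potential uses a \emph{uniform} matching coefficient $\cons(k)/\lambda$ on $\sum_i \abs{s_i - x_i'}$ — server-dependent matching weights $\omega_i$ appear only in the robustness proof — whereas you posit weights $w_i$ peaked in the middle and decaying geometrically toward the ends. Note that your own constraints already push against this: the outside-request case forces the \emph{end} weights to satisfy $w_1 \ge 1 + \theta_1$ (and symmetrically at $s_k$), and in the normalization where the decrease per unit of $\Delta\alg$ is $1$, the paper's working solution has all matching weights equal to $\cons(k)$, i.e.\ the ends are as large as anything else, with the end inequality holding with equality. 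Similarly, the paper's spreadness part runs over \emph{all} pairs with weights $\lambda^{\min\{j-i,\,k-(j-i)\}-1}$, which are largest ($=1$) for adjacent pairs and for the extreme pair $(1,k)$; this is re-expressible as a weighted sum over consecutive gaps, so your ansatz is general enough in form, but the gap weights it induces are not the profile you describe. So while your framework could in principle accommodate a correct choice, the choice you propose is unverified and at odds with the one the paper proves works; as it stands the argument is a roadmap, not a proof.
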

Let~$I$ be an arbitrary instance. Note that~$\lambda = 0$ implies~$\alg(I) \leq \ftp(I)$ as \ldc can only shortcut \ftp's moves. So, we now assume that~$\lambda \in (0,1]$.
We define a new potential function~$\Phi$ as follows. 
Let~$s_1, \ldots, s_k$ be the servers of \ldc and let~$x_1', \ldots, x_k'$ be the servers of \ftp. 
For~$1 \leq i < j \leq k$ and~$\ell = \min\{j-i, k-(j-i)\}-1$ we define~$\delta_{ij} = \lambda^{\ell }$, see \Cref{fig:weights}.~Then,
\[
    \Phi =  \underbrace{\frac{\cons(k)}{\lambda} \cdot\sum_{i=1}^k \abs{s_i - x_i'}}_\Psi  + \underbrace{\sum_{i < j} \delta_{ij} \abs{s_i - s_j}.}_\Theta
\]
\begin{figure}[tb]
	\centering
	\begin{adjustbox}{width=0.65\linewidth}
		\begin{tikzpicture}[scale=0.75]
			\node[circle, draw, thick, minimum size=8mm, inner sep=1pt, outer sep=0pt] (s1) at (0,0) {$s_1$};
			\node[circle, draw, thick, minimum size=8mm, inner sep=1pt, outer sep=0pt] (s2) at (2.5,0) {$s_2$};
			\node[circle, draw, thick, minimum size=8mm, inner sep=1pt, outer sep=0pt] (s3) at (5,0) {$s_3$};
			\node[circle, draw, thick, minimum size=8mm, inner sep=1pt, outer sep=0pt] (s4) at (7.5,0) {$s_4$};
			\node at (10,0) {$\cdots$};
			\node[circle, draw, thick, minimum size=8mm, inner sep=1pt, outer sep=0pt] (sk-2) at (12.5,0) {$s_{k-2}$};
			\node[circle, draw, thick, minimum size=8mm, inner sep=1pt, outer sep=0pt] (sk-1) at (15,0) {$s_{k-1}$};
			\node[circle, draw, thick, minimum size=8mm, inner sep=1pt, outer sep=0pt] (sk) at (17.5,0) {$s_k$};
		
			\newcommand{\bendparam}{40}
			
			\draw[line width=3.3pt] (s1) edge node[above, pos=0.5, xshift=3pt] {$\ell = 0$} (s2);
			\draw[line width=2.6pt, algBlue] (s1) edge[bend left=\bendparam] node[above, pos=0.9, xshift=8pt] {$\ell = 1$} (s3);
			\draw[line width=1.9pt, algOrange] (s1) edge[bend left=\bendparam] node[above, pos=0.9, xshift=8pt] {$\ell = 2$} (s4);
			\draw[line width=1.9pt, algOrange] (s1) edge[bend left=\bendparam] node[above, pos=0.95, xshift=8pt] {$\ell = 2$} (sk-2);
			\draw[line width=2.6pt, algBlue] (s1) edge[bend left=\bendparam] node[above, pos=0.96, xshift=9pt] {$\ell = 1$} (sk-1);
			\draw[line width=3.3pt] (s1) edge[bend left=\bendparam] node[above, pos=0.97, xshift=10pt] {$\ell = 0$} (sk);
		
			\draw[line width=3.3pt] (s2) edge node[below, pos=0.5, xshift=3pt] {$\ell = 0$} (s3);
			\draw[line width=2.6pt, algBlue] (s2) edge[bend right=\bendparam] node[below, pos=0.9, xshift=8pt] {$\ell = 1$} (s4);
			\draw[line width=1.2pt, algGreen] (s2) edge[bend right=\bendparam] node[below, pos=0.95, xshift=8pt] {$\ell = 3$} (sk-2);
			\draw[line width=1.9pt, algOrange] (s2) edge[bend right=\bendparam] node[below, pos=0.96, xshift=8pt] {$\ell = 2$} (sk-1);
			\draw[line width=2.6pt, algBlue] (s2) edge[bend right=\bendparam] node[below, pos=0.97, xshift=8pt] {$\ell = 1$} (sk);
		\end{tikzpicture}
	\end{adjustbox}
	\caption{Visualization of all incident~$\delta_{ij}$-weights of the servers~$s_1$ and~$s_2$. The thickness (resp.\ color) of an arc indicates the influence of the corresponding distance in~$\Phi$.}\label{fig:weights}
\end{figure}
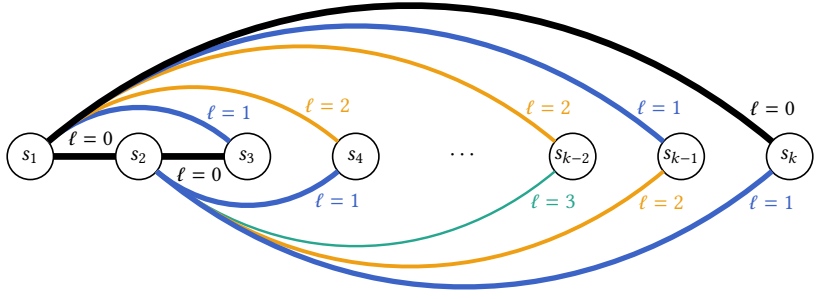  %
{ Intuitively,} the leading coefficient of $\Psi$ comes from the targeted competitive ratio. Then, in $\Theta$, the coefficient in front of each term depends on the number of interleaving servers. Following the idea of \Cref{lem:LBconsDC}, when \ldc moves a server by a distance of 1 as in \opt, its neighbor moves by a distance of $\lambda$. Hence, correcting the position of this neighbor means that the next server moves by a distance $\lambda^2$. Therefore, this geometric decrease in the consequences of a movement also appears in the expression of $\Theta$. The symmetric increase when $j-i$ grows is more difficult to explain intuitively, but is required to compensate the modifications of $\Psi$. {The coefficients of $\Theta$ are illustrated in \Cref{fig:weights}.}

We carefully analyze in~{\Cref{app:upper-bounds}}
how the potential %
changes when \ftp\ and \ldc move servers.
Further, we give a robustness guarantee for \ldc for any %
error.
\begin{restatable}{lemma}{LemGenRobustness}\label{lemma:rob}
	For any instance~$I$ and~$\lambda \in (0,1]$, there is some~$c \geq 0$ that only depends on the initial configuration such that~$\alg(I) \leq \rob(k) \cdot \opt(I) + c$.
\end{restatable}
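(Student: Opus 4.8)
The plan is to mirror the $k=2$ robustness argument (\Cref{lem:k2-robustness}) and the structure of \Cref{lemma:general-consistency-upper-bound}, using the same interleaving-moves technique but with a reference algorithm $\opt$ whose moves are uncontrolled. I would let $s_1,\dots,s_k$ be the servers of \ldc and $x_1,\dots,x_k$ the servers of an optimal schedule (uncrossed, so $x_1\le\cdots\le x_k$), and define a potential
\[
    \Phi ~=~ \underbrace{\rob(k)\cdot\sum_{i=1}^k \abs{s_i-x_i}}_{\Psi} ~+~ \underbrace{\sum_{i<j}\delta_{ij}\abs{s_i-s_j}}_{\Theta},
\]
with the \emph{same} spreadness coefficients $\delta_{ij}=\lambda^{\min\{j-i,\,k-(j-i)\}-1}$ as in the consistency proof, but with the matching part scaled by $\rob(k)=\sum_{i=0}^{k-1}\lambda^{-i}$ instead of $\cons(k)/\lambda$. (For $k=2$ this recovers $\Phi=(1+\lambda)(\abs{s_1-x_1}+\abs{s_2-x_2})+\abs{s_1-s_2}$, matching \Cref{lem:k2-robustness}, since $\rob(2)=1+1/\lambda$ and the intended multiplier there would need rescaling — I would fix the exact constant during the calculation; the key point is that the scaling factor on $\Psi$ must be chosen so that the \opt-move step and the \ldc-move step both go through.) The target is to show, for each request, that (i) \opt's move increases $\Phi$ by at most $\rob(k)\cdot\Delta\opt$, and (ii) \ldc's move decreases $\Phi$ by at least $\Delta\alg$; telescoping then yields $\alg(I)\le\rob(k)\cdot\opt(I)+\Phi_0$ with $\Phi_0$ depending only on $C_0$.

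Step~(i) is routine: when \opt moves a single server $x_i$ by distance $\Delta\opt$, only $\Psi$ changes, by at most $\rob(k)\cdot\Delta\opt$, and $\Theta$ is untouched. Step~(ii) is the substance. By scaling assume \ldc serves $r_t$ in one time unit, so the fast server moves $1$ and the slow one $\lambda$. I split into the boundary case ($r_t<s_1$ or $r_t>s_k$), where only the extreme server moves distance $1$, and the interior case $s_i<r_t<s_{i+1}$, where $s_i,s_{i+1}$ move by $1$ and $\lambda$ (in the order dictated by which side the predicted server lies on), with $\Delta\alg=1+\lambda$. In each case I analyze the change in $\Psi$ server-by-server: since $r_t\in C_t^{\opt}$, some $x_m$ sits at $r_t$, and the \ldc server approaching $r_t$ from the correct side moves toward its matched partner region, giving a guaranteed decrease; the moving servers' contributions to $\Psi$ change by at most their travel distances. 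For $\Theta$, the moving pair's mutual distance shrinks by $1+\lambda$ (weighted by $\delta_{i,i+1}=\lambda^0=1$) in the interior case, while each distance to a non-moving server $s_m$ changes by at most the travel distance of the moving endpoint, weighted by $\delta$; the geometric structure of the $\delta_{ij}$ is exactly what makes these weighted increments telescope/bound correctly, as in \Cref{lemma:general-consistency-upper-bound}.

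The main obstacle, as in the consistency lemma, is the bookkeeping of $\Theta$'s change: one must show that the worst-case increase in $\sum_{i<j}\delta_{ij}\abs{s_i-s_j}$ caused by moving two adjacent servers toward each other (which can increase some distances to far-away servers on one side while decreasing others) is dominated by the decrease of $\Psi$ plus the $-(1+\lambda)$ from the moving pair, all calibrated against $\Delta\alg$. I expect the worst case to be when the predicted server $x_m$ for $r_t$ is matched to a \ldc server far on the opposite side of $r_t$ from the fast-moving $s_i$, so that $\Psi$ barely decreases (or even increases on that coordinate) and one must lean on the $\delta_{ij}$-weighted collapse of $\Theta$; verifying the inequality there reduces to a finite identity in the partial sums $\sum_{\ell} 2\lambda^\ell$ defining $\cons(k)$ and $\rob(k)$, analogous to the computations behind \Cref{lemma:consistencies}. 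I would defer the full case analysis to an appendix (as the paper does for the consistency bound in \Cref{app:upper-bounds}), presenting here only the potential, the two target inequalities, and the reduction to the boundary and interior cases.
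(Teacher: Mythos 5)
Your high-level framework (interleaving moves, a matching part plus a spreadness part, bounding \opt's move by the matching coefficient and forcing \ldc's move to decrease the potential) is the same as the paper's, but the concrete potential you propose --- a \emph{uniform} multiplier on $\sum_i\abs{s_i-x_i}$ together with the consistency weights $\delta_{ij}=\lambda^{\min\{j-i,\,k-(j-i)\}-1}$ --- provably cannot work, and the failure is exactly at the case you flag as the ``main obstacle'' but leave unresolved. In the consistency proof the reference algorithm is \ftp, so the reference server sitting on the request is by definition the predicted one; the fast \ldc server therefore always moves towards its matched reference server and the matching part supplies a decrease. In the robustness proof \opt may serve from the side \emph{opposite} to the prediction: if $s_i<r_t<s_{i+1}$, $s_i$ is predicted (speed $1$) but \opt serves with $x_j$, $j\ge i+1$, then in the sorted matching one can have $x_i\le s_i$ and $x_{i+1}\le r_t$, so with a uniform weight $w$ the matching part changes by $+w(1-\lambda)$, an \emph{increase}, and all the help must come from $\Theta$. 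Already for $k=4$ and the middle pair this kills the plan: your weights give $\Delta\Theta=-\lambda(1+\lambda)$ there, so you would need $w(1-\lambda)\le(\lambda-c)(1+\lambda)$, where $c>0$ is the per-unit decrease guaranteed for \ldc's moves; the request-outside-the-hull case forces $w\ge c+1+\cons(2)=c+2+\lambda$; and the \opt-move step forces $w\le \rob(4)\,c$. A short computation shows these three constraints are incompatible for every $\lambda<1$ (for $\lambda\le 0.6$ the first two alone already force $c\le\lambda+\lambda^2-1<0$). More generally, the $\Theta$-decrease available at an interior pair shrinks geometrically with its depth, while the worst-case matching increase $w(1-\lambda)$ does not, so no single scaling of $\Psi$ can absorb it --- your hope that ``the geometric structure of the $\delta_{ij}$ is exactly what makes these increments telescope as in the consistency lemma'' fails precisely because the alignment between prediction and reference server is lost.

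This is why the paper's robustness proof does not reuse the consistency potential: it introduces \emph{server-dependent} matching weights $\omega_i$ (defined via the quantities $d_i$ and a calibration constant $\gamma=d_1/(\rob-1)$, with $\omega_1=\omega_k=1$), and replaces $\lambda^{\ell}$ by the symmetrized spreadness weights $(\lambda^{\ell}+\lambda^{k-2-\ell})/(1+\lambda^{k-2})$, which are what make the $\omega_i$ well-defined from both sides. The case analysis then rests on identities such as $(1+\lambda)\gamma=\lambda^{i+1}d_i-\lambda^{i}d_{i+1}$, an alternative representation of the $\omega_i$, the bounds $0\le\omega_i\le1$ (needed both for $\Phi\ge0$ and to bound \opt's move independently of which server it uses), and separate cases for the middle pair when $k$ is even. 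Your proposal contains none of this machinery and no substitute for it, so as written it stalls exactly where the real work of \Cref{lemma:rob} begins.
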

Proving the general upper bound on the competitive ratio, independent of the prediction error, is much more intricate than in the two-server case and than the consistency proof. Again, our key ingredient is a carefully chosen  potential function~$\Phi$. We generalize the function used for the consistency bound even further by refining the weights, in particular, adding server-dependent weights to the term $\Psi$ measuring the distance between the positions of the algorithm's servers and the optimal servers.

Let~$\lambda \in (0,1]$. Fix~$k$, let~$\beta = \beta(k)=\sum_{i=0}^{k-1}\lambda^{-i}$, and let~$s_1, \ldots, s_k$ be the servers of \ldc and let~$x_1, \ldots, x_k$ be the servers of an optimal solution. The potential function~is
\[
	\Phi = \underbrace{\rob \gamma \left(\sum_{i=1}^k \omega_i \abs{s_i - x_i} \right)}_{\Psi} + \underbrace{\sum_{i<j} \delta_{ij} \abs{s_i - s_j}}_{\Theta}.
\]
We specify the weights in this function as follows. For a pair of servers~$s_i,s_j$ with~$1 \leq i < j \leq k$, let~$\ell = \min\{j-i, k-(j-i)\}-1$ and
$\displaystyle 
	\delta_{ij} %
= (\lambda^\ell + \lambda^{k - 2 -\ell})/(1 + \lambda^{k-2}).
$

The intuition of the weights in the spreadness part $\Theta$ is the same as %
in the consistency potential function above. However, the new %
weights~$\omega_i$ in the matching part $\Psi$ (defined below) require %
the more complex weights $\delta_{ij}$ compared to the simpler~$\lambda^{\ell}$ weights. %

{Further, we define $d_{\ceil{k/2}} = 0$ if $k$ is odd and for all~$1 \leq i \leq \floor{k/2}$ let}
\[
	d_i = d_{k+1-i} = \frac{2}{1 + \lambda^{k-2}} \sum_{\ell=i-1}^{k-1-i} \lambda^\ell.
\]
{We demonstrate in the appendix %
that these values correspond to the change of~$\Theta$ when a server of \ldc moves.}
Let~$\gamma = d_1 / (\beta - 1)$,~$\omega_1 = \omega_k = 1$ and for~$2 \leq i \leq \ceil{k/2}$ we define the server-individual weights
	\begin{align*}
		\omega_i = {\omega_{k+1-i}} = \begin{dcases}
			\frac{2 \lambda \sum_{j=1}^{i/2-1} d_{2j} - 2 \sum_{j=1}^{i/2-1} d_{2j+1} + \lambda d_i + (2 + \lambda) \gamma}{\rob \gamma \lambda} & \text{ if } i \text{ is even, and} \\
			\frac{2 \lambda \sum_{j=1}^{(i-1)/2} d_{2j} - 2 \sum_{j=1}^{(i-3)/2} d_{2j+1} - d_i + \gamma}{\rob \gamma}                           & \text{ if } i \text{ is odd.}      \\
		\end{dcases}
	\end{align*}

We finally prove~\Cref{lemma:rob} in~\cref{app:upper-bounds} by exhaustively reviewing all possible moves and bounding the corresponding change of~$\Phi$. Establishing a constant upper bound of the~$\omega$-weights yields a general upper bound on the increase of~$\Phi$ independently of the choice of the optimal solution's server. 
We further choose the scaling parameter~$\gamma$ such that the decrease of~$\Phi$ exactly matches the required lower bound for the case where the request is outside of the convex hull of \ldc's servers.
The remaining cases are split among the possible locations where a request can appear between two servers of \ldc, and we show in each case that~$\Phi$ decreases enough. Intuitively, {the~$\omega$ values are defined} such that a wrong prediction gives a tight bound on the decrease of~$\Phi$ for \ldc's move, while a correct prediction still guarantees a loose bound.

\section{The Consistency-Robustness Tradeoff}\label{sec:tradeoff} 

In this section we give a bound on the consistency-robustness tradeoff, as stated in \Cref{thm:pareto-k}. Our bound holds for \memoryless algorithms that satisfy a certain locality property, which includes \ldc. Informally, we require that a $k$-server algorithm with a certain consistency~$\mu(k)$ shall have a consistency~$\mu(k')$ on a sub-instance that it serves with $k'<k$ servers. The rationale is to prevent the mere presence of additional unused workers to allow the algorithm to perform poorly on a subinstance served by few servers, as $\mu(k')<\mu(k)$. Hence, such algorithms are expected to present a better performance on a modified instance where some extreme servers are removed and side-effects due to their presence are simulated.
In the following, we make this intuition precise and sketch our worst-case~construction.

Given an algorithm $\A$ which is $\mu(k)$-consistent for the $k$-server problem, we define the notion of \defn{\local}. Given an instance of the $k$-server problem served by algorithm $\A$, consider any subset~$S'$ of $k'$ consecutive servers. We construct an instance $I'$ of the $k'$-server problem based on $I$ and $S'$:  If a request of $I$ is predicted
to be served by a server in $S'$ then this request is replicated in $I'$.
Otherwise, $I'$ requests the position of the closest server among $S'$ after
$\A$ served this request in $I$ (in order to take into account side-effects due to additional servers in the original instance). Let~$\ftp(I')$ be the cost of solving~$I'$
following the original predictions of $I'$ (using the closest server among $S'$ if a server outside of $S'$ was initially predicted). An algorithm is \emph{\local} if its total cost on $I$ restricted to the servers in $S'$ is at most $\mu(k')\cdot \ftp(I') + c$, where $c$ {can be upper bounded} based only on the initial configuration. We further require that if the initial and final configurations differ by a total distance of $\varepsilon$, then $c=O(k'\varepsilon)$.
Note that $\ldc$ is \local as its behavior in $I$ restricted to the servers in
$S'$ is equal to its behavior in~$I'$ with $k'$ servers.

The proof of \Cref{thm:pareto-k} generalizes ideas from the $2$-server case (\Cref{sec:k2-tradeoff}) in a highly non-trivial way. We only sketch the main idea and refer to \Cref{app:tradeoff} for details.
Let \A be a \memoryless and \local deterministic algorithm. We construct an instance that %
starts with $k$ equidistant servers. First, a point far on the right is requested. Then the initial server
locations are requested following specific rules until the rightmost server
comes back. Predictions correspond to the server initially at the point
requested. The consistency of \A limits the possible cost paid before 
the rightmost server comes back.
The \local definition allows, with technical care, to link the distance
traveled by two neighboring servers: the left one travels a total distance at most
$\lambda$ times the right one {(plus negligible terms)}. An offline solution can afford to initially shift
all servers to the right, and then move only the leftmost server, which \A could
not move much. We then repeat this instance, and use the
\memoryless and deterministic characteristics of~\A to eliminate constant costs
and show the desired robustness lower bound, again with technical care.

\section{Experiments}\label{sec:experiments}
\begin{figure}[tb]
    \begin{subfigure}[t]{0.47\textwidth}
        \includegraphics[width=\textwidth]{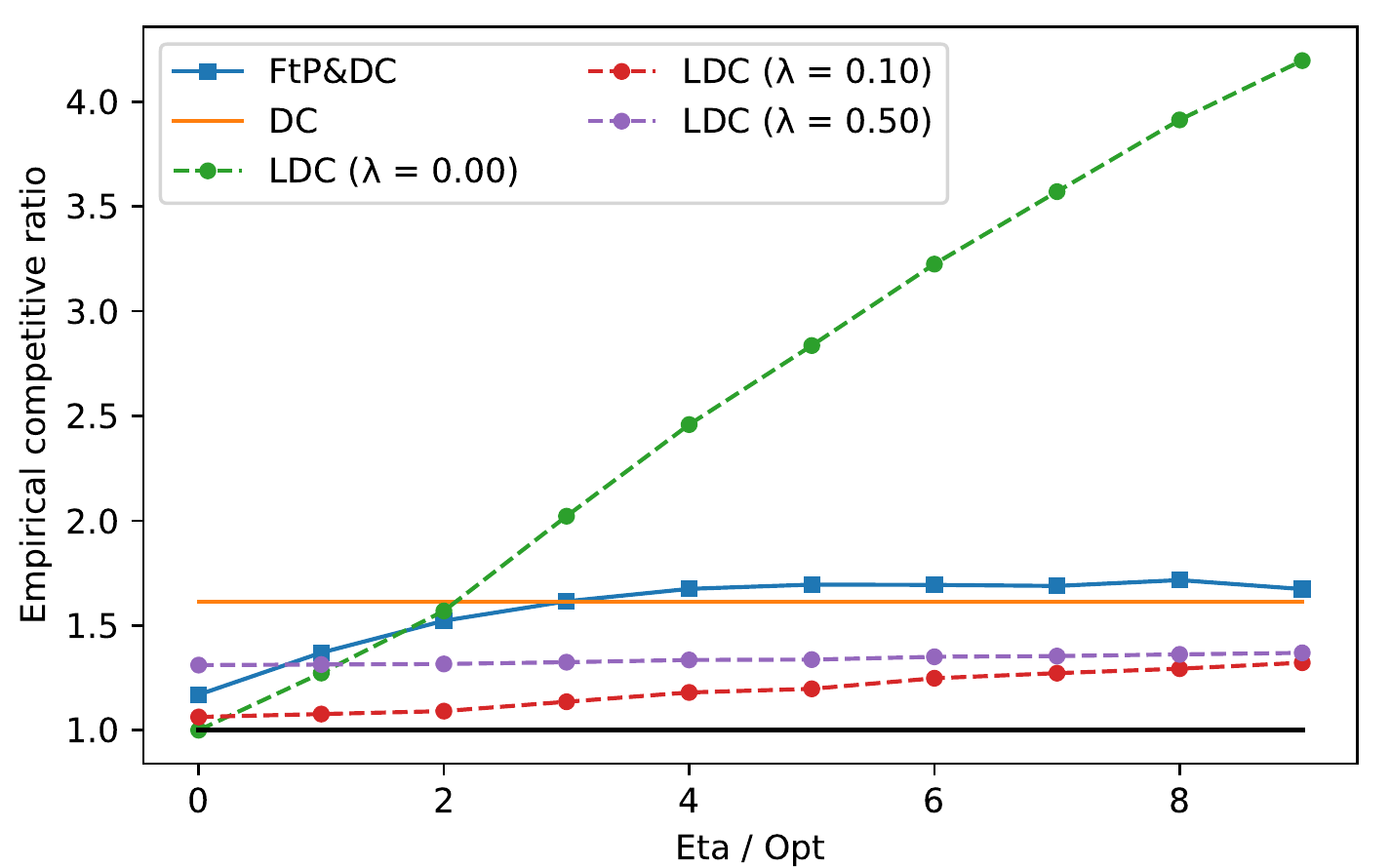}
        \caption{Results for $k=2$.}
    \end{subfigure}\hfill
    \begin{subfigure}[t]{0.47\textwidth}
        \includegraphics[width=\textwidth]{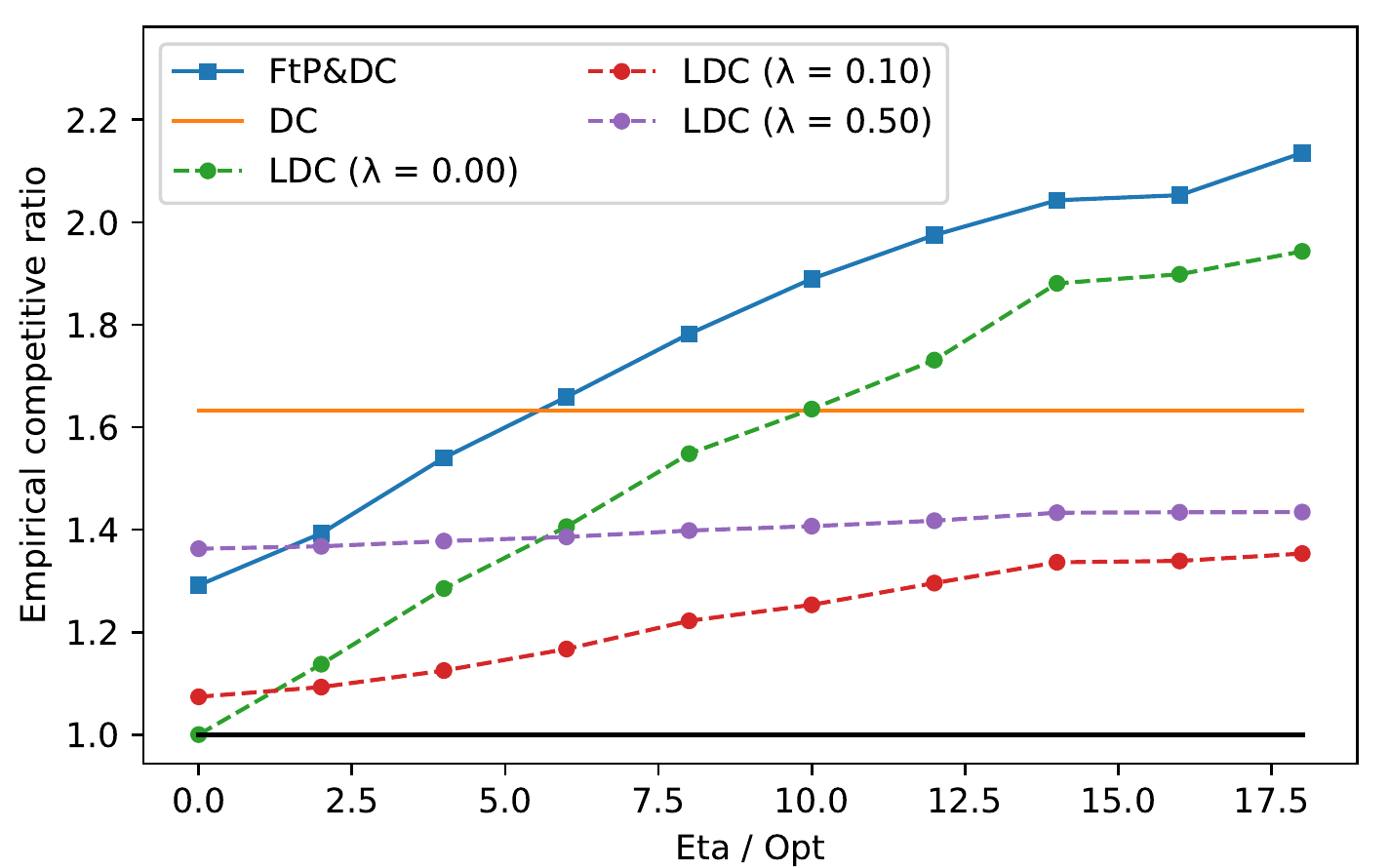}
        \caption{Results for $k=10$.}
    \end{subfigure}
    \caption{Non-lazy algorithms: means over all empirical competitive ratios per prediction quality.}%
    \label{fig:experimental-results}
\end{figure}

We supplement our theoretical results by empirically comparing our learning-augmented algorithm \ldc with the classical online algorithm ignoring predictions \dc~\cite{chrobak1991dc} and the previously proposed prediction-based algorithm 
	 \ftpdc~\cite{antoniadis2020mts} on real world data. %
We generate instances with~$1000$ requests 
based on the BrightKite-Dataset~\cite{cho2011friendship}, which is composed of sequences of %
coordinates of app check-ins. %
This dataset was used previously %
to evaluate and compare learning-augmented algorithms for caching problems~\cite{antoniadis2020mts,lykouris2018caching}.
We further generate predictions in a semi-random fashion aiming for large and evenly distributed prediction errors.
All algorithms are implemented in lazy and non-lazy variants.

The results for non-lazy implementations are displayed in~\Cref{fig:experimental-results}.
They show well that, for a reasonable choice of~$\lambda$ ($0.1 \leq \lambda \leq 0.5$), \ldc outperforms both \dc and \ftpdc for almost all generated relative prediction errors. This is true even if laziness is allowed as we show in~\Cref{app:experiments}.
We give also more details on the generation of instances and predictions, as well as an overview over all results.

\section{PAC Learnability of Predictions}\label{sec:learnability}
While our results show the applicability of untrusted predictions, it is a natural question whether such predictions are actually learnable. 

In \Cref{app:learnability}, we show that for our model a {\em static} prediction sequence is PAC learnable in an agnostic sense using empirical risk minimization. 
That is, given an unknown distribution over request sequences which we can sample, we can find a prediction that is close to the best possible prediction for this distribution in terms of prediction error using a bounded number of samples.

\begin{restatable}{theorem}{ThmLearnability}\label{thm:learnability}
			For any~$\epsilon, \delta \in (0,1)$, a known initial configuration~$C_0$ and any distribution~$\D$ over the sequences of~$n$ requests of known extent, there exists an algorithm which, given an i.i.d. sample of~$\D$ of size~$m \in \bigO \left(\frac{1}{\epsilon^2}\cdot {(n \log k - \log \delta)\eta_{\max}^2}\right)$, returns a prediction~$\pred_p \in \Hyp$ in polynomial time depending on~$k$,~$n$ and~$m$, such that with probability of at least~$(1-\delta)$ it holds~$\E_{\seq \sim \D}[\eta_\seq(\pred_p)] \leq \E_{\seq \sim \D}[\eta_{\seq}(\pred^*)] + \epsilon$, where~$\pred^* = \arg \min_{\pred \in \Hyp} \E_{\seq \sim \D}[\eta_\seq(\pred)]$.
\end{restatable}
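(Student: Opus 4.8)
The plan is to cast the learning problem as agnostic PAC learning of a real-valued loss and invoke uniform convergence via a bounded pseudodimension (or covering-number) argument, combined with empirical risk minimization (ERM). The hypothesis class $\Hyp$ is the set of all static prediction sequences, i.e.\ $\Hyp = \{1,\dots,k\}^n$, so $|\Hyp| = k^n$ is finite. The loss of a hypothesis $\tau \in \Hyp$ on a request sequence $\seq$ is $\eta_\seq(\tau) = \ftp_\tau(\seq) - \opt(\seq)$, which by assumption (bounded extent of requests, fixed initial configuration) lies in $[0,\eta_{\max}]$ for some known finite $\eta_{\max}$. First I would argue that this loss function is well-defined and bounded: $\ftp_\tau$ is a deterministic, efficiently computable algorithm given $\tau$ and $\seq$ (serve each request with the predicted server, uncrossing as described in the paper), $\opt(\seq)$ is computable in polynomial time by the standard min-cost-flow / dynamic-programming formulation of offline $k$-server on the line, and both are bounded by a quantity proportional to $n$ times the diameter of the request region; this yields the uniform bound $\eta_{\max}$.

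Second, I would apply the finite-class agnostic PAC bound (Hoeffding's inequality plus a union bound over $\Hyp$): for a sample $S$ of size $m$ drawn i.i.d.\ from $\D$, with probability at least $1-\delta$, every $\tau \in \Hyp$ satisfies $\bigl| \widehat{\E}_S[\eta_\seq(\tau)] - \E_{\seq\sim\D}[\eta_\seq(\tau)] \bigr| \le \epsilon/2$, provided
\[
    m \;\ge\; \frac{2\eta_{\max}^2}{\epsilon^2}\bigl( \ln |\Hyp| + \ln (2/\delta) \bigr) \;=\; \frac{2\eta_{\max}^2}{\epsilon^2}\bigl( n \ln k + \ln (2/\delta) \bigr),
\]
which matches the claimed $m \in \bigO\!\bigl(\tfrac{1}{\epsilon^2}(n\log k - \log\delta)\eta_{\max}^2\bigr)$. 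Given such a sample, the algorithm returns $\pred_p = \arg\min_{\tau \in \Hyp} \widehat{\E}_S[\eta_\seq(\tau)]$, the empirical risk minimizer. The standard two-sided uniform-convergence argument then gives
\[
    \E_{\seq\sim\D}[\eta_\seq(\pred_p)] \;\le\; \widehat{\E}_S[\eta_\seq(\pred_p)] + \tfrac{\epsilon}{2} \;\le\; \widehat{\E}_S[\eta_\seq(\pred^*)] + \tfrac{\epsilon}{2} \;\le\; \E_{\seq\sim\D}[\eta_\seq(\pred^*)] + \epsilon,
\]
using optimality of $\pred_p$ on the sample for the middle inequality and uniform convergence for the outer two.

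The remaining obstacle — and the part I expect to require the most care — is making the ERM step run in time polynomial in $n$, $k$, $m$ rather than searching naively over the $k^n$ candidates. I would exploit the fact that the empirical objective $\sum_{\seq \in S} \eta_\seq(\tau) = \sum_{\seq\in S}\ftp_\tau(\seq) - \sum_{\seq\in S}\opt(\seq)$ has a second term independent of $\tau$, so it suffices to minimize $\sum_{\seq\in S}\ftp_\tau(\seq)$. For a \emph{static} prediction shared across all sampled sequences, the cost $\ftp_\tau(\seq)$ decomposes request-by-request, but the move costs couple consecutive choices through server positions; I would set up a dynamic program over request index $t = 1,\dots,n$ whose state records, for each of the $m$ sampled sequences simultaneously, the current configuration of $\ftp_\tau$ on that sequence. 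The key observation that keeps this tractable is that $\ftp$'s configuration after serving $r_1,\dots,r_t$ is completely determined by $\tau_1,\dots,\tau_t$ and is a deterministic function; moreover the relevant server positions always lie among the (finitely many) requested points and initial positions, so the per-step state space is polynomially bounded, and one optimizes $\tau_t \in \{1,\dots,k\}$ greedily-with-memory via the DP. Carefully bounding this state space — or, alternatively, showing the minimizer can be found by solving $m$ coupled shortest-path instances — is the main technical work; the statistical half of the argument is essentially the textbook finite-class bound.
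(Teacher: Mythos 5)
Your statistical half is exactly the paper's argument: the class $\Hyp=\{1,\dots,k\}^n$ is finite, the loss $\eta_\seq(\tau)\in[0,\eta_{\max}]$, and Hoeffding plus a union bound gives uniform convergence at sample size $m = O\bigl(\tfrac{\eta_{\max}^2}{\epsilon^2}(n\log k+\log(1/\delta))\bigr)$, after which the standard three-inequality ERM chain yields the excess-risk bound. Likewise, dropping the $\tau$-independent term $\sum_{\seq\in S}\opt(\seq)$ and minimizing $\sum_{\seq\in S}\ftp_\tau(\seq)$ is the same first step the paper takes.

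The gap is in how you carry out this minimization in polynomial time. Your proposed dynamic program keeps as its state "the current configuration of $\ftp_\tau$ on each sampled sequence," and you justify tractability by noting that each server's position lies among the requested points and initial positions. That observation bounds the position set of a \emph{single} server polynomially, but the DP state is the \emph{joint} configuration of all $k$ servers (equivalently, for a static prediction, the tuple recording which request each server last served), and this state space has size on the order of $n^k$ — exponential in $k$, not polynomial in $k,n,m$ as the theorem requires. So the step "the per-step state space is polynomially bounded" fails as written, and your fallback ("$m$ coupled shortest-path instances") is left unresolved. The missing idea, which is the actual content of the paper's proof, is that because the prediction is static, the assignment of requests to servers is identical across all samples, so $\ftp_\tau(\seq_j)$ is a fixed linear combination (determined by $\tau$ alone, via indicators of "server $k'$ serves request $i$ right after request $\ell$") of the pairwise distances $|\seq_j(\ell)-\seq_j(i)|$ and $|C_0(k')-\seq_j(i)|$. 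Averaging over $j$ therefore turns $\tfrac1m\sum_j\ftp_\tau(\seq_j)$ into the cost of the \emph{same} schedule $\tau$ on a single offline $k$-server instance of length $n$ whose inter-request distances are $\tfrac1m\sum_j|\seq_j(\ell)-\seq_j(i)|$ (and analogously from the initial configuration); an optimal solution of that single instance is an ERM hypothesis and is computable in $O(kn^2)$ time by the min-cost-flow algorithm for offline $k$-server. Without this reduction (or some substitute that genuinely collapses the configuration DP), the polynomial-time claim of the theorem is not established.
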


We remark that a pre-computed static prediction does not include information about the partially revealed input.
Thus, this is a rather weak prediction and may not help \ldc much. 
The existence of an {\em adaptive} prediction policy which can be efficiently learned remains an open question. Such a policy would provide much more valuable information to our learning-augmented online algorithm.
\section{Conclusion} We show the power of (untrusted) predictions in designing online algorithms for the $k$-server problem on the line. Our algorithm generalizes the classical \dc algorithm~\cite{chrobak1991dc} in an intuitive way and admits a (nearly) tight error-dependent competitive analysis, based on new potential functions, and outperforms other methods from the literature. While we can show PAC learnability for static predictions, we leave open whether possibly more powerful adaptive prediction models are learnable. %

Clearly, it would be interesting to see whether our results generalize to more general metric spaces than the line. In fact, in a related version we show that our upper bounds for the $2$-server problem can be extended to tree metrics~\cite{Lindermayr2020} %
and we expect that an extension to $k$ servers is possible. %
However, for more general metrics our current approach seems not to generalize well. Further, we focused on \memoryless algorithms, leaving open a more precise quantification of the power of memory. Finally, the recent success on  randomized k-server algorithms~\cite{BubeckCLLM18} raises the question whether and how randomized algorithms can benefit from (ML) predictions.

\newpage
\bibliography{literature}

\newpage
\appendix

\section{Proofs for \Cref{sec:roadmap} }\label{app:lDCrobust}

\lemmatightness*
\begin{proof}
	We give two separate instances for consistency and robustness.
	\begin{enumerate}[(i)]
		\item
		Consider~$k$ servers initially at positions 0,~$1$,~$-1$,~$2$,~$-2$, \dots~and the request sequence of length~$k+1$ at positions~$0.5$,~$0$,~$1$,~$-1$,~$2$,~$-2$, \dots. There is a solution of cost $1$ that only moves the server that is initially at~$0$.
		
		\ldc serves the first request by moving the optimal server from $0$ to $0.5$ and additionally the one from~$1$ to~$1-\lambda/2$. With the second request, the first server is moved back to $0$, having moved a total distance of $1$, and the server from $-1$ moves to $-1+\lambda/2$. For the third request, the server from original position $1$ returns to this position, etc.
		Each server moves back to its initial position~$i$ after moving a total distance of~$\lambda^{|i|}$. Repeating this example gives the lower bound on the consistency.
		\item Consider~$k$ servers initially at positions~$\rob(i)=\sum_{i=0}^{k-1} \lambda^{-i}$, for~$i\in\{1,\ldots,k\}$, and the request sequence of length~$k+1$ at positions~$0$,~$\rob(1)$,~$\rob(2)$, \dots,~$\rob(k)$. There is a solution of cost $2$ that only moves the server that is initially at~$1$. Consider predictions corresponding always to the rightmost server at the highest position.

		\ldc serves the second request by moving both servers from 0 and~$\rob(2)$ to~$\rob(1)$ as the closest server moves by a distance of 1 and the furthest server, which is predicted, moves by a distance of~$1/\lambda$. Similarly, for each request except the last one, both servers neighboring the request end up serving the request simultaneously. So the~$i$-th server moves by a total distance of~$2/\lambda^{i-1}$.
   Repeating this example gives the lower bound on the robustness.
   \qedhere
	\end{enumerate}
\end{proof}

\section{Proofs for \Cref{sec:k-server}}\label{app:upper-bounds}

\subparagraph{The Consistency Bound}

\LemGenConsistency*

{Before proving \Cref{lemma:general-consistency-upper-bound}, we need a few preliminary results. Let~$I$ be an arbitrary instance. Note that~$\lambda = 0$ implies~$\alg(I) \leq \ftp(I)$ as \ldc can only shortcut \ftp's moves, so we now assume that~$\lambda \in (0,1]$.

\begin{observation}\label{claim:consistency-technical-eqs}
     {For every~$k>3$, we have~$\frac{\cons(k)}{\lambda} = \cons(k - 2) + \frac{1}{\lambda} + 1$.}%
\end{observation}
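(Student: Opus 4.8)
The plan is to simply unfold the closed-form definition of $\cons$ and argue by parity: since $k$ and $k-2$ have the same parity, $\cons(k)$ and $\cons(k-2)$ are given by the same branch of the definition, so the claimed identity reduces in each case to a one-line regrouping of a truncated geometric-type sum. The hypothesis $k>3$ is used only to guarantee that $\cons(k-2)$ is genuinely given by the stated formula (and in particular still carries its leading~$1$), so I would note that reduction up front.

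First I would treat $k$ odd, writing $k=2m+1$ with $m\ge 2$. By definition $\cons(k)=1+2\lambda+2\lambda^2+\cdots+2\lambda^{m}$, hence
\[
  \frac{\cons(k)}{\lambda} \;=\; \frac1\lambda + 2 + 2\lambda + \cdots + 2\lambda^{m-1}.
\]
Regrouping $2 + 2\lambda + \cdots + 2\lambda^{m-1} = 1 + \bigl(1 + 2\lambda + \cdots + 2\lambda^{m-1}\bigr) = 1 + \cons(2m-1) = 1 + \cons(k-2)$ yields exactly $\cons(k)/\lambda = \cons(k-2) + 1/\lambda + 1$.

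Then I would handle $k$ even, $k=2m$ with $m\ge 2$, where $\cons(k)=1+2\lambda+\cdots+2\lambda^{m-1}+\lambda^{m}$; dividing by $\lambda$ and peeling off the leading $1/\lambda$ leaves $2 + 2\lambda + \cdots + 2\lambda^{m-2} + \lambda^{m-1}$, and the same regrouping $2 + (\cdots) = 1 + \bigl(1 + 2\lambda + \cdots + 2\lambda^{m-2} + \lambda^{m-1}\bigr) = 1 + \cons(2m-2) = 1 + \cons(k-2)$ closes the case. There is essentially no real obstacle here; the only point needing care is the index bookkeeping — in particular, checking that after dividing $\cons(k)$ by $\lambda$ the trailing term lines up precisely with the trailing term of $\cons(k-2)$, including the special final coefficient-$1$ term $\lambda^{k/2}$ in the even case, and that $m\ge 2$ (i.e. $k>3$) is exactly what makes all these partial sums nonempty.
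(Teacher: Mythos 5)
Your proposal is correct and is essentially the paper's own argument: both split by parity (so $\cons(k)$ and $\cons(k-2)$ use the same branch of the definition) and then reduce the identity to re-indexing/regrouping the truncated sum, the only difference being that you start from $\cons(k)/\lambda$ and peel off terms while the paper expands $1+\tfrac1\lambda+\cons(k-2)$ and reassembles it into $\cons(k)/\lambda$. The index bookkeeping in both parity cases checks out, including the trailing $\lambda^{k/2}$ term in the even case.
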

\begin{proof}
    {We prove this statement depending on the parity of~$k$}. If~$k$ is odd,~$k-2$ is also odd. {By definition of~$\alpha(k)$,}
    \[
        1 + \frac{1}{\lambda} + \alpha(k-2) = \frac{1}{\lambda} + 2 + 2 \sum_{i=1}^{(k-3)/2} \lambda^i = \frac{1}{\lambda} + 2 \sum_{i=1}^{(k-1)/2} \lambda^{i-1} = \frac{\alpha(k)}{\lambda}.
    \]
    If~$k$ is even,~$k-2$ is also even, {and we conclude}
    \begin{align*}
        1 + \frac{1}{\lambda} + \alpha(k-2) &= \frac{1}{\lambda} + 2 + 2 \sum_{i=1}^{(k-2)/2-1} \lambda^i + \lambda^{(k-2)/2} \\
		&= \frac{1}{\lambda} + 2 \sum_{i=1}^{k/2-1} \lambda^{i-1} + \lambda^{(k-2)/2} \\
		&= \frac{\alpha(k)}{\lambda}. 
	\end{align*}
\end{proof}

We defined our potential function~$\Phi$ as follows.
Let~$s_1, \ldots, s_k$ be the servers of \ldc and let~$x_1', \ldots, x_k'$ be the servers of \ftp.
For~$1 \leq i < j \leq k$ and~$\ell = \min\{j-i, k-(j-i)\}-1$ we define~$\delta_{ij} = \lambda^{\ell}$. Then,
\[
    \Phi =  \underbrace{\frac{\cons(k)}{\lambda} \cdot\sum_{i=1}^k \abs{s_i - x_i'}}_\Psi  + \underbrace{\sum_{i < j} \delta_{ij} \abs{s_i - s_j}.}_\Theta
\]
\begin{figure}[tb]
	\centering
	\begin{adjustbox}{width=0.7\linewidth}
		\begin{tikzpicture}[scale=0.75]
			\node[circle, draw, thick, minimum size=8mm, inner sep=1pt, outer sep=0pt] (s1) at (0,0) {$s_1$};
			\node[circle, draw, thick, minimum size=8mm, inner sep=1pt, outer sep=0pt] (s2) at (2.5,0) {$s_2$};
			\node[circle, draw, thick, minimum size=8mm, inner sep=1pt, outer sep=0pt] (s3) at (5,0) {$s_3$};
			\node[circle, draw, thick, minimum size=8mm, inner sep=1pt, outer sep=0pt] (s4) at (7.5,0) {$s_4$};
			\node at (10,0) {$\cdots$};
			\node[circle, draw, thick, minimum size=8mm, inner sep=1pt, outer sep=0pt] (sk-2) at (12.5,0) {$s_{k-2}$};
			\node[circle, draw, thick, minimum size=8mm, inner sep=1pt, outer sep=0pt] (sk-1) at (15,0) {$s_{k-1}$};
			\node[circle, draw, thick, minimum size=8mm, inner sep=1pt, outer sep=0pt] (sk) at (17.5,0) {$s_k$};
		
			\newcommand{\bendparam}{40}
			
			\draw[line width=3.3pt] (s1) edge node[above, pos=0.5, xshift=3pt] {$\ell = 0$} (s2);
			\draw[line width=2.6pt, algBlue] (s1) edge[bend left=\bendparam] node[above, pos=0.9, xshift=8pt] {$\ell = 1$} (s3);
			\draw[line width=1.9pt, algOrange] (s1) edge[bend left=\bendparam] node[above, pos=0.9, xshift=8pt] {$\ell = 2$} (s4);
			\draw[line width=1.9pt, algOrange] (s1) edge[bend left=\bendparam] node[above, pos=0.95, xshift=8pt] {$\ell = 2$} (sk-2);
			\draw[line width=2.6pt, algBlue] (s1) edge[bend left=\bendparam] node[above, pos=0.96, xshift=9pt] {$\ell = 1$} (sk-1);
			\draw[line width=3.3pt] (s1) edge[bend left=\bendparam] node[above, pos=0.97, xshift=10pt] {$\ell = 0$} (sk);
		
			\draw[line width=3.3pt] (s2) edge node[below, pos=0.5, xshift=3pt] {$\ell = 0$} (s3);
			\draw[line width=2.6pt, algBlue] (s2) edge[bend right=\bendparam] node[below, pos=0.9, xshift=8pt] {$\ell = 1$} (s4);
			\draw[line width=1.2pt, algGreen] (s2) edge[bend right=\bendparam] node[below, pos=0.95, xshift=8pt] {$\ell = 3$} (sk-2);
			\draw[line width=1.9pt, algOrange] (s2) edge[bend right=\bendparam] node[below, pos=0.96, xshift=8pt] {$\ell = 2$} (sk-1);
			\draw[line width=2.6pt, algBlue] (s2) edge[bend right=\bendparam] node[below, pos=0.97, xshift=8pt] {$\ell = 1$} (sk);
		\end{tikzpicture}
	\end{adjustbox}
	\caption{Visualization of all incident~$\delta_{ij}$-weights of the servers~$s_1$ and~$s_2$. The thickness (resp.\ color) of an arc indicates the influence of the corresponding distance in~$\Phi$.}
\end{figure}
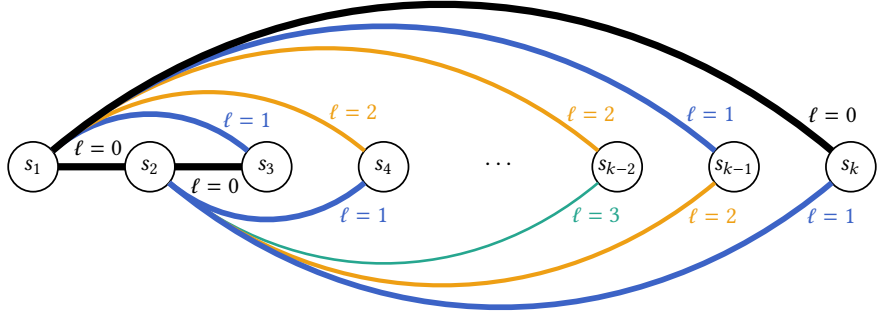
The analysis of \ldc requires evaluating the evolution of $\Phi$ after each request.   
The following lemma characterizes %
how a move of \ldc influences~$\Theta$. %
\begin{lemma}\label{lemma:consistency-theta-change}
    Let~$i \leq \floor{k/2}$. If~$s_i$ moves from~$p$ to~$p + x$,~$\Theta$ changes
by 
    \[
        (-x) \cdot \left( 1 + \cons(k-2) - 2\sum_{j=0}^{i-2} \lambda^j \right).
    \]
\end{lemma}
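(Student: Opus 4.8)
The key observation is that \ldc never lets two of its servers cross: when it moves $s_i$ it does so towards a request, so throughout the move $s_{i-1}\le s_i\le s_{i+1}$ continues to hold (with possible equalities only at the endpoint when servers meet at the request). Consequently, during the entire move of $s_i$ from $p$ to $p+x$, every term of $\Theta$ that involves $s_i$ is an affine function of the position of $s_i$: for $j>i$ we have $\abs{s_i-s_j}=s_j-s_i$ and for $j<i$ we have $\abs{s_j-s_i}=s_i-s_j$. Hence the net change is
\[
\Delta\Theta \;=\; -x\Bigl(\textstyle\sum_{j>i}\delta_{ij}\;-\;\sum_{j<i}\delta_{ji}\Bigr),
\]
and it remains to evaluate this bracket for $i\le\floor{k/2}$.

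\textbf{Evaluating the two sums.} Substituting $m=j-i$ (resp.\ $m=i-j$) and using $\delta_{ij}=\lambda^{\min\{j-i,\,k-(j-i)\}-1}$, both sums become partial sums of $f(m):=\lambda^{\min\{m,\,k-m\}-1}$, which is symmetric about $m=k/2$. For the left sum, every index $m$ with $1\le m\le i-1$ satisfies $m<k/2$ because $i\le\floor{k/2}$, so $\sum_{j<i}\delta_{ji}=\sum_{m=1}^{i-1}\lambda^{m-1}=\sum_{j=0}^{i-2}\lambda^{j}$. For the right sum, $\sum_{j>i}\delta_{ij}=\sum_{m=1}^{k-i}f(m)=\sum_{m=1}^{k-1}f(m)-\sum_{m=k-i+1}^{k-1}f(m)$, and since again $i\le\floor{k/2}$ forces $m>k/2$ for every $m$ in the removed tail, that tail equals $\sum_{m=k-i+1}^{k-1}\lambda^{(k-m)-1}=\sum_{j=0}^{i-2}\lambda^{j}$. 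Subtracting the two expressions gives
\[
\textstyle\sum_{j>i}\delta_{ij}-\sum_{j<i}\delta_{ji}\;=\;\sum_{m=1}^{k-1}f(m)\;-\;2\sum_{j=0}^{i-2}\lambda^{j}.
\]

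\textbf{Identifying the constant and the main obstacle.} The remaining task is to show $\sum_{m=1}^{k-1}f(m)=1+\cons(k-2)$. Splitting the sum at the symmetry point, $\sum_{m=1}^{k-1}f(m)=2\sum_{m=1}^{(k-1)/2}\lambda^{m-1}$ when $k$ is odd, and $2\sum_{m=1}^{k/2-1}\lambda^{m-1}+\lambda^{k/2-1}$ when $k$ is even; comparing each with the definition of $\cons(k-2)$ (alternatively, invoking \Cref{claim:consistency-technical-eqs}, which yields $1+\cons(k-2)=\cons(k)/\lambda-1/\lambda$) establishes the identity in both parities. Plugging back into the displays gives $\Delta\Theta=(-x)\bigl(1+\cons(k-2)-2\sum_{j=0}^{i-2}\lambda^{j}\bigr)$, as claimed. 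I expect no serious difficulty here: the only delicate points are the no-crossing argument that legitimises treating $\Theta$ as affine along the move, and the two index-shift identities for the symmetric weights $\delta_{ij}$ — and it is precisely the hypothesis $i\le\floor{k/2}$ that places the reflection point $k/2$ strictly to the right of all indices we split on, which is what makes both identities go through cleanly.
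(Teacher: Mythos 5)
Your proof is correct and follows essentially the same route as the paper's: both exploit the symmetry $\delta_{i-\ell,\,i}=\delta_{i,\,i+\ell}$ of the weights together with the hypothesis $i\le\floor{k/2}$, and identify the resulting constant with $1+\cons(k-2)$ by the same parity case split. The only difference is bookkeeping — the paper cancels each left-neighbor term against its mirrored right-neighbor term and evaluates the leftover $\sum_{j\ge 2i}\delta_{ij}$, whereas you subtract twice $\sum_{j=0}^{i-2}\lambda^{j}$ from the full symmetric sum — plus your (welcome but optional) explicit remark that no server crossing occurs during the move.
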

\begin{proof}
    Assume w.l.o.g. that $x=-1$, that is, server $s_i$ moves one unit to the left. Consider servers~$s_j$, $s_j'$ such that $j' + \ell = i = j - \ell$ for some~$1 \leq \ell \leq i-1$. Since~$\delta_{ij'} = \delta_{ij}$ we observe that the changes to the terms~$\delta_{ij} \abs{s_i - s_j}$ and~$ \delta_{ij'} \abs{s_i - s_j'}$ of~$\Theta$ cancel out. Hence, as~$i \leq \floor{k/2}$, the change of~$\Theta$ due to the move of~$s_i$ is equal to~$\sum_{j=2i}^k \delta_{ij}$.
    We now prove the statement depending on the parity of~$k$.
    \begin{enumerate}[(i)]
        \item If~$k$ is odd,~$k - 2i + 1$ is even. By definition,
              \begin{align*}
                  \sum_{j=2i}^k \delta_{ij} & = \sum_{j=2i}^k \lambda^{\min\{j-i, k-(j-i)\} - 1} = 2\sum_{j=2i}^{(k-1)/2 + i} \lambda^{j - i - 1} = 2\sum_{j=i-1}^{(k-1)/2 - 1} \lambda^j \\
                                            & = 2 + 2\sum_{j=1}^{(k-3)/2} \lambda^j - 2\sum_{j=0}^{i-2} \lambda^j = 1 + \cons(k-2) - 2\sum_{j=0}^{i-2} \lambda^j.
              \end{align*}
        \item If~$k$ is even,~$k - 2i + 1$ is odd, and there is a single term where the minimum in the definition of~$\delta_{ij}$ in achieved by both conditions. Hence,
              \begin{align*}
                  \sum_{j=2i}^k \lambda^{\min\{j-i, k-(j-i)\} - 1} & = 2\sum_{j=2i}^{k/2 - 1 + i} \lambda^{j - i - 1} + \lambda^{k/2-1} = 2\sum_{j=i-1}^{k/2 - 2} \lambda^j + \lambda^{k/2-1}            \\
                                                                   & = 2 + 2\sum_{j=1}^{k/2-2} \lambda^j + \lambda^{k/2-1} - 2\sum_{j=0}^{i-2} \lambda^j \\
																   &= 1 + \cons(k-2) - 2\sum_{j=0}^{i-2} \lambda^j.\qedhere 
              \end{align*}
    \end{enumerate}
\end{proof}

\begin{proof}[Proof of~\Cref{lemma:general-consistency-upper-bound}]
    Suppose that the next request appears. First \ftp moves some server~$x'_i$ towards the request, and the distance to~$s_i$ increases by at most~$\Delta \ftp$. Since this move only affects~$\Psi$,~$\Delta \Phi \leq \cons(k) / \lambda \cdot \Delta \ftp$.
    Second \ldc moves. We distinguish whether the request is between two servers or not, and assert for both cases~$\Delta \Phi \leq - 1 / \lambda \cdot \Delta \alg$.
    \begin{enumerate}[(a)]
        \item Let the request be located w.l.o.g on the left of~$s_1$. Thus,~$s_1$ moves towards it and charges cost~$\Delta \alg$. The fact that some server~$x_j'$ must already be on~$r_t$ implies with~\Cref{lemma:consistency-theta-change} that
              \[
                  \Delta \Phi \leq -\frac{\cons(k)}{\lambda}\Delta \alg + \left( 1 + \cons(k-2) - 2\sum_{j=0}^{1 - 2} \lambda^j \right) \Delta \alg.
              \]
              Rearranging and using~\Cref{claim:consistency-technical-eqs} gives the claimed bound, that is
              \[
                  \left( -\cons(k - 2) - \frac{1}{\lambda} - 1 + 1 + \cons(k-2) \right) \Delta \alg = -\frac{1}{\lambda} \Delta \alg.
              \]
        \item~Let the request be between~$s_i$ and~$s_{i+1}$.
              Assume w.l.o.g. that \ftp serves it with~$x'_j$ and~$j \leq i$.
              For ease of exposition, we assume that~$s_i$ travels distance~$1$ and~$s_{i+1}$ distance~$\lambda$. Hence,~$\Delta \alg = 1 + \lambda$.
              Since~$j \leq i$, we know that~$x_i'$ must be located on the right of~$x_j'$.
              Hence, the distance between~$s_i$ and~$x_i'$ decreases by~$1$, but the distance between~$s_{i+1}$ and~$x_{i+1}'$ increases by at most~$\lambda$. Thus,~$\Delta \Psi \leq \alpha(k)/\lambda \cdot (\lambda - 1)$.
              The change of~$\Theta$ is clearly bounded from above by the case where~$s_i$ moves distance~$\lambda$ and~$s_{i+1}$ moves distance~$1$ for~$i+1 \leq \floor{k/2}$. Combining~\Cref{lemma:consistency-theta-change} for both servers gives
              \begin{align*}
                  \Delta \Theta & = 1 + \cons(k-2) - 2\sum_{j=0}^{i-1}\lambda^{j} - \lambda \left( 1 + \cons(k-2) - 2\sum_{j=0}^{i-2}\lambda^{j} \right) \\
                                & = -1 - \lambda  + \cons(k-2) - \lambda \cons(k-2).
              \end{align*}
              Using this and~\Cref{claim:consistency-technical-eqs}, we can bound the increase of the potential by
              \begin{align*}
                  \Delta \Phi & \leq \frac{\cons(k)}{\lambda}(\lambda - 1) -1 - \lambda  + \cons(k-2) - \lambda \cons(k-2)    \\
                              & =- \frac{\cons(k)}{\lambda}  + \cons(k-2) %
							  = -1 - \frac{1}{\lambda} \\
							  &= -\frac{1}{\lambda} \Delta \alg. \qedhere
              \end{align*}
    \end{enumerate}
\end{proof}

\subparagraph{The Robustness Bound}

\LemGenRobustness*

We start by defining a potential function~$\Phi$. 
Let~$\lambda \in (0,1]$. Fix~$k$, let~$\beta = \beta(k)=\sum_{i=0}^{k-1}\lambda^{-i}$, and let~$s_1, \ldots, s_k$ be the servers of \ldc and let~$x_1, \ldots, x_k$ be the servers of an optimal solution. The potential function~is
\[
	\Phi = \underbrace{\rob \gamma \left(\sum_{i=1}^k \omega_i \abs{s_i - x_i} \right)}_{\Psi} + \underbrace{\sum_{i<j} \delta_{ij} \abs{s_i - s_j}}_{\Theta}.
\]
We specify the weights in this function as follows. For a pair of servers~$s_i,s_j$ with~$1 \leq i < j \leq k$, let~$\ell = \min\{j-i, k-(j-i)\}-1$ and
\[
	\delta_{ij} = \zeta_\ell 
= \frac{\lambda^\ell + \lambda^{k - 2 -\ell}}{1 + \lambda^{k-2}}.
\]

The intuition of the weights in the spreadness part $\Theta$ is the same as %
in the consistency potential function. However, the new 
weights~$\omega_i$ in the matching part $\Psi$ (defined below) require 
the more complex weights $\delta_{ij}$ compared to the simpler~$\lambda^{\ell}$ weights. 

Further, we define $d_{\ceil{k/2}} = 0$ if $k$ is odd and for all~$1 \leq i \leq \floor{k/2}$ let
\[
	d_i = d_{k+1-i} = \frac{2}{1 + \lambda^{k-2}} \sum_{\ell=i-1}^{k-1-i} \lambda^\ell.
\]

Let~$\gamma = d_1 / (\beta - 1)$,~$\omega_1 = \omega_k = 1$ and for~$2 \leq i \leq \ceil{k/2}$ we define the server-individual weights

	\begin{align*}
		\omega_i = \omega_{k+1-i} = \begin{dcases}
			\frac{2 \lambda \sum_{j=1}^{i/2-1} d_{2j} - 2 \sum_{j=1}^{i/2-1} d_{2j+1} + \lambda d_i + (2 + \lambda) \gamma}{\rob \gamma \lambda} & \text{ if } i \text{ is even, and} \\
			\frac{2 \lambda \sum_{j=1}^{(i-1)/2} d_{2j} - 2 \sum_{j=1}^{(i-3)/2} d_{2j+1} - d_i + \gamma}{\rob \gamma}                           & \text{ if } i \text{ is odd.}      \\
		\end{dcases}
	\end{align*}

{This finishes the definition of the potential function~$\Phi$. %
{To prove a robustness guarantee for \ldc, we} show bounds on the change of~$\Phi$ when the algorithms {(\ldc and \opt)} move their servers. {To that end, }%
several preliminary results {will become handy}. We first observe that the values~$d_1,\ldots,d_k$ correlate with the change of~$\Theta$ when \ldc moves a server.}
\begin{observation}\label{claim:scale-of-theta}
	Let~$i \leq \floor{k/2}$. If server~$s_i$ moves from~$p$ to~$p + x$,~$\Theta$ changes by~$(-x) \cdot d_i$.
\end{observation}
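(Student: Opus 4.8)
The plan is to mirror the proof of \Cref{lemma:consistency-theta-change}, now with the refined weights $\delta_{ij}=\zeta_\ell$. Since each term $\abs{s_i-s_j}$ of $\Theta$ is linear in $s_i$ as long as the relative order of the servers is preserved (which is the case for any move \ldc performs), it suffices to treat the unit move $x=-1$ and prove $\Delta\Theta = d_i$; the general statement then follows by scaling and reflection.

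\textbf{Step 1: cancelling symmetric neighbors.} For each $1\le \ell \le i-1$ the two pairs $\{s_{i-\ell},s_i\}$ and $\{s_i,s_{i+\ell}\}$ have the same index gap $\ell$ and hence the same weight $\delta_{i-\ell,i}=\delta_{i,i+\ell}$. Moving $s_i$ one unit to the left decreases $\abs{s_{i-\ell}-s_i}$ by $1$ and increases $\abs{s_i-s_{i+\ell}}$ by $1$, so their contributions to $\Theta$ cancel; these partners exist because $i\le\floor{k/2}$ forces $i+\ell\le 2i-1\le k-1$. The only terms of $\Theta$ involving $s_i$ that are not cancelled in this way are those with $s_j$ for $j\in\{2i,2i+1,\dots,k\}$ (whose mirror index $2i-j$ is non-positive), and each such $\abs{s_i-s_j}$ increases by $1$. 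Therefore $\Delta\Theta=\sum_{j=2i}^k \delta_{ij}$.

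\textbf{Step 2: evaluating the residual sum.} Set $m=j-i$, ranging over $\{i,\dots,k-i\}$, so $\delta_{ij}=\zeta_{\min\{m,k-m\}-1}$. Since $k-1-\min\{m,k-m\}=\max\{m,k-m\}-1$, the numerator of $\zeta_{\min\{m,k-m\}-1}$ is $\lambda^{\min\{m,k-m\}-1}+\lambda^{\max\{m,k-m\}-1}=\lambda^{m-1}+\lambda^{k-m-1}$, independently of which of $m,k-m$ is the smaller. Summing over $m$ from $i$ to $k-i$ and reindexing the second term by $\ell=k-m-1$ (which again runs over $i-1,\dots,k-1-i$) gives
\[
    \sum_{j=2i}^k \delta_{ij}=\frac{1}{1+\lambda^{k-2}}\sum_{m=i}^{k-i}\bigl(\lambda^{m-1}+\lambda^{k-m-1}\bigr)=\frac{2}{1+\lambda^{k-2}}\sum_{\ell=i-1}^{k-1-i}\lambda^\ell=d_i .
\]
Combined with Step~1 this yields $\Delta\Theta=d_i$ for $x=-1$, hence $\Delta\Theta=(-x)\,d_i$ in general.

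I expect the only real (and still routine) obstacle to be Step~2: recognizing and exploiting the palindromic symmetry $\zeta_\ell=\zeta_{k-2-\ell}$ of the weights to fold $\sum_{j=2i}^k\delta_{ij}$ into the closed form $\tfrac{2}{1+\lambda^{k-2}}\sum_{\ell=i-1}^{k-1-i}\lambda^\ell$. One can alternatively split on the parity of $k$ and sum two geometric-type series, as in \Cref{lemma:consistency-theta-change}, but the symmetry argument above avoids that casework.
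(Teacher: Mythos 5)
Your proof is correct and follows essentially the same route as the paper's: the identical cancellation of the symmetric pairs $\{s_{i-\ell},s_i\}$, $\{s_i,s_{i+\ell}\}$ reduces the change to the residual sum $\sum_{j=2i}^k \delta_{ij}$, which both arguments then show equals $d_i$. The only (minor, and slightly cleaner) difference is in that last step: you exploit the palindromic identity $\zeta_{\min\{m,k-m\}-1}=\tfrac{\lambda^{m-1}+\lambda^{k-m-1}}{1+\lambda^{k-2}}$ and re-index, whereas the paper groups equal $\zeta_\ell$-terms and splits on the parity of $k$ as in \Cref{lemma:consistency-theta-change}.
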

\begin{proof}
	Assume w.l.o.g. that~$x = -1$, that is, the server~$s_i$ moves one unit to the left. Consider servers~$s_j, s_j'$ such that $j' + \ell = i = j - \ell$ for some~$1 \leq \ell \leq i-1$. Since~$\delta_{ij'} = \delta_{ij}$ we observe that the changes to the terms~$\delta_{ij} \abs{s_i - s_j}$ and~$ \delta_{ij'} \abs{s_i - s_j'}$ of~$\Theta$ cancel out. Hence, as~$i \leq \floor{k/2}$, it suffices to consider the distances of~$s_i$ to servers~$s_j$ with~$j \geq 2i$. Therefore, 
	\[
		\Delta \Theta = \sum_{j=2i}^k \delta_{ij} = \begin{cases}
			\sum_{\ell=i-1}^{k/2-2} 2 \zeta_\ell + \zeta_{k/2-1} & \text{ if } k \text{ is even, and} \\
			\sum_{\ell=i-1}^{(k-3)/2} 2 \zeta_\ell               & \text{ if } k \text{ is odd}.
		\end{cases}
	\]
	The definition of~$\zeta_\ell$ implies that this is indeed equal to~$d_i$.
\end{proof}

{
Next, we give several algebraic transformations of~$\gamma$. 
\begin{lemma}
	The following statements are true:
	\vspace*{-.5em}
	\begin{thmlist}
		\item\label[lemma]{obs:gamma} $\gamma = 2 \lambda^{k-1} / (1 + \lambda^{k-2})$.
		\item For all~$1 \leq i \leq \floor{k/2}$, it holds~$(1+\lambda)\gamma = \lambda^{i+1} d_i - \lambda^i d_{i+1}$.\label[lemma]{claim:d-gamma}
		\item If~$k$ is even, it holds~$\gamma = \lambda d_1 + (1+\lambda) \sum_{j=2}^{k/2} {(-1)}^{j-1} d_{j}$.\label[lemma]{claim:gamma-alternating}
	\end{thmlist}
\end{lemma}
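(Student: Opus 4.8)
The plan is to prove all three items as elementary identities obtained by summing finite geometric series. The one useful preliminary is a closed form for the $d_i$: for $\lambda\in(0,1)$ and $1\le i\le\floor{k/2}$,
\[
    d_i \;=\; \frac{2}{1+\lambda^{k-2}}\sum_{\ell=i-1}^{k-1-i}\lambda^\ell \;=\; \frac{2\,(\lambda^{i-1}-\lambda^{k-i})}{(1-\lambda)(1+\lambda^{k-2})},
\]
extended by $d_{k+1-i}=d_i$ to the remaining indices. I would treat $\lambda=1$ at the very end by continuity (every quantity in sight is a rational function of $\lambda$ which is finite on $(0,1]$), or directly from $d_i=k+1-2i$ and $\gamma=1$.

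For (i) I would evaluate the two geometric sums in $\gamma=d_1/(\beta-1)$. Since $\beta-1=\sum_{i=1}^{k-1}\lambda^{-i}=(\lambda^{1-k}-1)/(1-\lambda)$ and $d_1=2(1-\lambda^{k-1})/((1-\lambda)(1+\lambda^{k-2}))$, and since $1-\lambda^{k-1}=\lambda^{k-1}(\lambda^{1-k}-1)$, the common factor cancels and $\gamma=2\lambda^{k-1}/(1+\lambda^{k-2})$.

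For (ii) I would substitute the closed form and compute
\[
    \lambda^{i+1}d_i-\lambda^i d_{i+1}=\frac{2}{(1-\lambda)(1+\lambda^{k-2})}\bigl((\lambda^{2i}-\lambda^{k+1})-(\lambda^{2i}-\lambda^{k-1})\bigr)=\frac{2\lambda^{k-1}(1-\lambda^2)}{(1-\lambda)(1+\lambda^{k-2})},
\]
which equals $(1+\lambda)\gamma$ by (i). For (iii) I would reuse (ii): dividing that identity by $\lambda^i$ gives the one-step recursion $d_{i+1}=\lambda d_i-(1+\lambda)\gamma\lambda^{-i}$. Substituting it into the alternating sum $\sum_{j=2}^{k/2}(-1)^{j-1}d_j$ makes the $d_j$-contributions telescope, leaving a geometric series in $-1/\lambda$ together with the single boundary term $d_{k/2}=\gamma\lambda^{-k/2}$ (again from (i)); collecting terms, the powers $\lambda^{1-k/2}$ cancel in pairs and one is left with exactly $\lambda d_1+(1+\lambda)\sum_{j=2}^{k/2}(-1)^{j-1}d_j=\gamma$. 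One could equally substitute the closed form for each $d_j$ directly and sum the two resulting geometric series.

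I do not anticipate a genuine obstacle: the proof is pure bookkeeping of finite geometric sums. The only two points requiring a little care are the degenerate case $\lambda=1$ (handled by continuity) and the index ranges near the middle of the server array. The interior closed form for $d_i$ is valid for $1\le i\le\ceil{k/2}$ — and at $i=\ceil{k/2}$ it correctly evaluates to $0$ when $k$ is odd — but for even $k$ one has $d_{k/2+1}=d_{k/2}$ by symmetry rather than by the interior formula; accordingly (ii) should be read over the range of indices in which it is actually applied (equivalently $i\le\ceil{k/2}-1$), and the telescoping for (iii) invokes (ii) only for $i\le k/2-1$.
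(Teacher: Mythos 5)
Your proposal is correct and takes essentially the same route as the paper: (i) by cancelling the geometric sums in $\gamma=d_1/(\beta-1)$, (ii) by direct computation with the formula for $d_i$, and (iii) by turning the alternating sum into a telescoping sum via (ii) and finishing with $\lambda^{k/2}d_{k/2}=\gamma$; the only difference is that you work with closed-form fractions (hence your $\lambda=1$ continuity remark) where the paper manipulates the sums directly. Your caveat restricting (ii) to $i\le\ceil{k/2}-1$ for even $k$ is well placed: at $i=k/2$ the symmetric definition $d_{k/2+1}=d_{k/2}$ indeed breaks the identity, a boundary case the paper's proof silently glosses over and which is never actually used in its later applications.
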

}
\begin{proof}
	\begin{enumerate}[(i)]
		\item Since
		\[
			\lambda^{k-1} (\rob - 1) = \lambda^{k-1} \sum_{\ell=1}^{k-1} \lambda^{-\ell} = \sum_{\ell=0}^{k-2} \lambda^\ell,
		\]
		we conclude by the definition of~$\gamma$ and~$d_1$ that
		\[
			\gamma = \frac{d_1}{\rob-1} = \frac{2}{(1 + \lambda^{k-2})(\rob-1)}	\sum_{\ell=0}^{k-2} \lambda^\ell = \frac{2}{1 + \lambda^{k-2}}	\lambda^{k-1}.
		\]
		\item Simplifying the right-hand side gives
		\begin{align*}
			\lambda^{i+1} d_i - \lambda^i d_{i+1}
			 &                             
			  = \frac{2 \lambda^i}{1 + \lambda^{k-2}} \left( \lambda \sum_{\ell=i-1}^{k-1-i} \lambda^\ell -\sum_{\ell=i}^{k-2-i} \lambda^\ell \right) 
			  = \frac{2 \lambda^i}{1 + \lambda^{k-2}} \left( \sum_{\ell=i}^{k-i} \lambda^\ell -\sum_{\ell=i}^{k-2-i} \lambda^\ell \right)             \\
			 & = \frac{2 \lambda^i}{1 + \lambda^{k-2}} \left( \lambda^{k-i} + \lambda^{k-i-1} \right)                                                  
			  = 2 \left( \frac{\lambda^{k} + \lambda^{k-1}}{1 + \lambda^{k-2}} \right)                                        \\                        
			 & = (1+\lambda) \left( \frac{2 \lambda^{k-1}}{1 + \lambda^{k-2}} \right).
		\end{align*}
		Then,~\Cref{obs:gamma} concludes the proof.
		\item Assume that~$k$ is even. The right-hand side is equal to
		\[
			{(-1)}^{k/2-1} \lambda d_{k/2} + \sum_{j=1}^{k/2-1} {(-1)}^{j-1} (\lambda d_j - d_{j+1}).
		\]
		By~\Cref{claim:d-gamma},
		\[
			{(-1)}^{k/2-1} \lambda d_{k/2} + (1+\lambda) \sum_{j=1}^{k/2-1} {(-1)}^{j-1} \frac{\gamma}{\lambda^j},
		\]
		which is equal to
		\[
			\frac{1}{\lambda^{k/2-1}} \left( {(-1)}^{k/2-1} \lambda^{k/2} d_{k/2} + \sum_{j=0}^{k/2-2} {(-1)}^{k/2-j} (\lambda^{j} + \lambda^{j+1}) \gamma \right).
		\]
		We proceed by applying a telescoping sum argument. Since~$k/2-(k/2-2) = 2$, the last term of the sum~$\lambda^{k/2-1}\gamma$ is positive. Similarly, the first term~$\lambda^0 \gamma$ has the same sign as~${(-1)}^{k/2-0} = -{(-1)}^{k/2-1}$. The remaining terms of the sum cancel out. Thus, it remains
		\begin{align*}
			\frac{1}{\lambda^{k/2-1}} \left( \lambda^{k/2-1} \gamma + {(-1)}^{k/2-1} (\lambda^{k/2} d_{k/2} - \gamma) \right).
		\end{align*}
		By definition,~$d_{k/2} = 2 \lambda^{k/2-1} / (1 + \lambda^{k-2})$. Hence,~$\lambda^{k/2} d_{k/2}$ is equal to~$\gamma$ by~\Cref{obs:gamma}. We conclude that the expression is indeed equal to~$\gamma$.
	\end{enumerate}	
\end{proof}

These preliminary results enable us to prove two more involved observations {about the weights chosen for our potential function. The proofs are deferred to~\Cref{app:upper-bounds}.} %
The first observation %
is important for all cases where a request appears between two servers. Recall the definition of~$\Phi$. If~$s_i$ moves with speed~$\lambda$ and $s_{i+1}$ with speed~$1$, the changes to~$\Psi$ (increase or decrease) are scaled by~$\beta \gamma \lambda \omega_i$ regarding $s_i$ and $\beta \gamma \omega_{i+1}$ regarding~$s_{i+1}$. If~$i$ is even, we can easily use the definition of $\omega$, since the denominators cancel. However, if~$i$ is odd, we use the following alternative representation of the~$\omega$-weights.%
\begin{observation}\label{lemma:omegas-are-equal}
	For~$2 \leq i \leq \ceil{k/2}$,~$\omega_i$ is equal to
	\begin{align*}
		\begin{dcases}
			\frac{2 \lambda \sum_{j=1}^{i/2} d_{2j-1} - 2 \sum_{j=1}^{i/2-1} d_{2j} - d_i - \gamma}{\rob \gamma}                               & \text{ if } i \text{ is even, and} \\
			\frac{2 \lambda \sum_{j=1}^{(i-1)/2} d_{2j-1} - 2 \sum_{j=1}^{(i-1)/2} d_{2j} + \lambda d_i + \lambda \gamma}{\rob \gamma \lambda} & \text{ if } i \text{ is odd.}      \\
		\end{dcases}
	\end{align*}
\end{observation}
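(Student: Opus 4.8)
The plan is to reduce the statement, for each parity of $i$ separately, to an elementary identity among the $d_\ell$'s and $\gamma$. The two displayed forms of $\omega_i$ differ only structurally: in the original definition the even-indexed $\omega$'s carry an extra $\lambda$ in the denominator while here the odd-indexed ones do, and correspondingly the $d$-sums in the numerators run over the opposite parity of indices. (This parity swap is exactly what makes the factors $\rob\gamma\lambda$ and $\rob\gamma$ cancel cleanly against the scalings $\rob\gamma\lambda\,\omega_i$ and $\rob\gamma\,\omega_{i+1}$ of the two $\Psi$-terms when a request falls between $s_i$ and $s_{i+1}$.) First, then, I would cross-multiply: for $i$ even I must show that the numerator of the original expression equals $\lambda$ times the numerator of the new one, and for $i$ odd that $\lambda$ times the numerator of the original expression equals the numerator of the new one. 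In both cases $\rob\gamma$ cancels from the equation, leaving a polynomial identity in $\lambda$ whose unknowns are $\gamma$ and the $d_\ell$'s (the latter entering through finite sums, the former only additively).

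Next I would align the two sides so that they can be compared term by term, using the re-indexing relation $\sum_{j=1}^{p} d_{2j-1} = d_1 + \sum_{j=1}^{p-1} d_{2j+1}$ (and the obvious shift of a block of even-index terms). After this alignment almost every $d$-term cancels in pairs, and the residual identity involves only $\gamma$, $d_1$, $d_i$ and $d_{i\pm 1}$; I would then recognise it as an instance of \Cref{claim:d-gamma}, i.e.\ $(1+\lambda)\gamma = \lambda^{i+1}d_i - \lambda^i d_{i+1}$, applied at index $i$ or $i-1$ (with \Cref{obs:gamma} used to normalise $\gamma$, and, in the even-$k$ endpoint $i=k/2$, possibly \Cref{claim:gamma-alternating}). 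As a purely mechanical alternative, one can substitute the closed forms $d_\ell = \frac{2}{1+\lambda^{k-2}}\sum_{m=\ell-1}^{k-1-\ell}\lambda^m = \frac{2}{(1+\lambda^{k-2})(1-\lambda)}\left(\lambda^{\ell-1}-\lambda^{k-\ell}\right)$ and $\gamma=\frac{2\lambda^{k-1}}{1+\lambda^{k-2}}$ into both numerators, at which point every sum is geometric and the identity collapses after routine simplification.

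Finally I would dispatch the boundary cases: when $i=2$ the sums $\sum_{j=1}^{i/2-1}$ and $\sum_{j=1}^{(i-3)/2}$ are empty and the claim reduces directly to \Cref{claim:d-gamma} at index $1$; when $k$ is odd and $i=\ceil{k/2}$ one uses $d_{\ceil{k/2}}=0$; and the degenerate case $\lambda=1$, where the geometric closed form above is invalid, is treated separately via $d_\ell=k-2\ell+1$, $\gamma=1$ and $\rob=k$. I expect the main obstacle to be the index bookkeeping --- correctly matching the parity-dependent summation limits ($\ceil{\cdot}$, $\floor{\cdot}$, and the $(i-1)/2$ versus $(i-3)/2$ ranges) and keeping the even- and odd-index blocks of $d$'s aligned through the re-indexing; once the sums are lined up, the remaining algebra is routine and is essentially a single application of \Cref{claim:d-gamma}.
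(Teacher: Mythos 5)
Your proposal is correct, but it organizes the algebra differently from the paper. The paper proves the observation by induction on $i$ in steps of two (base cases $i=2$ and $i=3$), passing from the already-established alternative form of $\omega_{i-2}$ to that of $\omega_i$ via the relation $\lambda d_j - d_{j+1} = \frac{1+\lambda}{\lambda^{j}}\gamma = \lambda^2 d_{j+1} - \lambda d_{j+2}$, i.e.\ \Cref{claim:d-gamma} applied at two consecutive indices. You instead verify the identity directly: cross-multiply so that $\rob\gamma$ cancels, manipulate the sums, and invoke \Cref{claim:d-gamma} (with the fully mechanical fallback of substituting the geometric closed forms $d_\ell = \frac{2}{(1+\lambda^{k-2})(1-\lambda)}(\lambda^{\ell-1}-\lambda^{k-\ell})$ and $\gamma = \frac{2\lambda^{k-1}}{1+\lambda^{k-2}}$, treating $\lambda=1$ separately). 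Both routes rest on the same key lemma; yours avoids the induction at the price of index bookkeeping, and the closed-form substitution is the most robust way to make it airtight, while the paper's induction keeps each step short at the price of carrying two parallel parity cases.

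One calibration on your main route: after cross-multiplication the two numerators contain $d$-sums of \emph{opposite} parities ($d_{2j}$ versus $d_{2j\pm 1}$), so re-indexing alone does not make the $d$-terms cancel in pairs, and the residue is not a single application of \Cref{claim:d-gamma}. What is actually needed is the three-term relation $(1+\lambda^2)d_m = \lambda(d_{m-1}+d_{m+1})$ (two applications of \Cref{claim:d-gamma}) at \emph{every} interior index. For odd $i$ this already finishes the job, since the $\gamma$-terms cancel outright; for even $i$ the substitutions produce $\gamma$-terms of the form $2\bigl[\lambda d_{2j}-d_{2j+1}\bigr] - 2\lambda\bigl[\lambda d_{2j-1}-d_{2j}\bigr] = 2(1+\lambda)\gamma\bigl(\lambda^{-2j}-\lambda^{-2j+2}\bigr)$, which telescope to $2(1+\lambda)\gamma(\lambda^{2-i}-1)$ and are annihilated by the boundary pair $-2\lambda\bigl[\lambda d_{i-1}-d_i\bigr]$ (\Cref{claim:d-gamma} at $i-1$) together with the explicit $(2+2\lambda)\gamma$. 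Finally, \Cref{claim:gamma-alternating} is not needed for this observation (it is used only later, in the middle-server cases of the robustness proof); the endpoint $i=\ceil{k/2}$ needs no special treatment beyond $d_{\ceil{k/2}}=0$ when $k$ is odd.
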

\begin{proof}
	{We first note that for every~$1 \leq j \leq \floor{k/2}-1$, applying~\Cref{claim:d-gamma} with~$j$ and~$j+1$ yields
	\begin{equation}
		\lambda d_j - d_{j+1} = \frac{1+\lambda}{\lambda^j}\gamma = \lambda^2 d_{j+1} - \lambda d_{j+2}.\label{eq:3-consec-d}
	\end{equation}
	}
	We now prove the statement separately for all even and all odd values of~$2 \leq i \leq \ceil{k/2}$ by induction.

		As induction base for the even case, we first prove the claim for~$i=2$. Indeed,
		      \[
			      \omega_2
			      = \frac{\lambda d_2 + (2 + \lambda) \gamma}{\rob \gamma \lambda}
			      = \frac{d_2 + (2 + \lambda) \gamma / \lambda}{\rob \gamma}
			      = \frac{2 \lambda d_1 - d_2 - \gamma}{\rob \gamma}.
		      \]
		      Note that the last equality derives from~\Cref{claim:d-gamma}.
		      Now assume that~$i > 2$ is even. The induction hypothesis for~$i-2$ yields in this case
		      \begin{equation}
				  \beta \gamma \lambda \cdot \omega_{i-2} = 2 \lambda^2 \sum_{j=1}^{i/2-1} d_{2j-1} - 2 \lambda \sum_{j=1}^{i/2-2} d_{2j} - \lambda d_{i-2} - \lambda \gamma. \label{eq:induction-hypo-even}
		      \end{equation}
		      We want to prove that~$\beta \gamma \lambda  \cdot \omega_i$ is equal to
		      \[
			      2 \lambda^2 \sum_{j=1}^{i/2} d_{2j-1} - 2 \lambda \sum_{j=1}^{i/2-1} d_{2j} - \lambda d_i - \lambda \gamma,
		      \]
		      which can be rearranged to
		      \[
			      2 \lambda^2 \sum_{j=1}^{i/2-1} d_{2j-1} - 2 \lambda \sum_{j=1}^{i/2-2} d_{2j} - \lambda d_{i-2} - \lambda \gamma - \lambda d_{i} - \lambda d_{i-2} + 2 \lambda^2 d_{i-1}.
			  \]
			  Replacing the right side of~\eqref{eq:induction-hypo-even} in the above expression yields
			  \[
				\beta \gamma \lambda \cdot \omega_{i-2} - \lambda d_{i} - \lambda d_{i-2} + 2 \lambda^2 d_{i-1}.
			  \]
		      Since~\eqref{eq:3-consec-d} gives~$2(1+\lambda^2) d_{i-1} = 2 \lambda (d_{i-2} + d_{i})$, and by the definition of~$\omega_{i-2}$, this can be rewritten to
			  \begin{align*}  
				&2 \lambda \sum_{j=1}^{i/2-2} d_{2j} - 2 \sum_{j=1}^{i/2-2} d_{2j+1} + \lambda d_{i-2} + (2 + \lambda) \gamma - 2d_{i-1} + \lambda d_{i-2} + \lambda d_i \\
				&= 2 \lambda \sum_{j=1}^{i/2-1} d_{2j} - 2 \sum_{j=1}^{i/2-1} d_{2j+1} + (2 + \lambda) \gamma + \lambda d_i,
			 \end{align*}
		      which is indeed equal to~$\beta \gamma \lambda \cdot \omega_i$ by definition.

		As induction base for the odd case, we start by proving the claim for~$i=3$, that is
		      \[
			      \omega_3 = \frac{2 \lambda d_2 - d_3 + \gamma}{\beta \gamma} = \frac{2 \lambda^2 d_2 - \lambda d_3 + \lambda \gamma}{\beta \gamma \lambda}  = \frac{2\lambda d_1 - 2d_2 + \lambda d_3 + \lambda \gamma}{\rob \gamma \lambda}.
		      \]
		      In the last equality we used that~$2(1 + \lambda^2) d_2 = 2\lambda (d_1 + d_3)$ by~\eqref{eq:3-consec-d}. Now assume that~$i > 3$ is odd.
		      By induction hypothesis for~$i-2$,
		      \begin{equation}
				\beta \gamma \lambda \cdot \omega_{i-2} = 2 \lambda \sum_{j=1}^{(i-1)/2-1} d_{2j-1} - 2 \sum_{j=1}^{(i-1)/2-1} d_{2j} + \lambda d_{i-2} + \lambda \gamma. \label{eq:induction-hypo-odd}
		      \end{equation}
		      Consider the claimed expression for~$\beta \gamma \lambda \cdot \omega_i$, that is
		      \[
			      2 \lambda \sum_{j=1}^{(i-1)/2} d_{2j-1} - 2 \sum_{j=1}^{(i-1)/2} d_{2j} + \lambda d_i + \lambda \gamma,
		      \]
		      which we can rearrange to
		      \[
			      2 \lambda \sum_{j=1}^{(i-1)/2-1} d_{2j-1} - 2 \sum_{j=1}^{(i-1)/2-1} d_{2j} + \lambda d_{i-2} + \lambda \gamma + \lambda d_{i-2} + \lambda d_i - 2 d_{i-1}.
			  \]
			  Replacing the right side of~\eqref{eq:induction-hypo-odd} in the above expression gives
			  \[
				\beta \gamma \lambda \cdot \omega_{i-2} + \lambda d_{i-2} + \lambda d_i - 2 d_{i-1}.
			  \]
		      Noting that~\eqref{eq:3-consec-d} gives~$2(1+\lambda^2) d_{i-1} = 2 \lambda (d_{i-2} + d_{i})$ yields together with the definition of~$\omega_{i-2}$ the equivalent expression
		      \begin{align*}
				&2 \lambda^2 \sum_{j=1}^{(i-1)/2-1} d_{2j} - 2 \lambda \sum_{j=1}^{(i-3)/2-1} d_{2j+1} - \lambda d_{i-2} + \lambda \gamma + 2 \lambda^2 d_{i-1} - \lambda d_{i-2} - \lambda d_i \\
				&= 2 \lambda^2 \sum_{j=1}^{(i-1)/2} d_{2j} - 2 \lambda \sum_{j=1}^{(i-3)/2} d_{2j+1} - \lambda d_{i} + \lambda \gamma.
			  \end{align*}
		      Since this is by definition equal to~$\beta \gamma \lambda \cdot \omega_i$, we can also conclude this case.
\end{proof}

The second observation is an upper and a lower bound of the~$\omega$-weights regardless of the corresponding server. The lower bound is necessary to show that~$\Phi \geq 0$, while we use the upper bound to give an easy upper bound on the increase of the potential when the optimal solution moves, independently of its chosen server. %
\begin{observation}\label{claim:omega-bound}
	The values~$\omega_1,\ldots,\omega_k$ are at least~$0$ and at most~$1$.
\end{observation}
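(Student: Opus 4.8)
The plan is to reduce everything to one closed form for~$\omega_i$. Assume first that~$\lambda\in(0,1)$ (the case~$\lambda=1$ is separate: there~$d_i=k+1-2i$, $\gamma=1$, and a direct computation gives~$\omega_i=1$ for every~$i$). From the geometric sum defining~$d_j$ we get~$d_j=\tfrac{2(\lambda^{j-1}-\lambda^{k-j})}{(1-\lambda)(1+\lambda^{k-2})}$ for all~$1\le j\le k$, and \Cref{obs:gamma} gives~$\gamma=\tfrac{2\lambda^{k-1}}{1+\lambda^{k-2}}$. First I would add the defining formula for~$\omega_i$ to the alternative one from \Cref{lemma:omegas-are-equal} and divide by~$2$; after simplification (the~$\lambda d_i$ terms cancel and the remaining~$d$-sums consolidate) this yields, for~$1\le i\le\ceil{k/2}$,
\[
	\beta\gamma\lambda\,\omega_i = \gamma + \lambda^2 d_1 - (1-\lambda^2)\sum_{j=1}^{i/2-1} d_{2j+1} \quad(i\text{ even}),\qquad
	\beta\gamma\lambda\,\omega_i = \lambda\gamma + \lambda d_1 - (1-\lambda^2)\sum_{j=1}^{(i-1)/2} d_{2j} \quad(i\text{ odd}).
\]

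Next I would plug in the closed forms of~$d_j$ and~$\gamma$. Each surviving sum is geometric with ratio~$\lambda^2$ and telescopes, and collecting terms makes both parity cases collapse to the single identity
\[
	\beta\gamma\lambda\,\omega_i = \frac{2\bigl(\lambda^i + \lambda^{k+1-i} - \lambda^k - \lambda^{k+1}\bigr)}{(1-\lambda)(1+\lambda^{k-2})},
\]
valid for~$1\le i\le\ceil{k/2}$. Since the right-hand side is invariant under~$i\mapsto k+1-i$ and~$\omega_i=\omega_{k+1-i}$ by definition, the identity holds for all~$1\le i\le k$; in particular~$i=1$ gives~$\beta\gamma\lambda=\tfrac{2(\lambda-\lambda^{k+1})}{(1-\lambda)(1+\lambda^{k-2})}>0$, in agreement with~$\omega_1=1$.

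Dividing by the positive quantity~$\beta\gamma\lambda$, the bound~$\omega_i\ge 0$ becomes~$\lambda^i+\lambda^{k+1-i}\ge\lambda^k+\lambda^{k+1}$, which holds because~$\lambda^i+\lambda^{k+1-i}-\lambda^k-\lambda^{k+1}=\lambda^i(1-\lambda^{k-i})+\lambda^{k+1-i}(1-\lambda^i)\ge 0$ for~$0\le i\le k$; and~$\omega_i\le 1$ becomes~$\lambda^i+\lambda^{k+1-i}\le\lambda+\lambda^k$, which holds because~$\lambda+\lambda^k-\lambda^i-\lambda^{k+1-i}=(1-\lambda^{i-1})(\lambda-\lambda^{k+1-i})\ge 0$ for~$1\le i\le k$. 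As every index~$i$ lies in~$\{1,\dots,k\}$, both bounds hold.

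The step I expect to be hardest is the telescoping: one must track the endpoints of the geometric sums carefully and verify that the~$\gamma$- and~$d_1$-terms combine with them so that the even-$i$ and odd-$i$ formulas really reduce to the same closed form. After that identity is established, the two bounds are just the two factorizations above.
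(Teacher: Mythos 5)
Your proposal is correct, and I checked the key identity: averaging the defining formula with the representation from \Cref{lemma:omegas-are-equal} does give $\beta\gamma\lambda\,\omega_i=\gamma+\lambda^2d_1-(1-\lambda^2)\sum_{j=1}^{i/2-1}d_{2j+1}$ (even $i$) and $\lambda\gamma+\lambda d_1-(1-\lambda^2)\sum_{j=1}^{(i-1)/2}d_{2j}$ (odd $i$), and substituting the geometric closed forms of $d_j$ and $\gamma$ collapses both to $\beta\gamma\lambda\,\omega_i=\tfrac{2(\lambda^i+\lambda^{k+1-i}-\lambda^k-\lambda^{k+1})}{(1-\lambda)(1+\lambda^{k-2})}$; the two factorizations then indeed give $0\le\omega_i\le1$, and your separate treatment of $\lambda=1$ is also correct. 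This is a genuinely different route from the paper's. The paper never invokes \Cref{lemma:omegas-are-equal} here: it expands the numerator of $\omega_i$ directly from the definition, separately for each parity, telescopes the double sums into an expression of the form $\tfrac{2}{1+\lambda^{k-2}}\bigl(\sum 2\lambda^\ell+\sum\lambda^\ell\bigr)+\lambda\gamma$ (resp.\ $+\gamma$), reads off nonnegativity immediately because everything is a sum of nonnegative powers, and gets the upper bound by comparing the high-power tail $\sum_{\ell=k-i+1}^{k-1}\lambda^\ell$ with $\sum_{\ell=1}^{i-1}\lambda^\ell$ to land on $\lambda d_1+\lambda\gamma=\beta\gamma\lambda$. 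What your approach buys is a clean closed formula for $\omega_i$ that is manifestly symmetric in $i\mapsto k+1-i$ and makes both bounds one-line factorizations; what it costs is the dependence on the (inductively proved) alternative representation and a case split at $\lambda=1$, whereas the paper's argument works uniformly for all $\lambda\in(0,1]$ using only the definitions.

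One small inaccuracy worth fixing: the closed form $d_j=\tfrac{2(\lambda^{j-1}-\lambda^{k-j})}{(1-\lambda)(1+\lambda^{k-2})}$ is only valid for $j\le\ceil{k/2}$; for $j>\ceil{k/2}$ the paper defines $d_j=d_{k+1-j}$, and your formula would give its negative. Since all indices appearing in your consolidated sums are at most $\ceil{k/2}-1$ (plus $d_1$), this does not affect the argument, but the quantifier ``for all $1\le j\le k$'' should be restricted.
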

\begin{proof}
	By definition,~$\omega_1 = \omega_k = 1$. We now show this property for~$\omega_i$ depending on whether~$2 \leq i \leq k-1$ is even or odd.

		Assume that~$i$ is even. By definition, the numerator of~$\omega_i$ is equal to
		      \[
			      2 \lambda \sum_{j=1}^{i/2-1} d_{2j} - 2 \sum_{j=1}^{i/2-1} d_{2j+1} + \lambda d_i + (2 + \lambda) \gamma.
		      \]
			  Using the definition of~$d_i$ and~\Cref{obs:gamma} gives
		      \begin{align*}
			       & \frac{2}{1 + \lambda^{k-2}} \left( \lambda \sum_{j=1}^{i/2-1} \sum_{\ell=2j-1}^{k-1-2j} 2 \lambda^\ell - \sum_{j=1}^{i/2-1} \sum_{\ell=2j}^{k-2-2j} 2 \lambda^\ell + \lambda \sum_{\ell=i-1}^{k-1-i} \lambda^\ell \right) + (2+\lambda) \gamma \\
			       & = \frac{2}{1 + \lambda^{k-2}} \left(\sum_{j=1}^{i/2-1} 2 (\lambda^{k-2j} + \lambda^{k-1-2j})  + \sum_{\ell=i}^{k-i} \lambda^\ell \right) + (2+\lambda) \gamma                                                                                  \\
			       & = \frac{2}{1 + \lambda^{k-2}} \left(\sum_{\ell=k-i+1}^{k-2} 2 \lambda^\ell  + \sum_{\ell=i}^{k-i} \lambda^\ell \right) + (2+\lambda) \gamma                                                                                                    \\
			       & = \frac{2}{1 + \lambda^{k-2}} \left( \sum_{\ell=k-i+1}^{k-1} 2 \lambda^\ell  + \sum_{\ell=i}^{k-i} \lambda^\ell\right) + \lambda \gamma
		      \end{align*}
		      Since~$\rob \gamma \lambda = \lambda \gamma + \lambda d_1 \geq 0$, we conclude that~$\omega_i \geq 0$. Further, using the fact that~$\sum_{\ell=k-i+1}^{k-1} \lambda^\ell \leq \sum_{\ell=1}^{i-1} \lambda^\ell$ yields
		      \[
			      \frac{2}{1 + \lambda^{k-2}} \left( \sum_{\ell=k-i+1}^{k-1} 2 \lambda^\ell  + \sum_{\ell=i}^{k-i} \lambda^\ell\right) + \lambda \gamma \leq \frac{2}{1 + \lambda^{k-2}} \sum_{\ell=1}^{k-1} \lambda^{\ell} + \lambda \gamma = \lambda d_1 + \lambda \gamma,
		      \]
		      and we conclude that~$\omega_i \leq 1$.

		Assume that~$i$ is odd. By definition, the numerator of~$\omega_i$ is equal to
		      \[
			      2 \lambda \sum_{j=1}^{(i-1)/2} d_{2j} - 2 \sum_{j=1}^{(i-3)/2} d_{2j+1} - d_i + \gamma.
		      \]
		     Using definitions gives
		      \begin{align*}
			       & \frac{2}{1 + \lambda^{k-2}} \left( \lambda \sum_{j=1}^{(i-1)/2} \sum_{\ell=2j-1}^{k-1-2j} 2 \lambda^\ell - \sum_{j=1}^{(i-3)/2} \sum_{\ell=2j}^{k-2-2j} 2 \lambda^\ell - \sum_{\ell=i-1}^{k-1-i} \lambda^\ell \right) + \gamma \\
			       & = \frac{2}{1 + \lambda^{k-2}} \left(\sum_{j=1}^{(i-1)/2} \sum_{\ell=2j}^{k-2j} 2 \lambda^\ell - \sum_{j=1}^{(i-3)/2} \sum_{\ell=2j}^{k-2-2j} 2 \lambda^\ell - \sum_{\ell=i-1}^{k-1-i} \lambda^\ell \right) + \gamma            \\
			       & = \frac{2}{1 + \lambda^{k-2}} \left( \sum_{j=1}^{(i-1)/2} 2 (\lambda^{k-2j} + \lambda^{k-1-2j}) +  \sum_{\ell=i-1}^{k-1-i} 2 \lambda^\ell  - \sum_{\ell=i-1}^{k-1-i} \lambda^\ell \right) + \gamma                             \\
			       & = \frac{2}{1 + \lambda^{k-2}} \left( \sum_{\ell=k-i}^{k-2} 2 \lambda^{\ell} + \sum_{\ell=i-1}^{k-1-i}  \lambda^\ell \right) + \gamma.
		      \end{align*}
		      Since~$\rob \gamma = \gamma + d_1 \geq 0$, we conclude that~$\omega_i \geq 0$. Further, using the fact that~$\sum_{\ell=k-i}^{k-2} \lambda^\ell \leq \sum_{\ell=0}^{i-2} \lambda^\ell$ yields
		      \[
			      \frac{2}{1 + \lambda^{k-2}} \left( \sum_{\ell=k-i}^{k-2} 2 \lambda^{\ell} + \sum_{\ell=i-1}^{k-1-i}  \lambda^\ell \right) + \gamma \leq \frac{2}{1 + \lambda^{k-2}} \sum_{\ell=0}^{k-2} \lambda^{\ell} + \gamma = d_1 + \gamma,
		      \]
		      and we conclude that~$\omega_i \leq 1$.
\end{proof}

{Before {proving formally} %
{our robustness bound} by exhaustively reviewing all possible moves and bounding the corresponding changes of~$\Phi$, we {give} some intuition.

We choose the scaling parameter~$\gamma$ such that the decrease of~$\Phi$ exactly matches the required lower bound for the case where the request is outside the convex hull of \ldc's servers.
The remaining cases are split among the possible locations where a request can appear between two servers of \ldc, and we show in each case that~$\Phi$ decreases enough. {The definition of the $\omega$ values ensures}
that a wrong prediction gives a tight bound on the decrease of~$\Phi$ for \ldc's move, while a correct prediction still guarantees a loose bound.}

\begin{proof}[Proof of~\Cref{lemma:rob}]
	Note that~\Cref{claim:omega-bound} implies~$\Phi \geq 0$.
	Suppose that the next request arrives. First the optimal solution increases due to~\Cref{claim:omega-bound} the potential by at most~$\rob \gamma \Delta \opt$ while \ldc remains in its previous configuration. Second \ldc moves. 
	In the remaining proof we demonstrate that the potential decreases by at least~$\gamma \Delta \alg$, which proves the Lemma. We look at the following set of exhaustive cases that occur when \ldc makes its move. Assume by scaling that in each case the fast server moves distance~$1$.
	\begin{enumerate}[(a)]
		\item Let the request w.l.o.g. be on the left of~$s_1$. Hence,~$\Delta \alg = 1$, and~$\Theta$ increases by~$d_1$ due to~\Cref{claim:scale-of-theta}. Since~$x_1$ cannot be on the right side of the request, the potential changes by
		      \[
				\Delta \Phi = -\rob \gamma + d_1 = -(d_1 - \gamma) + d_1 = -\gamma \Delta \alg.
			  \]
	\end{enumerate}
	The remaining cases tackle the situations where the request is located between the two servers~$s_i$ and~$s_{i+1}$. Without loss of generality we only look at those cases where~$i \leq \floor{k/2}$, since the others hold by the symmetry of the line and by the symmetry of~$\Phi$.
	\begin{enumerate}[(a),resume]
		\item 
		Let~$1 \leq i \leq \ceil{k/2}-1$ and suppose that~$s_i$ is predicted while the optimal solution serves the request with~$x_j$ for some~$j > i$. Note that~$\Delta \alg = 1 + \lambda$. 
		The change of~$\Phi$ is at most
		      \begin{align*}
			      \Delta \Phi
			      \leq \rob \gamma (\omega_i - \lambda \omega_{i+1}) - d_i + \lambda d_{i+1}.
		      \end{align*}
		      By using the definition of~$\omega_i$ if~$i$ is odd and~\Cref{lemma:omegas-are-equal} if~$i$ is even, this is equal to
		      \begin{align*}
			       d_i - \lambda d_{i+1} - (1+ \lambda)\gamma - d_i + \lambda d_{i+1} = -\gamma (1 + \lambda) = -\gamma \Delta \alg.
		      \end{align*}
		\item 
		Let~$1 \leq i \leq \ceil{k/2}-1$ and suppose that~$s_{i+1}$ is predicted while the optimal solution serves the request with~$x_j$ for some~$j \leq i$.
		      The change of~$\Phi$ is at most
		      \begin{align*}
			      \Delta \Phi
			       \leq \rob \gamma (\omega_{i+1} - \lambda \omega_{i}) - \lambda d_i + d_{i+1}.
		      \end{align*}
		      By using the definition of~$\omega_i$ if~$i$ is even and~\Cref{lemma:omegas-are-equal} if~$i$ is odd, this is equal to
		      \begin{align*}
			       \lambda d_i - d_{i+1} - (1+ \lambda)\gamma  - \lambda d_i + d_{i+1} = -\gamma (1 + \lambda) = -\gamma \Delta \alg.
		      \end{align*}
		\item 
		Let~$1 \leq i \leq \ceil{k/2}-1$ and suppose that~$s_{i}$ is predicted while the optimal solution serves the request with~$x_j$ for some~$j \leq i$.
		      The change of~$\Phi$ is at most
		      \begin{align*}
			      \Delta \Phi
			       \leq \rob \gamma (\lambda \omega_{i+1} - \omega_{i}) - d_i + \lambda d_{i+1}.
		      \end{align*}
		      By using the definition of~$\omega_i$ if~$i$ is odd and~\Cref{lemma:omegas-are-equal} if~$i$ is even, this is equal to
		      \begin{align*}
			       & - d_i + \lambda d_{i+1} + (1+\lambda)\gamma - d_i + \lambda d_{i+1}                                           \\
			       & = -\gamma (1 + \lambda) + 2 ( \lambda d_{i+1} - d_i + (1 + \lambda) \gamma )                                  \\
			       & = -\gamma (1 + \lambda) + 2 ( \lambda d_{i+1} + \lambda^2 d_i - \lambda^2 d_i - d_i + (1 + \lambda) \gamma ).
		      \end{align*}
		      Applying~\Cref{claim:d-gamma} yields
		      \begin{align*}
			       & -\gamma (1 + \lambda) + 2 \left( (\lambda^2 - 1) d_i - (1+\lambda)\frac{\gamma}{\lambda^{i-1}} + (1 + \lambda) \gamma \right) \\
			       & \leq -\gamma (1 + \lambda)= -\gamma \Delta \alg.
		      \end{align*}
		\item 
		Let~$1 \leq i \leq \ceil{k/2}-1$ and suppose that~$s_{i+1}$ is predicted while the optimal solution serves the request with~$x_j$ for some~$j > i$.
		      The change of~$\Phi$ is at most
		      \begin{align*}
			      \Delta \Phi
			       & \leq \rob \gamma (\lambda \omega_{i} - \omega_{i+1}) - \lambda d_i + d_{i+1}.
		      \end{align*}
		      By using the definition of~$\omega_i$ if~$i$ is even and~\Cref{lemma:omegas-are-equal} if~$i$ is odd, we can conclude
		      \begin{align*}
			       & - \lambda d_i + d_{i+1} + (1 + \lambda) \gamma - \lambda d_i + d_{i+1}                                      \\
			       & = - (1 + \lambda) \gamma + 2 ( (1 + \lambda) \gamma - \lambda d_i + d_{i+1} )                             .
		      \end{align*}
		      Using~\Cref{claim:d-gamma} gives
		      \begin{align*}
			       & - (1 + \lambda) \gamma + 2 \left( (1 + \lambda) \gamma - (1 + \lambda) \frac{\gamma}{\lambda^i}  \right) \\
			       & \leq -\gamma (1 + \lambda) = -\gamma \Delta \alg.
		      \end{align*}
	\end{enumerate}
	If~$k$ is even, there are two additional cases which occur when the request is located between the two middle servers~$s_{k/2}$ and~$s_{k/2+1}$. Note that these cases cannot be covered by the previous ones, since the~$\omega$-weights of the servers on both sides of the request are equal.
	\begin{enumerate}[(a),resume]
		\item Let the request be between~$s_{k/2}$ and~$s_{k/2+1}$, and suppose that~$s_{k/2}$ is predicted while the optimal solution serves~$r$ with~$x_j$ for some~$j > k/2$. The change of~$\Phi$ is at most
		      \begin{equation}
			      \Delta \Phi
			      \leq \rob \gamma (\omega_{k/2} - \lambda \omega_{k/2}) - \lambda d_{k/2} - d_{k/2}.\label{eq:change-middle-case}
		      \end{equation}
		      For the rest of this case, we distinguish two cases according to the parity of $k/2$, and show that~$\Delta \Phi \leq -\gamma \Delta \alg$.

		      \emph{If~$k/2$ is even},~\eqref{eq:change-middle-case} is by~\Cref{lemma:omegas-are-equal} and the definition of~$\omega_{k/2}$ equal to
		      \begin{multline*}
			      2 \lambda \sum_{j=1}^{k/4} d_{2j-1} - 2 \sum_{j=1}^{k/4-1} d_{2j} - d_{k/2} - \gamma \\ - \left(2 \lambda \sum_{j=1}^{k/4-1} d_{2j} - 2 \sum_{j=1}^{k/4-1} d_{2j+1} + \lambda d_{k/2} + (2 + \lambda) \gamma\right)  - \lambda d_{k/2} - d_{k/2}.
		      \end{multline*}
		      Noting that~$2 \lambda \sum_{j=1}^{k/4} d_{2j-1} = 2 \lambda d_1 + 2\lambda \sum_{j=1}^{k/4-1} d_{2j+1}$ gives
		      \[
			      -(3 + \lambda)\gamma + 2 \left( \lambda d_1 + (1+\lambda) \sum_{j=2}^{k/2} {(-1)}^{j-1} d_{j} \right).
		      \]
		      We can conclude that this is equal to~$-\gamma(1+\lambda)$ by~\Cref{claim:gamma-alternating}.

		      Similarly,~\emph{if~$k/2$ is odd},~\eqref{eq:change-middle-case} is by~\Cref{lemma:omegas-are-equal} and the definition of~$\omega_{k/2}$ equal to
		      \begin{multline*}
			      2 \lambda \sum_{j=1}^{(k/2-1)/2} d_{2j} - 2 \sum_{j=1}^{(k/2-3)/2} d_{2j+1} - d_{k/2} + \gamma \\ - \left(2 \lambda \sum_{j=1}^{(k/2-1)/2} d_{2j-1} - 2 \sum_{j=1}^{(k/2-1)/2} d_{2j} + \lambda d_{k/2} + \lambda \gamma\right)  - \lambda d_{k/2} - d_{k/2}.
		      \end{multline*}
		      Noting that~$2 \lambda \sum_{j=1}^{(k/2-1)/2} d_{2j-1} = 2 \lambda d_1 + 2 \lambda \sum_{j=1}^{(k/2-3)/2} d_{2j+1}$ yields%
		      \[
			      \gamma - \lambda\gamma + 2 \left( -\lambda d_1 + (1+\lambda) \sum_{j=2}^{k/2} {(-1)}^{j} d_{j} \right) =
			      \gamma - \lambda\gamma - 2 \left( \lambda d_1 + (1+\lambda) \sum_{j=2}^{k/2} {(-1)}^{j-1} d_{j} \right).
		      \]
		      This is equal to~$-\gamma (1+\lambda)$ by~\Cref{claim:gamma-alternating}.
		\item Let the request be between~$s_{k/2}$ and~$s_{k/2+1}$, and suppose that~$s_{k/2}$ is predicted while the optimal solution serves~$r$ with~$x_j$ for some~$j \leq k/2$. The change of~$\Phi$ is at most
		      \begin{align*}
			      \Delta \Phi \leq \rob \gamma (\lambda \omega_{k/2} - \omega_{k/2}) - \lambda d_{k/2} - d_{k/2},
		      \end{align*}
		      which is bounded from above by the previous case. Hence,~$\Delta \Phi \leq -\gamma \Delta \alg$. \qedhere 
	\end{enumerate}
\end{proof}

\section{Proofs for \Cref{sec:tradeoff}}\label{app:tradeoff}

This section is dedicated to the proof of~\Cref{thm:pareto-k}, which we restate below. The proof is a generalization of the one proposed in~\Cref{sec:k2-tradeoff} for two servers. However, for proving the general case we need a more sophisticated construction rule and a more involved argumentation. 

\theoremTradeoffLB*

Let~$\lambda \in (0,1]$. Recall that~$\rho(k) = \sum_{i=0}^{k-1} \lambda^{i}$.
Let~$\A$ be a~$\rho(i)$-consistent \local and \memoryless deterministic online algorithm for the~$i$-server problem on the line, for all $i\leq k$. The objective is to show that \A is then at least $\beta(k)$-robust, with $\beta(k) = \sum_{i=0}^{k-1} \lambda^{-i}$.

Let~$p_1 \leq \ldots \leq p_{k+1}$ be points on the line with inter-distances~$d_1,\ldots,d_k$,  where for~$1 \leq i \leq k-1$,~$d_i = 1$, and $d_k > 1$ is arbitrarily large. See \Cref{fig:LBkI} for an illustration. We also define an {arbitrarily} small constant $\varepsilon>0$ 
and say that a server~\emph{covers} a point~$p_i$ if~it is at most a distance $\varepsilon$ away from it.
We refer to smaller positions on the line as {\it left}.
Let~$P := \{ p_1,\ldots,p_k \}$.
In the following we inductively construct an instance.
In their initial configuration,~\emph{i.e.} at time~$t=0$, the~$k$ servers, $s_1,\ldots,s_k$, are located at~$p_1,\ldots,p_k$. We assume that servers never overpass each other to simplify the notations.
 Then, we \emph{force} the servers to $p_1,\dots,p_{k-1}, p_{k+1}$ (see the \memoryless definition).
The instance terminates when \A places $s_k$ to cover $p_k$.
At any time~$t > 0$, the next requested point $r_t$ is the~\emph{leftmost} point (\emph{i.e.} the point with the smallest index) which is~\emph{not covered} by any server of \A. 
If $p_1$ is not covered and $s_1$ is on the left of $p_2-\varepsilon$ then $r_t$ is the second leftmost uncovered point. If $p_1$ is not covered and $s_1$ covers $p_2$, but did not serve it since leaving $p_1$, then $r_t$ is $p_2$ and $r_{t+1}$ (next in time) is $p_1$. 
  At any time~$t>0$, we denote the instance composed of~$r_1,\ldots,r_{t}$ by~$I_t$.

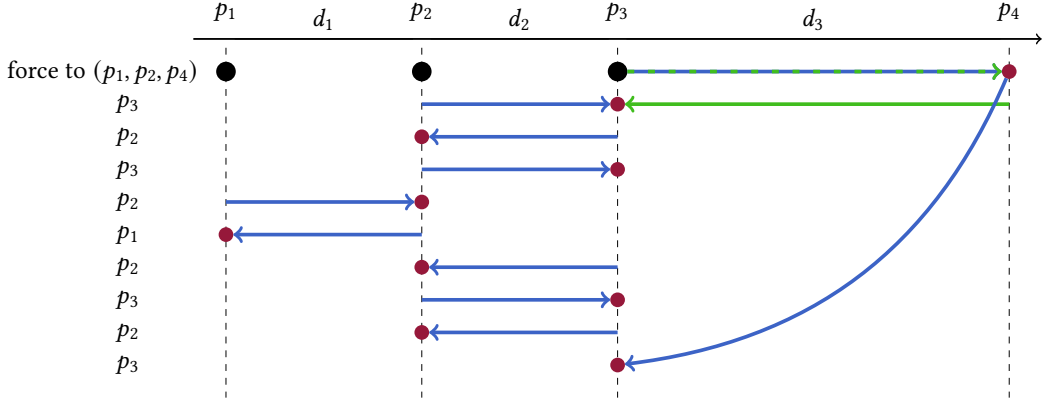
\begin{figure}[btp]
    \centering
    \begin{adjustbox}{width=0.8\linewidth}
        \begin{tikzpicture}
			\draw[thick,black,->] (0,0) -- (13,0);
			\node (a) at (0.5,0) [label=$p_1$] {};
			\node (b) at (3.5,0) [label=$p_2$] {};
			\node (c) at (6.5,0) [label=$p_3$] {};
			\node (d) at (12.5,0) [label=$p_4$] {};
			\draw[black] (a) -- node[above] {$d_1$} (b);
			\draw[black] (b) -- node[above] {$d_2$} (c);
			\draw[black] (c) -- node[above] {$d_3$} (d);
		
			\draw[helperline] (a) -- (0.5,-5.5);
			\draw[helperline] (b) -- (3.5,-5.5);
			\draw[helperline] (c) -- (6.5,-5.5);
			\draw[helperline] (d) -- (12.5,-5.5);
		
			\node (r1) at (-1, -0.5) {\hspace{-2em}force to $(p_1, p_2, p_4)$};
			\node (r2) at (-1, -1) {$p_3$};
			\node (r3) at (-1, -1.5) {$p_2$};
			\node (r4) at (-1, -2) {$p_3$};
			\node (r5) at (-1, -2.5) {$p_2$};
			\node (r6) at (-1, -3) {$p_1$};

			\node (r7) at (-1, -3.5) {$p_2$};
			\node (r8) at (-1, -4.0) {$p_3$};
			\node (r9) at (-1, -4.5) {$p_2$};
			\node (r10) at (-1, -5) {$p_3$};
		
			\node[server] (a1) at (a |- r1) {};
			\node[server] (b1) at (b |- r1) {};
			\node[server] (c1) at (c |- r1) {};
			\node[req] (req1) at (d |- r1) {};
			\node[req] (req2) at (c |- r2) {};
			\draw[algMove] (c1) -- (req1);
			\draw[predMove, loosely dashed] (c1) -- (req1);
			\draw[predMove] (d |- r2) -- (req2);
			
			\coordinate (b2) at (b |- r2) {};
			\draw[algMove] (b2) -- (req2);
			
			\node[req] (req3) at (b |- r3) {};
			\coordinate (a3) at (c |- r3) {};
			\draw[algMove] (a3) -- (req3);
			
			\node[req](req4) at (c |- r4) {};
			\coordinate (b4) at (b |- r4) {};
			\draw[algMove] (b4) -- (req4);
			
			\node[req](req5) at (b |- r5) {};
			\coordinate (a5) at (a |- r5) {};
			\draw[algMove] (a5) -- (req5);

			\node[req](req6) at (a |- r6) {};
			\coordinate (a6) at (b |- r6) {};
			\draw[algMove] (a6) -- (req6);
		
			\node[req](req7) at (b |- r7) {};
			\coordinate (a7) at (c |- r7) {};
			\draw[algMove] (a7) -- (req7);

			\node[req](req8) at (c |- r8) {};
			\coordinate (b8) at (b |- r8) {};
			\draw[algMove] (b8) -- (req8);

			\node[req](req9) at (b |- r9) {};
			\coordinate (b9) at (c |- r9) {};
			\draw[algMove] (b9) -- (req9);

			\node[req](req10) at (c |- r10) {};
			\draw[algMove] (req1) edge[bend left] (req10);
		 \end{tikzpicture}
    \end{adjustbox}
    \caption{Instance~$I$ for~$k=3$. The prediction is drawn green. The blue moves indicate an exemplary schedule of an algorithm.}\label{fig:LBkI}
\end{figure}

At every point in time, we give~$\A$ the prediction that suggests serving a request at some point~$p_i$ with the server~$s_i$. An exception is the first request, where~$s_k$ is predicted~(note that the first request is always located at~$p_{k+1}$).
We now show that this construction rule is well-defined.

\begin{lemma}\label{lemma:general-tradeoff-finite-instance}
    The construction ends after a finite number of requests.
\end{lemma}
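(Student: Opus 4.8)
The plan is to argue by contradiction. Suppose the construction never reaches a state where $\A$ places $s_k$ to cover $p_k$; then it produces an infinite request sequence $r_1, r_2, \ldots$ with finite prefixes $I_t$, and $I_t$ is a genuine instance for every $t$ (as in the proof of \Cref{thm:pareto-k2}, where the final instance is a prefix of an infinite one). The first observation is that $\A(I_t) \to \infty$: after the force, every request $r_t$ is by construction at a point of $P$ that no server of $\A$ covers, so $\A$ must move some server a distance strictly more than $\varepsilon$ to serve it; since $\varepsilon > 0$ is a fixed constant and there are infinitely many requests, $\A(I_t) \geq (t - O(1))\,\varepsilon \to \infty$. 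One also has to check that the construction never stalls, i.e.\ that as long as $s_k$ does not cover $p_k$ there is always an uncovered point of $P$; this follows from a pigeonhole argument using that $\A$ has $k$ servers, the non-overtaking convention, and the fact that $s_k$ cannot be brought strictly between $p_{k-1}$ and $p_k$ without some server reaching $p_k$.

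The second step is to bound $\ftp$, and hence $\opt$ and $\eta$, uniformly in $t$. Right after the force both $\A$ and $\ftp$ are in the configuration $(p_1, \ldots, p_{k-1}, p_{k+1})$, so the first subsequent request is $p_k$; following the prediction, $\ftp$ serves it by moving $s_k$ from $p_{k+1}$ to $p_k$. From then on all of $\ftp$'s servers sit on $p_1, \ldots, p_k$, and since every later request lies in $P$ and is predicted to the server already parked at it, $\ftp$ pays nothing more. Consequently $\ftp(I_t) = 2 d_k + O(1)$ for all $t$ — a constant independent of $t$ — and therefore $\opt(I_t) \leq 2 d_k + O(1)$ and $\eta(I_t) = \ftp(I_t) - \opt(I_t) \leq 2 d_k + O(1)$ as well.

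Finally I would combine the two. We may assume $\A$ has finite robustness $\beta' < \infty$, since otherwise \Cref{thm:pareto-k} holds trivially; then $\A(I) \leq \beta' \cdot \opt(I) + c$ for every instance $I$, with $c$ depending only on the fixed initial configuration $(p_1, \ldots, p_k)$. Hence $\A(I_t) \leq \beta'\,(2 d_k + O(1)) + c$ is bounded, contradicting $\A(I_t) \to \infty$. Equivalently, one may simply invoke that $\A$'s error-dependent competitive ratio $\mu(\cdot)$ is finite and bounded over the bounded range of the errors $\eta(I_t)$. Thus the construction must terminate after finitely many requests.

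I expect the main obstacle to be the careful bookkeeping in the first two steps rather than the final contradiction: verifying that the construction is well-defined at every step (there is always a next request; each request costs $\A$ more than $\varepsilon$; the special $p_1$/$p_2$ rules interact correctly with the non-overtaking convention), and pinning down $\ftp$'s trajectory precisely enough to certify that its cost is a fixed constant — in particular that the first request after the force is $p_k$ and that no later request ever displaces a $\ftp$-server from the point $p_i$ that the prediction assigns to it.
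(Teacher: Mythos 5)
Your first two steps track the paper's argument exactly: if the construction never terminates, every request after the force lies in $P$ and is uncovered by $\A$, so $\A$ pays at least $\varepsilon$ per request and its total cost is unbounded, while the solution that follows the prediction (and hence \opt) costs only a constant, roughly $2d_k$ --- the paper states this as ``an optimal solution of constant cost, to which the given prediction corresponds.'' The divergence is in how the contradiction is closed. The paper invokes the standing hypothesis of \Cref{thm:pareto-k}, namely $\rho(k)$-consistency: since the prediction handed to $\A$ corresponds to that constant-cost optimal solution, an unbounded cost for $\A$ would force its consistency to be infinite. You instead invoke a finite robustness bound $\A(I)\le \beta'\cdot\opt(I)+c$, justified by ``otherwise the theorem holds trivially.'' That reduction is legitimate at the level of \Cref{thm:pareto-k} (an algorithm with no finite robustness is vacuously at least $\rob(k)$-robust), but it does not prove \Cref{lemma:general-tradeoff-finite-instance} as stated: the lemma is unconditional under the section's hypotheses (consistency, locality, \memoryless), and its unconditional form is what the downstream machinery uses --- the finite instance $I$, the distances $D_i$ in \Cref{lem:D1Di}, and the repeated instance $I^\omega$ all presuppose termination. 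Adopting your route therefore requires restructuring the entire proof of the theorem into a case distinction on whether $\A$ is robust at all, whereas the paper's closing step uses exactly the hypothesis that is actually available and keeps the lemma self-contained.

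Substituting the consistency argument for your third step (unbounded $\A$-cost versus the constant-cost solution that the prediction follows) makes your write-up coincide with the paper's proof; your extra bookkeeping --- checking that the construction never stalls, and tracing \ftp's trajectory after the force to certify $\ftp(I_t)\le 2d_k+O(1)$ --- is correct and somewhat more explicit than what the paper records (the paper only needs that \emph{some} solution of constant cost exists and that the prediction corresponds to it). A minor observation: your robustness-based step has the small technical advantage of not needing the prediction to be optimal for the constructed instance, but since robustness is precisely what the theorem is trying to lower-bound, leaning on it here is the wrong hypothesis to spend.
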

\begin{proof}
    For the sake of contradiction, assume that the construction does not end after a finite number of requests.
    Hence, every request~$r$ except the first one must be in the set~$P$, and by construction, no server covered~$r$ in the previous configuration. 
    Thus, the server that serves~$r$ must have been moved with some cost at least $\varepsilon$, which implies that~$\A$ has unbounded cost.
    
    Now consider~\emph{any} infinite instance~$I^P$ which starts with a request at~$p_{k+1}$ followed by requests contained in~$P$. An optimal solution for~$I^P$ is to serve the first request with~$s_k$ and then to move it immediately back to the set~$P$, such that every point in~$P$ contains a server. Hence, the total cost of an optimal solution is constant. Therefore the consistency of $\A$ would be infinite, as the prediction given to~$\A$ corresponds to the optimal solution, which is a contradiction.
\end{proof}
Due to~\Cref{lemma:general-tradeoff-finite-instance}, we assume for the rest of this section that the construction ends after~$n$ steps, and we define~$I = I_n$, see \Cref{fig:LBkI}.

We first focus on the cost that~$\A$ charges for~$I$.
Let $D_i$  be the distance traveled by the server $s_i$ in \A. 
Using the \local definition, we show the following relation between~$D_i$'s:

\begin{lemma}\label{lem:D1Di}
For all $i\leq k$, for $\varepsilon$ small enough, we have $D_1 \leq \lambda^{i-1} D_i + O_k(\varepsilon D_i + d_1)$, where the notation $O_k(\cdot)$ treats $k$ as a constant.
\end{lemma}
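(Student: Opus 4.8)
I would argue by induction on $i$, the case $i=1$ being trivial. The heart of the argument is a \emph{cumulative estimate}, obtained from the \local property applied to the set $S_i=\{s_1,\dots,s_i\}$ of the $i$ leftmost servers, which are consecutive. Writing $I_i$ for the $i$-server sub-instance built from $I$ and $S_i$ as in the definition, and noting that the cost of $\A$ on $I$ restricted to $S_i$ equals $D_1+\cdots+D_i$, the $\rho(i)$-consistency of $\A$ together with \local gives
\[
 D_1+\cdots+D_i \;\le\; \rho(i)\cdot\ftp(I_i) + c_i .
\]
Here $c_i=O_k(\varepsilon+d_1)$: the request of $I$ immediately preceding $\A$'s final move of $s_k$ onto $p_k$ is located at $p_k$, so at that moment $p_1,\dots,p_{k-1}$ are all covered, and moving $s_k$ onto $p_k$ does not un-cover them; hence the final configuration of $S_i$ differs from its initial one $(p_1,\dots,p_i)$ by $O_k(\varepsilon)$, and the \local definition yields $c_i=O(i\varepsilon)$, up to the fixed cost of the force (which involves only $p_1,\dots,p_{k-1},p_{k+1}$ and contributes $O_k(d_1)$ since $s_k\notin S_i$).

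The crucial claim is that $\ftp(I_i)\le D_i+O_k(\varepsilon D_i+d_1)$. To prove it I would exhibit an explicit schedule for \ftp on $I_i$: after the force, freeze $s_1,\dots,s_{i-1}$ at $p_1,\dots,p_{i-1}$ — every request of $I_i$ predicted to such a server is a replicated request at the corresponding point, so no movement is needed — and let $s_i$ be the only moving server. Server $s_i$ must visit $p_i$ on replicated $p_i$-requests and, on each replacement request, the position that $\A$'s own server $s_i$ occupied right after $\A$ served the corresponding request of $I$ (this is exactly how the definition places replacement requests). Using the construction of $I$ — in particular the rule that only the leftmost uncovered point is requested, with the special handling of $p_1$ and $p_2$ — one argues that at each such moment $\A$'s server $s_i$ is within $\varepsilon$ of the target point; hence the sequence of points that \ftp's $s_i$ has to visit is, up to an $O_k(\varepsilon)$ perturbation per request, a time-ordered subsequence of the trajectory of $\A$'s $s_i$. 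On the line, following a subsequence of a path in order costs no more than the path itself, so \ftp's $s_i$ travels at most $D_i$ plus the accumulated slack $O_k(\varepsilon D_i)$; adding the $O_k(d_1)$ from the force gives $\ftp(I_i)\le D_i+O_k(\varepsilon D_i+d_1)$. For $i=k$ there are no replacement requests, $I_k=I$, and $\ftp(I)=2d_k\le D_k+O_k(\varepsilon)$ because $\A$'s $s_k$ is forced to travel from $p_k$ to $p_{k+1}$ and back, so the same estimate holds.

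Plugging the $\ftp(I_i)$ bound into the first display gives
\[
 D_1+\cdots+D_{i-1}\;\le\;(\rho(i)-1)\,D_i+O_k(\varepsilon D_i+d_1)\;=\;\lambda\rho(i-1)\,D_i+O_k(\varepsilon D_i+d_1),
\]
using $\rho(i)-1=\lambda+\cdots+\lambda^{i-1}=\lambda\rho(i-1)$. Now I would invoke the induction hypothesis for each $j<i$, which after rearranging reads $D_j\ge\lambda^{1-j}D_1-O_k(\varepsilon D_j+d_1)$; summing over $j=1,\dots,i-1$ yields $D_1\cdot\rob(i-1)\le D_1+\cdots+D_{i-1}+O_k(\varepsilon D_i+d_1)$, where $\rob(i-1)=\sum_{\ell=0}^{i-2}\lambda^{-\ell}$. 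Since $\rob(i-1)=\lambda^{2-i}\rho(i-1)$, the two inequalities combine to $D_1\le\lambda\cdot\lambda^{i-2}D_i+O_k(\varepsilon D_i+d_1)=\lambda^{i-1}D_i+O_k(\varepsilon D_i+d_1)$, which closes the induction (all error terms are absorbed using $\lambda\le1$ and $D_j\le D_i+O_k(\cdot)$ for $j<i$).

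\textbf{Main obstacle.} The delicate point is the structural claim behind $\ftp(I_i)\le D_i+O_k(\varepsilon D_i+d_1)$: that in the sub-instance \ftp's server $s_i$ can \emph{track} $\A$'s server $s_i$ at cost not exceeding $D_i$ up to $\varepsilon$-slack. This is precisely where the carefully engineered request rule is needed — the special treatment of $p_1$ and $p_2$ is designed to keep $\A$'s configuration ``aligned with the natural labelling'' whenever \ftp's $s_i$ has to move, so that replacement requests land next to where $\A$'s $s_i$ actually sits and $p_i$-requests are served while $\A$'s $s_i$ is at $p_i$. Making this rigorous requires propagating an invariant on $\A$'s server positions throughout the run of $I$ (using $\rho$-consistency to rule out deviations), and this bookkeeping is the heaviest and most error-prone part of the argument.
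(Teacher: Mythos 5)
Your skeleton is the same as the paper's: induct on $i$, apply the \local property to the $i$ leftmost servers, bound the cost of \ftp on the sub-instance by $D_i$ plus negligible terms, and then combine with the induction hypothesis (your closing algebra, via $\rho(i)-1=\lambda\rho(i-1)$ and summing the rearranged hypothesis, matches the paper). The gap is the pivotal claim $\ftp(I_i)\le D_i+O_k(\varepsilon D_i+d_1)$, which is both stronger than what can be proved at this point and not delivered by your tracking argument. The replacement requests do lie exactly on the trajectory of $\A$'s $s_i$, but the replicated requests at $p_i$ do not: at the moment $p_i$ is requested in $I$ it is \emph{uncovered}, so $\A$'s $s_i$ is more than $\varepsilon$ away from $p_i$ and may in fact sit anywhere between $p_{i-1}$ and $p_{i+1}$, e.g.\ when $\A$ serves that request with $s_{i-1}$. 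In the recurring scenario where $p_i$ is repeatedly served by $s_{i-1}$ while $\A$'s $s_i$ hovers just outside the $\varepsilon$-neighbourhood of $p_i$, \ftp's $s_i$ pays an extra detour of order $\varepsilon$ (or more) per event while $D_i$ need not increase at all; the number of such events is controlled not by $D_i$ but by the movement of the \emph{other} servers (each event forces $s_{i-1}$, or $s_1$ when $i=2$, to travel about $1$). So "per-request slack $O_k(\varepsilon)$, hence total $O_k(\varepsilon D_i)$" does not follow, and the inequality you need is false as a pointwise statement about $D_i$ alone. This is precisely why the paper proves only $\ftp(I')\le D_i+3\sqrt\varepsilon\,\A_i(I)$, charging the slack to $\A_i(I)=\sum_{\ell\le i}D_\ell$, with the $\sqrt\varepsilon$ threshold distinguishing whether the slack can be charged to $D_i^*$ or must be charged to $D_{i-1}^*$ (resp.\ $D_1^*$).

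Once the error is expressed through $\A_i(I)$, an extra bootstrapping step is unavoidable: plugging into the \local inequality gives $\A_i(I)\le\rho(i)D_i+3\sqrt\varepsilon\rho(i)\A_i(I)+O(\varepsilon k^2+d_1 k)$, and for $\varepsilon$ small enough one absorbs the middle term into the left-hand side ($3\sqrt\varepsilon\rho(i)<1/2$), obtaining $\A_i(I)\le 2\rho(i)D_i+O(\cdot)$, which re-substituted converts the slack into $O_k(\sqrt\varepsilon\,D_i)$ before the summation over the induction hypothesis. This absorption is entirely absent from your plan, and it cannot be sidestepped by letting $\varepsilon\to0$ at the end, since the instance (and hence the $D_\ell$) depends on $\varepsilon$. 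You yourself flag the tracking invariant as the heaviest and most error-prone part; as stated it is not just heavy bookkeeping but an over-claim, so the proposal does not go through without restructuring the \ftp bound and adding the absorption step as in the paper.
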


\begin{proof}
Let $i\in \{2,3,\ldots,k\}$ and assume by induction that the relation is true for all
$j<i$. Note that it is trivial for $i=1$.

We denote by $\A_i(I)$ the cost of $\A$ restricted to the $i$ leftmost servers.
Consider the $i$ leftmost servers and we apply the \local property of \A on
these servers.   Let $I'$ be the corresponding instance on $i$ servers, {
where requests not served by $\{s_1, \dots, s_i\}$ are replaced by requests to
the new position of $s_i$.}

Consider the algorithm $\ftp$ serving $I'$ following the initial predictions as
in the \local definition.
 There are two types of requests: a point $p_\ell$ for
$\ell<i$ is served at no cost by $s_\ell$, and any other request is served by
$s_i$. The objective is to upper bound $\ftp(I')$ by $D_i$ plus negligible terms.
Consider all requests different from $p_i$ served by $s_i$ in $\ftp$,
and let $r_1$ and $r_2$ be two \emph{consecutive} requests in this set (there can be
other requests not belonging to this set between $r_1$ and $r_2$). These
requests are based on requests of $I$ outside of $\{p_1\dots p_i\}$, which means
that each of these points (except $p_1$) is covered by a server of \A before the
request, and that $s_i$ also went to $r_1$ and $r_2$ in \A, at the time at which
they are requested in $I'$. A technical difficulty here is that $s_i$ does not
need to be \emph{exactly} at $p_i$ before these requests: it can be within a
distance of $\varepsilon$. There are several cases to analyze.

\begin{itemize}
\item If $p_i$ is not requested between
$r_1$ and $r_2$, then $\ftp$ pays the shortest path between $r_1$ and $r_2$, so
at most how much $s_i$ travels in $\A$. 
\item If $s_i$ goes on $p_i$ between $r_1$ and $r_2$ in \A, then $\ftp$ also pays at most how much $s_i$ travels in $\A$.
\item If $p_i$ is requested between $r_1$ and $r_2$ and  $s_i$  does not go on $p_i$ in \A, we focus on the subinstance $I^*$ starting from the request $r_1$ and ending just before $r_2$ is requested. Let $\ftp(I^*)$, $\A_i(I^*)$ and $D_\ell^*$ be the restrictions of $\ftp(I)$, $\A_i(I)$ and $D_\ell$ to $I^*$. {Note that $\ftp(I^*) \leq D_i^* + \varepsilon$ as $\ftp$ moves $s_i$ to $r_1$ then back to {$p_i$} whereas $\A$ needs only to move $s_i$ to $r_1$ and then near {$p_i$}.} The objective is now to show that this additive term $\varepsilon$ is negligible compared to $\A_{i}(I^*)$, for which we need a further case distinction.
\begin{itemize}
\item If $r_1$ is at least a distance $\sqrt{\varepsilon}$ away from $p_i$, then
$\ftp(I^*)$ moves $s_i$ by a distance which is close to $D_i^*$. Specifically, we have $D_i^* \geq 2\sqrt{\varepsilon}-2\varepsilon \geq \sqrt{\varepsilon}$ for $\varepsilon$ small enough, and
the relation $\ftp(I^*) \leq D_i^* + \varepsilon$ implies  $\ftp(I^*) \leq (1+
\sqrt{\varepsilon}) D_i^*$.

\item If $r_1$ is at most a distance $\sqrt{\varepsilon}$ away from $p_i$, we get $\ftp(I^*) \leq 2\sqrt{\varepsilon} + \varepsilon$ and we distinguish two cases which are slightly different if $i=2$ or $i>2$.
\begin{itemize}
\item If $i>2$ then the cost of
$\A_i$ on $I^*$ is at least $A_i(I^*)\geq D_{i-1}^* >d_{i-1}-\varepsilon = 1-\varepsilon$ as $p_i$ must have
been served by $s_{i-1}$ (previously located near $p_{i-1}$) if it was not served by $s_i$.  We therefore obtain $\ftp(I^*) \leq 3\sqrt{\varepsilon} \cdot D_{i-1}^*$.

\item If $i=2$, the difference is that {$s_1$} may be initially located anywhere between $p_1$ and $p_2$. $s_1$ serves $p_i=p_2$ when it is requested (as this case assumes $s_2$ does not serve $p_2$ in $I^*$), and then must serve $p_1$ by the definition of the instance $I$. 
Therefore, the cost of
$\A_i$ on $I^*$ is at least $A_i(I^*)\geq D_1^*\geq d_1=1$. We thus obtain $\ftp(I^*) \leq 3{\sqrt{\varepsilon}}\cdot D_{1}^*$. 
\end{itemize}
\end{itemize} 
\end{itemize}

Summing over all subinstances, we obtain the following inequality:
\[
	\ftp(I') \leq (1+\sqrt\varepsilon) D_i + 3\sqrt{\varepsilon} \cdot \sum_{\ell=1}^{i-1} D_\ell \leq D_i + 3\sqrt \varepsilon\cdot \A_{i}(I).
\]

As the initial and final configurations are identical up to a distance of $d_1$ for $s_1$ and $\varepsilon$ for other servers, the \local property for $I'$ yields
\begin{align*}
\A_{i}(I) &\leq \rho(i) D_i + 3\sqrt{\varepsilon}\rho(i) \cdot  \A_{i}(I) + O(\varepsilon k^2 + d_1k).
\end{align*}
 
{
For $\varepsilon$ small enough, we have $3\sqrt{\varepsilon}\rho(i)<1/2$,  which implies that $\A_{i}(I) \leq 2\rho(i)D_i + O(\varepsilon k^2+ d_1k)$. Using this new bound on $\A_i(I)$ on the right-hand side of the above inequality leads to the following:}
\begin{align*}
\A_{i}(I) &\leq \rho(i) D_i+ O(\varepsilon k^2 + d_1k+ \sqrt\varepsilon\rho(i)^2 D_i)\\
 \sum_{\ell=1}^i D_\ell &\leq  D_i + (\rho(i)-1) D_i + O(\varepsilon k^2 + d_1k+ \sqrt\varepsilon\rho(i)^2 D_i)\\
 \sum_{\ell=1}^{i-1} D_\ell &\leq  (\rho(i)-1) D_i + O(\varepsilon k^2 + d_1k+ \sqrt\varepsilon\rho(i)^2 D_i)
\end{align*}

{
We now use the induction hypothesis to lower bound $D_\ell$ by ${\lambda^{1-\ell}} D_1+O_k(\varepsilon D_i+ d_1)$ and replace $\rho(i)$ by its expression, before dividing all sides by $\sum_{\ell=0}^{i-2}\lambda^{-\ell}$. We use the notation $O_k(\cdot)$ to avoid detailing the irrelevant dependencies on $k$, note that $\rho(i)$ depends only on $\lambda$ and $k$ so does not appear inside the notation $O_k(\cdot)$.
}

\begin{align*}
\sum_{\ell=1}^{i-1} \frac{1}{\lambda^{\ell-1}} D_1 &\leq  \sum_{\ell=1}^{i-1} \lambda^\ell D_i + O_k(\varepsilon  D_i+ d_1)\\
D_1 &\leq \lambda^{i-1} D_i + O_k(\varepsilon D_i+ d_1)
\end{align*}
\end{proof}

We build the instance $I^\omega$ repeating the instance $I$ $\omega$ times, starting directly by the force to $p_1,\dots,p_{k-1},p_{k+1}$, see \Cref{fig:LBkIw}.  The predictions for each iteration correspond to the predictions defined in instance~$I$. We now bound the optimal cost for this instance.

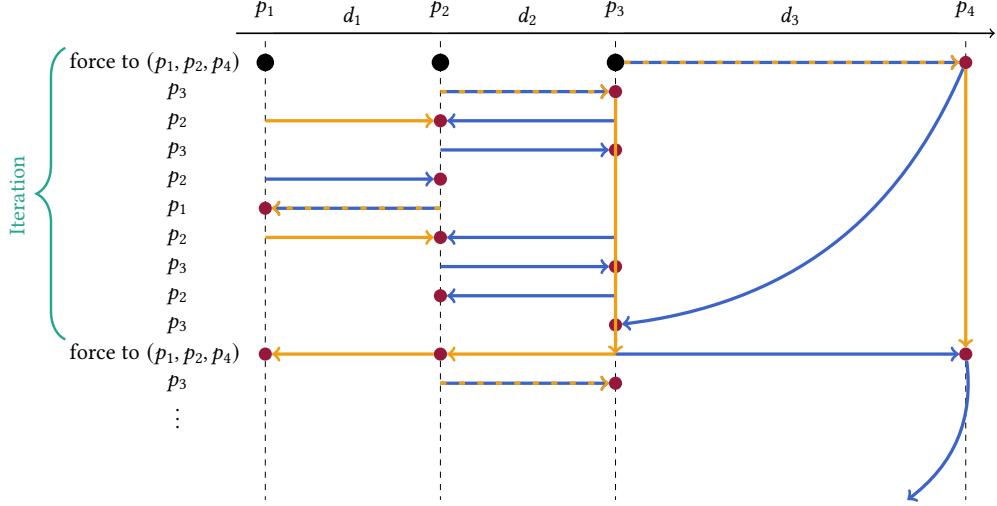
\begin{figure}[tb]
    \centering
    \begin{adjustbox}{width=0.8\linewidth}
        \begin{tikzpicture}
			\draw[thick,black,->] (0,0) -- (13,0);
			\node (a) at (0.5,0) [label=$p_1$] {};
			\node (b) at (3.5,0) [label=$p_2$] {};
			\node (c) at (6.5,0) [label=$p_3$] {};
			\node (d) at (12.5,0) [label=$p_4$] {};
			\draw[black] (a) -- node[above] {$d_1$} (b);
			\draw[black] (b) -- node[above] {$d_2$} (c);
			\draw[black] (c) -- node[above] {$d_3$} (d);
		
			\draw[helperline] (a) -- (0.5,-8);
			\draw[helperline] (b) -- (3.5,-8);
			\draw[helperline] (c) -- (6.5,-8);
			\draw[helperline] (d) -- (12.5,-8);
		
			\node (r1) at (-1, -0.5) {\hspace{-2em}force to $(p_1, p_2, p_4)$};
			\node (r2) at (-1, -1) {$p_3$};
			\node (r3) at (-1, -1.5) {$p_2$};
			\node (r4) at (-1, -2) {$p_3$};
			\node (r5) at (-1, -2.5) {$p_2$};
			\node (r6) at (-1, -3) {$p_1$};

			\node (r7) at (-1, -3.5) {$p_2$};
			\node (r8) at (-1, -4.0) {$p_3$};
			\node (r9) at (-1, -4.5) {$p_2$};
			\node (r10) at (-1, -5) {$p_3$};
		
			\node[server] (a1) at (a |- r1) {};
			\node[server] (b1) at (b |- r1) {};
			\node[server] (c1) at (c |- r1) {};
			\node[req] (req1) at (d |- r1) {};
			\node[req] (req2) at (c |- r2) {};
			\draw[algMove] (c1) -- (req1);
			\draw[optMove, loosely dashed] (c1) -- (req1);

			\coordinate (b2) at (b |- r2) {};
			\draw[algMove] (b2) -- (req2);
			
			\node[req] (req3) at (b |- r3) {};
			\coordinate (a3) at (c |- r3) {};
			\draw[algMove] (a3) -- (req3);
			
			\node[req](req4) at (c |- r4) {};
			\coordinate (b4) at (b |- r4) {};
			\draw[algMove] (b4) -- (req4);
			
			\node[req](req5) at (b |- r5) {};
			\coordinate (a5) at (a |- r5) {};
			\draw[algMove] (a5) -- (req5);

			\node[req](req6) at (a |- r6) {};
			\coordinate (a6) at (b |- r6) {};
			\draw[algMove] (a6) -- (req6);
		
			\node[req](req7) at (b |- r7) {};
			\coordinate (a7) at (c |- r7) {};
			\draw[algMove] (a7) -- (req7);

			\node[req](req8) at (c |- r8) {};
			\coordinate (b8) at (b |- r8) {};
			\draw[algMove] (b8) -- (req8);

			\node[req](req9) at (b |- r9) {};
			\coordinate (b9) at (c |- r9) {};
			\draw[algMove] (b9) -- (req9);

			\node[req](req10) at (c |- r10) {};
			\draw[algMove] (req1) edge[bend left] (req10);

			\draw[optMove, loosely dashed] (b2) -- (req2);
			
			\draw[optMove] (a |- r3) -- (req3);
			
			\draw[optMove, loosely dashed] (a6) -- (req6);
		
			\draw[optMove] (a |- r7) -- (req7);

			\node (r17) at (-1, -5.5) {\hspace{-2em}force to $(p_1, p_2, p_4)$};
			\node (r18) at (-1, -6) {$p_3$};
			\node (r19) at (-1, -6.5) {$\vdots$};
		
			\node[req](req17d) at (d |- r17) {};
			\node[req](req17a) at (a |- r17) {};
			\node[req](req17b) at (b |- r17) {};
			\coordinate (c17) at (c |- r17) {};
			\draw[algMove] (c17) -- (req17d);
			\draw[algMove] (req17d) edge[bend left] (11.5, -8);
			
			\coordinate (b18) at (b |- r18) {};
			\node[req](req18) at (c |- r18) {};
			\draw[algMove] (b18) -- (req18);
			\draw[optMove, loosely dashed] (b18) -- (req18);
			\draw[optMove] (req17b) -- (req17a);
			\draw[optMove] (c |- r17) -- (req17b);
		
			\draw[optMove] (req1) -- (req17d);
			\draw[optMove] (req2) -- (c |- r17);

			\draw [decorate,decoration={brace,amplitude=14pt, mirror, raise=5pt},yshift=0pt,very thick, algGreen] (-2.75,-0.25) -- (-2.75, -5.25) node [midway,left,yshift=.8cm, xshift=-1cm,rotate=90] {Iteration}; 
		 \end{tikzpicture}
    \end{adjustbox}
    \caption{Instance~$I^\omega$ for~$k=3$. The alternative (better) solution is drawn orange. The prediction and the exemplary moves of the algorithm are the same as in instance~$I$ for each iteration.}\label{fig:LBkIw}
\end{figure}

\begin{lemma}\label{lemma:opt-i-prime}
    $\opt(I^\omega) \leq 2d_k + \omega \cdot (D_1 + 2\sum_{i=2}^{k-1} d_i)$.
\end{lemma}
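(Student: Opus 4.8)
The plan is to exhibit an explicit offline schedule~$\mathcal{S}$ for~$I^\omega$ and to bound its cost; since~$\opt(I^\omega)$ does not exceed the cost of any schedule, this proves the lemma. The schedule~$\mathcal{S}$ (see~\Cref{fig:LBkIw}) keeps one server permanently at~$p_{k+1}$ and the other~$k-1$ servers essentially in the \emph{shifted} configuration~$(p_2,\dots,p_k)$. During the first iteration we reach this configuration at no extra cost by letting the requests drive the moves: the force request at~$p_{k+1}$ moves a server there (cost~$d_k$), the first construction request---which is at~$p_k$---pushes the next server rightward onto~$p_k$ (cost~$d_{k-1}$), and, likewise, each subsequent construction request at a point~$p_{i+1}$ pushes one more server onto~$p_{i+1}$, for a one-time total of~$\sum_{i=1}^{k} d_i = d_k+\sum_{i=1}^{k-1} d_i$. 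Afterwards the configuration~$(p_2,\dots,p_k,p_{k+1})$ already covers every point except~$p_1$, so each force request other than the one at~$p_1$, and each construction request other than those at~$p_1$, costs nothing; a request at~$p_1$ is served by the round trip~$p_2\to p_1\to p_2$ of the leftmost server, its return leg happening as soon as~$p_2$---then the leftmost uncovered point---is requested, with an explicit~$O(d_1)$ correction in the degenerate cases (short iterations, or a return leg spilling past an iteration boundary) where~$\mathcal{S}$ does not sit exactly in the shifted configuration.

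To bound the cost of~$\mathcal{S}$: since the force resets~$\A$ to the configuration~$(p_1,\dots,p_{k-1},p_{k+1})$ before every construction phase and~$\A$ is \memoryless and deterministic,~$\A$ repeats the same behaviour in every iteration, so its server~$s_1$ travels exactly~$D_1$ per iteration. The decisive step is to charge the leftmost-server round trips of~$\mathcal{S}$ against~$D_1$. By the construction rule, a point~$p_1$ is requested only once~$\A$'s own server~$s_1$ has left~$p_1$ and has been forced---by the clauses of the rule dealing with~$p_1$ and~$p_2$---to commit to~$p_2$ or further right; hence between two consecutive requests at~$p_1$ the server~$s_1$ of~$\A$ travels at least~$2d_1$, so if~$m_1$ denotes the number of construction requests at~$p_1$ per iteration then~$2d_1 m_1\le D_1$, which already covers the total length~$2d_1 m_1$ of the corresponding round trips of~$\mathcal{S}$. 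The remaining per-iteration cost of~$\mathcal{S}$---one round trip for the~$p_1$ request inside the force, plus the~$O(1)$ corrections---is at most~$2\sum_{i=2}^{k-1} d_i$ because~$d_i=1$ for~$i<k$ and~$k\ge 3$. Summing over the~$\omega$ iterations and adding the one-time cost~$d_k+\sum_{i=1}^{k-1} d_i$, and absorbing~$\sum_{i=1}^{k-1} d_i=k-1$ into~$2d_k$ (legitimate since~$d_k$ may be taken arbitrarily large), we obtain~$\opt(I^\omega)\le 2d_k+\omega(D_1+2\sum_{i=2}^{k-1} d_i)$.

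The main obstacle is to make the inequality~$2d_1 m_1\le D_1$, and with it the claim that~$\mathcal{S}$ may remain (essentially) in the shifted configuration, fully rigorous. This relies on the precise form of the construction rule, which is engineered exactly so that~$\A$'s leftmost server oscillates between~$p_1$ and the region to the right of~$p_2$ rather than, say, letting another server of~$\A$ cover~$p_1$; one must also verify that every request to~$p_2,\dots,p_k$ is indeed already covered by the shifted configuration (the~$\varepsilon$-slack in the word ``covers'' is harmless, feeding only into the lower-order corrections) and bound all the~$O(1)$ boundary corrections by the stated constant~$2$. The second~$d_k$ in the additive term~$2d_k$, beyond the one~$d_k$ needed to park a server at~$p_{k+1}$, is precisely what pays for reaching the shifted configuration during the first iteration.
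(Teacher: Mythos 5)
Your overall plan (exhibit an explicit offline schedule with one server parked at~$p_{k+1}$, the remaining servers essentially shifted to~$(p_2,\dots,p_k)$, and the leftmost activity charged against~$D_1$) is the same as the paper's, and your charging of the construction-phase round trips at~$p_1$ against~$D_1$ via the rule ``$p_1$ is only requested after~$\A$'s~$s_1$ has served~$p_2$ since leaving~$p_1$'' is exactly the paper's argument. The genuine gap is in how you serve the \emph{force} at each iteration boundary. Your schedule stays in the shifted configuration~$(p_2,\dots,p_k,p_{k+1})$ throughout the force and you budget ``one round trip for the~$p_1$ request inside the force.'' But a force to~$(p_1,\dots,p_{k-1},p_{k+1})$ is only guaranteed to exist as \emph{some} finite sequence of requests among these points (its length and composition depend on~$\A$ and are not under our control); both~$p_1$ and~$p_2$ are forced points, and the sequence may alternate between them arbitrarily often. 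In your configuration only the leftmost server is available for~$\{p_1,p_2\}$ (the next one sits at~$p_3$), so each such alternation costs~$2d_1$, and the total force cost per iteration is unbounded in terms of~$D_1+2\sum_{i=2}^{k-1}d_i$. Note also that even one round trip already exhausts (or exceeds) your per-iteration budget~$2\sum_{i=2}^{k-1}d_i=2(k-2)$ when~$k$ is small, e.g.~$k=2,3$, leaving nothing for the ``$O(1)$ corrections.''

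The paper's schedule avoids this precisely by \emph{un-shifting} at every iteration boundary: at the end of each iteration the~$k-2$ middle servers move back from~$(p_3,\dots,p_k)$ to~$(p_2,\dots,p_{k-1})$, so that during the force every forced point is covered after at most one~$d_1$-move of the alternating server, no matter how long or how adversarially ordered the force sequence is; the back-and-forth shifting is exactly what the per-iteration term~$2\sum_{i=2}^{k-1}d_i$ in \Cref{lemma:opt-i-prime} pays for, whereas you spend that budget on round trips and boundary corrections. To repair your proof you would either have to adopt this temporary un-shifting, or prove a bound on the number of~$p_1$-requests inside a force, which the \memoryless definition does not provide. (Your remaining deviations are harmless: building the shifted configuration can simply be done upfront at one-time cost~$\sum_{i=1}^{k}d_i$, and absorbing the~$k-1$ into the second~$d_k$ is fine since~$d_k$ is taken arbitrarily large.)
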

\begin{proof}
Consider the following schedule for $I^\omega$: at each iteration, move $k-1$ servers
to $p_2,\dots,p_{k+1}$ and alternate between $p_1$ and $p_2$ with $s_1$. We now
analyze how many alternations we need to do in each iteration. By definition of
the instance, $p_1$ is only requested if $s_1$ has served $p_2$ since it last
left $p_1$. Therefore, the distance traveled by $s_1$ equals $D_1$.
At the end of the iteration, we move back the $k-2$ middle servers, giving the target cost.
\end{proof}

\begin{proof}[Proof of~\Cref{thm:pareto-k}]
 As $\A$ is \memoryless, its behavior on each iteration of $I$ is
identical, $s_k$ is at $p_k$ initially,
then the $k$ servers are forced to the points $p_1,\dots,p_{k-1},p_{k+1}$ before
continuing the requests. Therefore $\A$ must pay at least $d_k$ to serve the
first force operation, and then must make the same decisions in all iterations.

Using~\Cref{lemma:opt-i-prime}, the competitive ratio of~$\A$ for instance~$I^\omega$ is therefore at least
    \begin{align*}
        \frac{\A(I^\omega)}{\opt(I^\omega)} &\geq  \frac{\omega \cdot \sum_{i=1}^k D_i}{2d_k + \omega \cdot (D_1 + 2\sum_{i=2}^{k-1} d_i)}\\
        &\xrightarrow{\omega \to \infty}  \frac{\sum_{i=1}^k D_i}{D_1 + 2\sum_{i=2}^{k-1} d_i}.
\end{align*}

Consider $d_k$ arbitrarily large (but still small compared to $\omega$). If $D_1$ is bounded by a constant, then the competitive ratio is unbounded, so $\A$ is not robust. Otherwise, the terms $d_i$ become negligible compared to $D_1$, and we show that the limit of the competitive ratio is lower bounded by the desired robustness expression, using \Cref{lem:D1Di} {(which implies that $d_1$ is also negligible compared to any $D_i$):}

\begin{align*}
      \frac{\A(I^\omega)}{\opt(I^\omega)}   &\xrightarrow{d_k \to \infty} \frac{\sum_{i=1}^k D_i}{D_1} 
	  \geq  \sum_{i=0}^{k-1} \frac 1{\lambda^{i}+O_k(\varepsilon + \frac{d_1}{{D_{i+1}}})}
	  \xrightarrow{\varepsilon \to 0} 
 	  \sum_{i=0}^{k-1} \lambda^{-i}.\qedhere 
\end{align*}
\end{proof}

{In the following we show that the consistency of \ldc is best possible up to a factor of $2$.}

\begin{lemma}\label{lemma:consistencies}
    For every~$\lambda \in [0,1]$,~$\alpha(k) < 2 \rho(k)$.
\end{lemma}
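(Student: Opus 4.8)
The plan is to prove the slightly stronger statement $\alpha(k) \le 2\rho(k) - 1 < 2\rho(k)$ by a term‑by‑term comparison of the two polynomials in $\lambda$. Writing $2\rho(k) = \sum_{i=0}^{k-1} 2\lambda^i$, the point is simply that every monomial occurring in $\alpha(k)$ has a nonnegative coefficient not exceeding the coefficient of the same power of $\lambda$ in $2\rho(k)$, while the constant terms already differ by exactly $1$; since $\lambda \ge 0$ all the "missing" terms of $2\rho(k)$ are nonnegative.

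First I would dispose of the small cases that make the index ranges degenerate: for $k=1$ we have $\alpha(1)=1=\rho(1)$, so $\alpha(1)=1<2=2\rho(1)$, and for $k=2$ we have $\alpha(2)=1+\lambda$ while $2\rho(2)=2+2\lambda$. For odd $k\ge 3$, I would write $\alpha(k)=1+\sum_{i=1}^{(k-1)/2} 2\lambda^i$; since $(k-1)/2\le k-1$ and the omitted terms $2\lambda^i$ ($(k-1)/2 < i \le k-1$) are nonnegative,
\[
    \alpha(k)\;=\;1+\sum_{i=1}^{(k-1)/2} 2\lambda^i\;\le\;1+\sum_{i=1}^{k-1} 2\lambda^i\;=\;2\rho(k)-1\;<\;2\rho(k).
\]
For even $k\ge 2$, I would write $\alpha(k)=1+\sum_{i=1}^{k/2-1} 2\lambda^i+\lambda^{k/2}$, bound the last term by $\lambda^{k/2}\le 2\lambda^{k/2}$ (valid since $\lambda\ge 0$), and use $k/2\le k-1$ together with nonnegativity of the omitted terms to get
\[
    \alpha(k)\;\le\;1+\sum_{i=1}^{k/2} 2\lambda^i\;\le\;1+\sum_{i=1}^{k-1} 2\lambda^i\;=\;2\rho(k)-1\;<\;2\rho(k).
\]

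There is no genuine obstacle here; the proof is a one‑line coefficient comparison in each parity class. The only thing worth being careful about is the bookkeeping of empty sums for $k\in\{1,2\}$, which is why I peel those off explicitly before running the generic argument, and noting that $\lambda\ge 0$ (guaranteed by $\lambda\in[0,1]$) is all that is used — the upper bound $\lambda\le 1$ is not even needed for this particular inequality.
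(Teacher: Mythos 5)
Your proof is correct: the term-by-term coefficient comparison goes through (including the boundary cases $k\in\{1,2\}$ and $\lambda\in\{0,1\}$), and it even yields the slightly stronger bound $\alpha(k)\le 2\rho(k)-1$. It takes a different route from the paper, which first disposes of $\lambda=1$ via $\alpha(k)=k=\rho(k)$ and then, for $\lambda<1$, uses the closed form $\rho(k)=\frac{1-\lambda^k}{1-\lambda}$ to compute the ratio $\alpha(k)/\rho(k)$ explicitly --- namely $\frac{1+\lambda-\lambda^{k/2}-\lambda^{k/2+1}}{1-\lambda^k}$ for even $k$ and $\frac{1+\lambda-2\lambda^{(k+1)/2}}{1-\lambda^k}$ for odd $k$ --- and observes that each expression is less than $2$. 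Your comparison of monomial coefficients is more elementary: it needs no case split between $\lambda=1$ and $\lambda<1$, uses only $\lambda\ge 0$, and makes the additive slack of $1$ explicit. What the paper's computation buys in exchange is the exact ratio $\alpha(k)/\rho(k)$, which shows how close the factor comes to $2$ as $\lambda\to 1$; for the lemma as stated, either argument suffices.
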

\begin{proof}%
    First note that for~$\lambda = 1$,~$\alpha(k) = k = \rho(k)$. Now suppose that~$\lambda < 1$. Applying the formula for the finite geometric series gives
    \[
        \rho(k) = \frac{1 - \lambda^k}{1 - \lambda}.
    \]
	We now prove the result based on the parity of $k$.  Assume that~$k$ is even. Recall that
              \[
                  \alpha(k) = 1 + 2\sum_{i=1}^{k/2-1} \lambda^i + \lambda^{k/2} = 1 + 2 \frac{\lambda - \lambda^{k/2}}{1-\lambda} + \lambda^{k/2}
              \]
              and, thus, 
              \begin{align*}
                  \frac{\alpha(k)}{\rho(k)} = \frac{(1 + \lambda^{k/2})(1-\lambda) + 2(\lambda - \lambda^{k/2})}{1 - \lambda^k} = \frac{1 + \lambda - \lambda^{k/2} - \lambda^{k/2+1}}{1 - \lambda^k} < 2.
              \end{align*}
         Assume that~$k$ is odd, then
              \[
                  \alpha(k) = 1 + 2\sum_{i=1}^{(k-1)/2} \lambda^i =  1 + 2 \frac{\lambda - \lambda^{(k+1)/2}}{1-\lambda},
              \]
              and we conclude that
              \begin{align*}
                  \frac{\alpha(k)}{\rho(k)} = \frac{1-\lambda + 2(\lambda - \lambda^{(k+1)/2})}{1 - \lambda^k} = \frac{1 + \lambda - 2 \lambda^{(k+1)/2}}{1 - \lambda^k} < 2.
              \end{align*}
\end{proof}

\section{PAC Learnability of Predictions}\label{app:learnability}

We show that our predictions are PAC learnable in an agnostic sense with a sample complexity polynomial in the number of requests and we give an efficient learning algorithm.
Let~$\D$ be an unknown distribution of sequences of~$n$ requests represented by points in the interval $[0,1]$. Here we assume a bounded line as a metric (scaled to $[0,1]$), which is a restriction but natural in most applications. Further, we assume that we can sample i.i.d.\ sequences~from~$\D$. 

Let $\Hyp = \{1,\ldots,k\}^n$ denote a hypothesis class containing all possible static predictions, i.e., the set of all $k$-server solutions for request sequences of length $n$. Let $C_0$ be a known initial configuration. The prediction error for a prediction~$\pred \in \Hyp$ on a request sequence~$\seq$ is defined as~$\eta_\seq(\pred) = \ftp(\seq,\pred) - \opt(\seq)$, where $\ftp(\seq,\pred)$ is the total cost of following the prediction $\pred$ on the sequence~$\seq$ starting in~$C_0$, and~$\opt(\seq)$ is the cost of an optimal solution on~$\seq$ starting in~$C_0$. 
Then,~$\eta_\seq(\pred) \leq \eta_{\max} \leq n$ %
for all possible sequences~$\seq$ and~for~all~$\pred \in \Hyp$.

We argue that we can use a classical \emph{empirical risk minimization (ERM)} learning method, see, e.g.,~\cite{shalevB14ML}. The ERM method uses a training set~$S=\{\seq_1,\ldots,\seq_m\}$ of i.i.d.\ samples from~$\D$.
Then, it determines a prediction~$\pred_p \in \Hyp$ that minimizes the \emph{empirical error} $\eta_S(\pred) = \frac{1}{m} \sum_{j=1}^m \eta_{\seq_j}(\pred)$. Since our hypothesis class is finite and the error function bounded, classical results imply that our predictions are PAC learnable in an agnostic sense with a polynomial sample complexity. Further, we show that the problem of finding the prediction minimizing the empirical error within the training set can be reduced to an offline $k$-server problem on a modified request sequence~$\widetilde{\seq}$ of length~$n$, where the distance between the~$\ell$th and~$i$th request in~$\widetilde{\seq}$ is given by~$\frac{1}{m} \sum_{j=1}^m d(\seq_j(\ell), \seq_j(i))$. This problem can be solved efficiently~\cite{chrobak1991dc}. %

\ThmLearnability*

\begin{proof}
	Since the hypothesis class $\Hyp$ is finite with~$|\Hyp| = k^n$, and our non-negative error function is bounded by~$\eta_{\max}$, 
	classical results, see e.g.~\cite{shalevB14ML}, imply that~$\Hyp$
	is agnostically PAC-learnable using the ERM algorithm with a sample complexity of
	\[
		m \leq  \left\lceil \frac{2 \log(2 |\Hyp| / \delta)\eta_{\max}^2}{\epsilon^2}\right\rceil \in \bigO \left( \frac{(n \log k - \log \delta)\eta_{\max}^2}{\epsilon^2} \right).
	\]
	That is, given a sample of size at least~$m$, the ERM algorithm outputs with a probability of at least~$(1-\delta)$ a prediction~$\pred_p$ such that~$\E_{\seq \sim \D}[\eta_\seq(\pred_p)] \leq \E_{\seq \sim \D}[\eta_{\seq}(\pred^*)] + \epsilon$ holds, where~$\pred^* = \arg \min_{\pred \in \Hyp} \E_{\seq \sim \D}[\eta_\seq(\pred)]$.

	It remains to describe an efficient implementation of the ERM algorithm for our setting. Let~$S = \{\seq_1,\ldots,\seq_m\}$ be a sample drawn i.i.d. from~$\D$. We assume that this can be done in polynomial time in~$m$. For a sequence~$\sigma_j \in S$ let~$\sigma_j(i)$ be the position of the $i$th request in~$\sigma_j$. We further define for~$1 \leq \ell \leq i \leq n$ and~$1 \leq k' \leq k$ the distance functions~$\delta_j(\ell, i) = \abs{\seq_j(\ell) - \seq_j(i)}$ and $\gamma_j(k', i) = \abs{C_0(k') - \seq_j(i)}$. The empirical error of a prediction~$\pred$ is in our setting defined as
	\[
		\eta_S(\pred) = \frac{1}{m} \sum_{j=1}^m \eta_{\seq_j}(\pred) = \frac{1}{m} \sum_{j=1}^m \ftp(\seq_j,\pred) - \opt(\seq_j).
	\]
	The ERM algorithm outputs the prediction~$\pred_p \in \Hyp$ that minimizes~$\eta_S(\pred)$ as a function over~$\Hyp$. Since iterating over all predictions in~$\Hyp$ takes exponential time, we compute~$\pred_p$ differently. To do so, we first observe that $\frac{1}{m} \sum_{j=1}^m \opt(\seq_j)$ is independent of~$\pred$, thus minimizing~$\eta_S(\pred)$ can be reduced to minimizing
	\begin{align}
		\frac{1}{m} \sum_{j=1}^m \ftp(\seq_j,\pred) \nonumber
		&= \frac{1}{m} \sum_{j=1}^m \sum_{k'=1}^k \sum_{i=1}^n \xi^\pred_{k',i} \cdot \gamma_j(k',i) + \sum_{\ell = 1}^i \chi^\pred_{k',i,\ell} \cdot \delta_j(\ell, i)  \nonumber\\
		&= \sum_{k'=1}^k \sum_{i=1}^n  \xi^\pred_{k',i} \cdot \left( \frac{1}{m} \sum_{j=1}^m \gamma_j(k',i) \right) + \sum_{\ell = 1}^i \chi^\pred_{k',i,\ell} \cdot \frac{1}{m} \sum_{j=1}^m \delta_j(\ell, i),\label{eq:pac1}
	\end{align}
	where $\chi^\pred_{k',i,\ell} \in \{0,1\}$ indicates (i.e. is equal to~$1$) that server~$k'$ serves the $i$th request of~$\seq_j$ directly after the~$\ell$th request of~$\seq_j$ in $\pred$ and~$\xi^\pred_{k',i} \in \{0,1\}$ indicates that the $i$th request of~$\seq_j$ is the first one that server~$k'$ serves in~$\pred$.
	
	We now demonstrate that we can efficiently compute a prediction~$\pred \in \Hyp$ that minimizes~\eqref{eq:pac1}. 
	Indeed, observe that~\eqref{eq:pac1} is equal to the total cost of the solution~$\pred$ for the $k$-server instance that starts in~$C_0$ and serves a sequence~$\widetilde{\seq}$ of length~$n$, where the distance between the~$\ell$th and~$i$th request in~$\widetilde{\seq}$ is given by~$\delta'(\ell, i) = \frac{1}{m} \sum_{j=1}^m \delta_j(\ell, i)$ and the distance between the~$i$th request in~$\widetilde{\seq}$ and the initial position of server~$k'$ is given by $\gamma'(k', i) = \frac{1}{m} \sum_{j=1}^m \gamma_j(k',i)$. But this means that any optimal solution~$\widetilde{\pred}$ for this instance also minimizes~\eqref{eq:pac1}. Clearly,~$\widetilde{\pred} \in \Hyp$, and an optimal solution for a k-server instance with known distance functions can be computed in~$\bigO(kn^2)$ time using a min-cost flow algorithm~\cite{chrobak1991dc}.
\end{proof}

\section{Experiments}\label{app:experiments}

This section gives a detailed overview over the empirical experiments.
The simulation software is written in Rust (version 1.51.0, 2018 edition). We executed all experiments in Ubuntu 18.04.5 on a machine with two AMD EPYC ROME 7542 CPUs (64 cores in total) and 1.96 TB RAM.

We implemented \ftpdc~\cite{antoniadis2020mts} with the hyperparameter~$\gamma$ equal to~$1$.
The instances are based on the BrightKite-Dataset~\cite{cho2011friendship}.
We extract sequences with a length of $1000$ checkins, normalize the scaling of latitudes to the interval $[0,4000]$, and use these values as the positions of the requests on the line. All servers start at the same initial random position.

We generate predictions in a semi-random fashion. 
Fix two parameters~$p$, the number of \emph{bins}, and~$b$, the \emph{bin size}, and an instance. 
Our goal is to generate evenly distributed predictions,~i.e., in each bin~$i \in \{1,\ldots,p\}$ {there} are at least five predictions with relative error between~$(i-1)b$ and~$ib$. 
Additionally, we use an optimal solution of the instance as the perfect prediction. 

Given those parameters and an instance, we iteratively sample many predictions with an increasing number of wrong choices with respect to the optimal solution. While this procedure does not find all predictions, especially these with the largest relative error, it gives a good tradeoff between running time and range of prediction error. 
We set~$p=10$ and~$b$ as high as we find for at least~$40$ instances these evenly distributed predictions.
Other instances are discarded.

The results for~$k=2$,~$k=10$ and~$k=50$ are displayed in \Cref{fig:experimental-results-k2,fig:experimental-results-k10,fig:experimental-results-k50}.
We first observe that for a reasonable choice of~$\lambda$ ($0.1 \leq \lambda \leq 0.5$) \ldc outperforms \ftpdc throughout almost all generated relative prediction errors in both lazy and non-lazy settings. This is also the case compared to \dc with the exception of its strong performance for $k=2$ in the lazy implementation.
Further, all algorithms except \ldc for~$\lambda = 0.0$ improve by a lazy implementation. This is no surprise, as this is the only algorithm that only moves a single server in the non-lazy setting, so there are no postponed moves that can possibly be improved by a lazy implementation. The actual improvements of the largest mean empirical ratio for any bin of all algorithms which we discovered in our experiments are given in~\Cref{table:lazy}. Observe that \ldc benefits more from the lazy implementation when~$\lambda$ gets closer to 1, whereas the improvements for \ftpdc are between 24\% and 28\%. We suspect that \ftpdc only makes few expensive resets in our instances, while \ldc benefits from many cheap improvements. 

\begin{table}
	\centering
	\begin{tabular}{lcccccccc}
		\toprule
		& \multicolumn{2}{c}{\dc} & \multicolumn{2}{c}{\ldc (0.1)} & \multicolumn{2}{c}{\ldc (0.5)} & \multicolumn{2}{c}{\ftpdc} \\ \cmidrule(lr){2-3} \cmidrule(lr){4-5} \cmidrule(lr){6-7} \cmidrule(lr){8-9}
		& non-lazy & lazy & non-lazy & lazy & non-lazy & lazy & non-lazy & lazy  \\
		\midrule
		$k=2$ & 1.60 & 1.03 & 1.31 & 1.10 & 1.35 & 1.035 & 1.70 & 1.23 \\
		Improvement & \multicolumn{2}{c}{35\%} & \multicolumn{2}{c}{16\%} & \multicolumn{2}{c}{22\%} & \multicolumn{2}{c}{27\%} \\
		\midrule
		$k=10$ & 1.63 & 1.29 & 1.35 & 1.19 & 1.43 & 1.16 & 2.14 & 1.63 \\
		Improvement & \multicolumn{2}{c}{21\%} & \multicolumn{2}{c}{12\%} & \multicolumn{2}{c}{19\%} & \multicolumn{2}{c}{24\%} \\
		\midrule
		$k=50$ & 1.63 & 1.29 & 1.45 & 1.27 & 1.44 & 1.17 & 2.29 & 1.66 \\
		Improvement & \multicolumn{2}{c}{21\%} & \multicolumn{2}{c}{12\%} & \multicolumn{2}{c}{19\%} & \multicolumn{2}{c}{28\%} \\
		\bottomrule
	\end{tabular}
	\caption{Relative improvements of the largest mean empirical competitive ratio for any bin due to lazy implementations.}\label{table:lazy}
\end{table}

\begin{figure}[tb]
    \begin{subfigure}[t]{0.49\textwidth}
        \includegraphics[width=\textwidth]{bk_k2.pdf}
        \caption{non-lazy}
    \end{subfigure}\hfill
	\begin{subfigure}[t]{0.49\textwidth}
        \includegraphics[width=\textwidth]{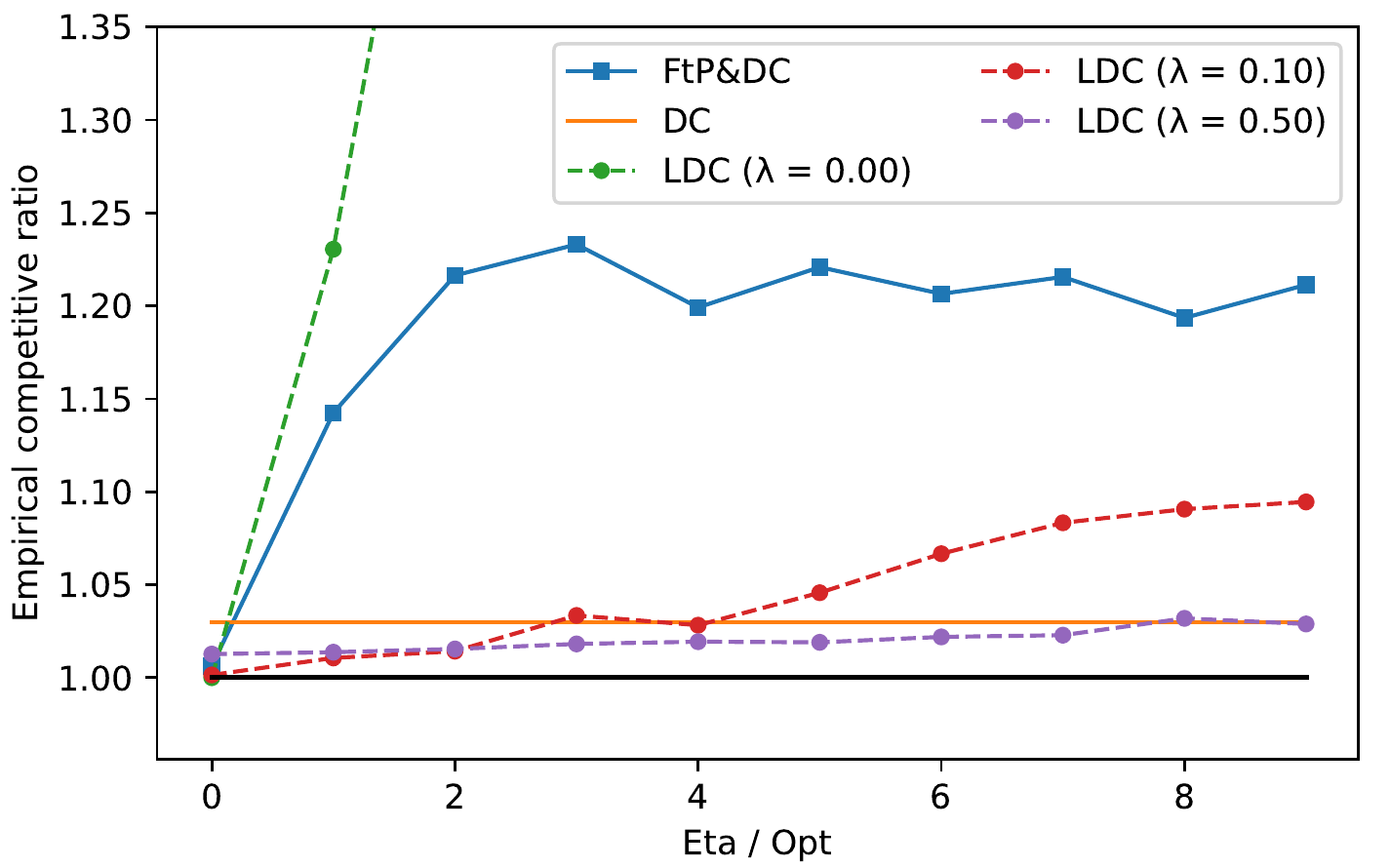}
        \caption{lazy}
    \end{subfigure}   
    \caption{Results for $k=2$ and~$b=1$.}\label{fig:experimental-results-k2}
\end{figure}

\begin{figure}[tb]
    \begin{subfigure}[t]{0.49\textwidth}
        \includegraphics[width=\textwidth]{bk_k10.pdf}
        \caption{non-lazy}
    \end{subfigure}\hfill
    \begin{subfigure}[t]{0.49\textwidth}
        \includegraphics[width=\textwidth]{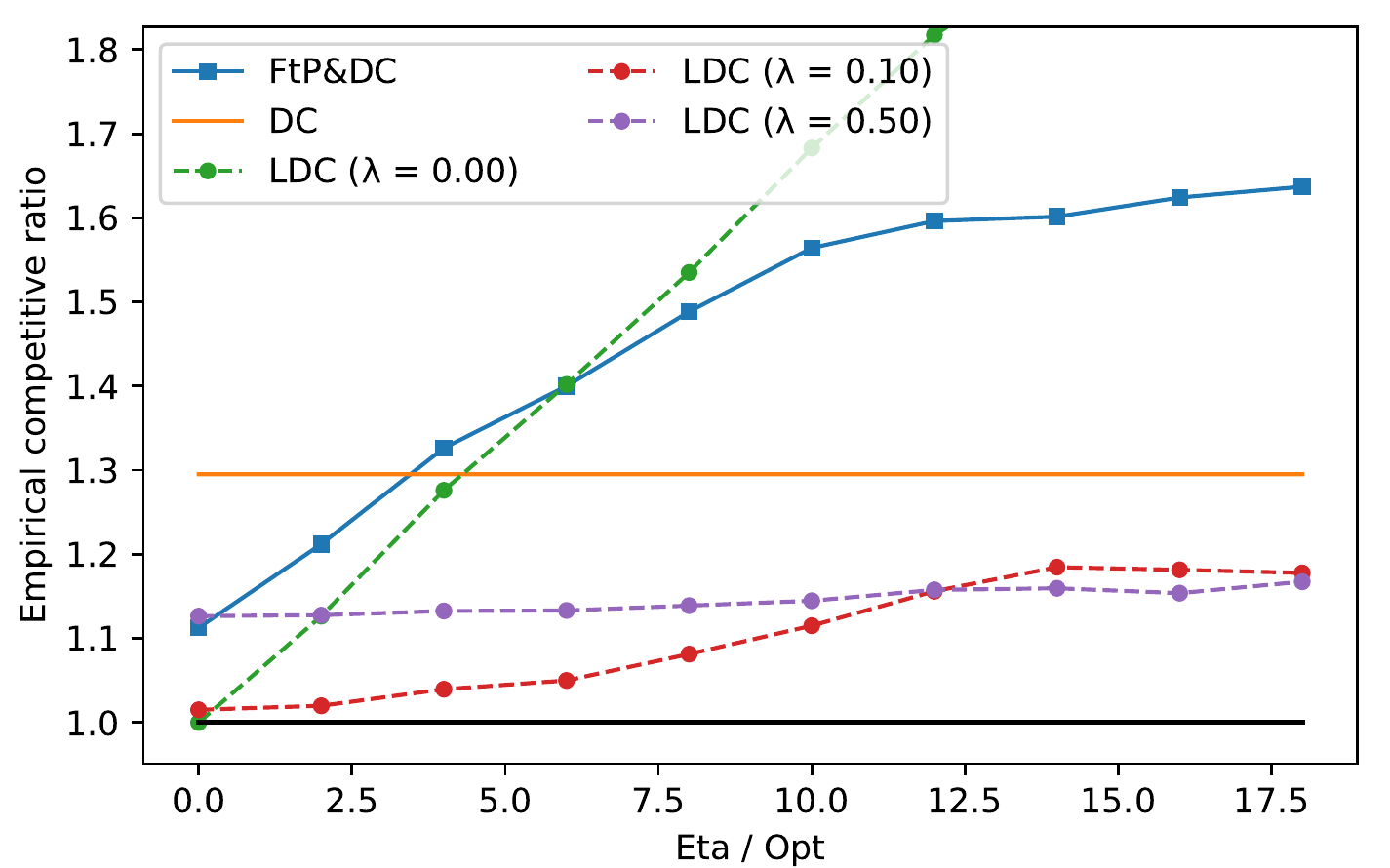}
        \caption{lazy}
    \end{subfigure}
    \caption{Results for $k=10$ and~$b=2$.}\label{fig:experimental-results-k10}
\end{figure}

\begin{figure}[tb]
    \begin{subfigure}[t]{0.49\textwidth}
        \includegraphics[width=\textwidth]{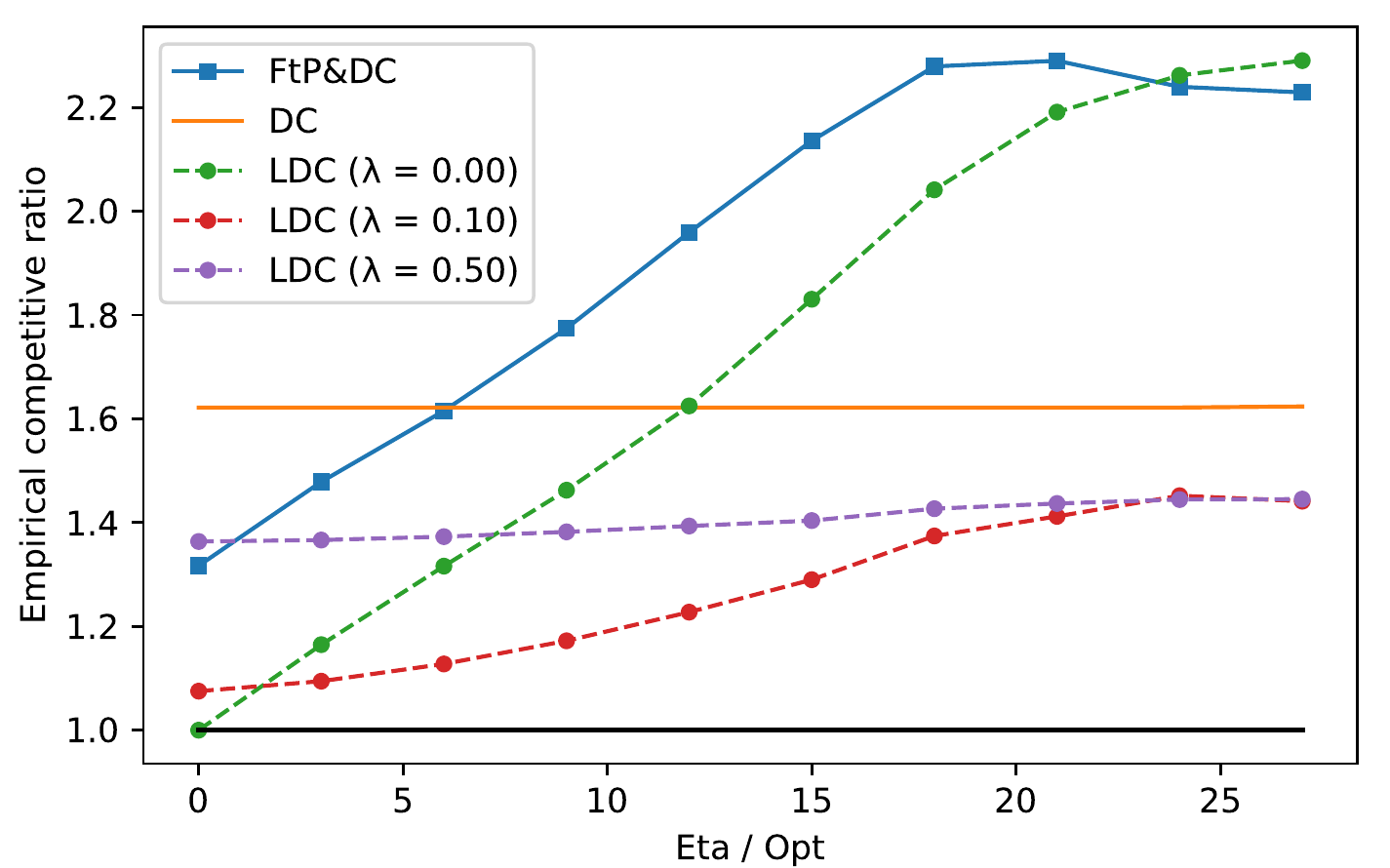}
        \caption{non-lazy}
    \end{subfigure}\hfill
    \begin{subfigure}[t]{0.49\textwidth}
        \includegraphics[width=\textwidth]{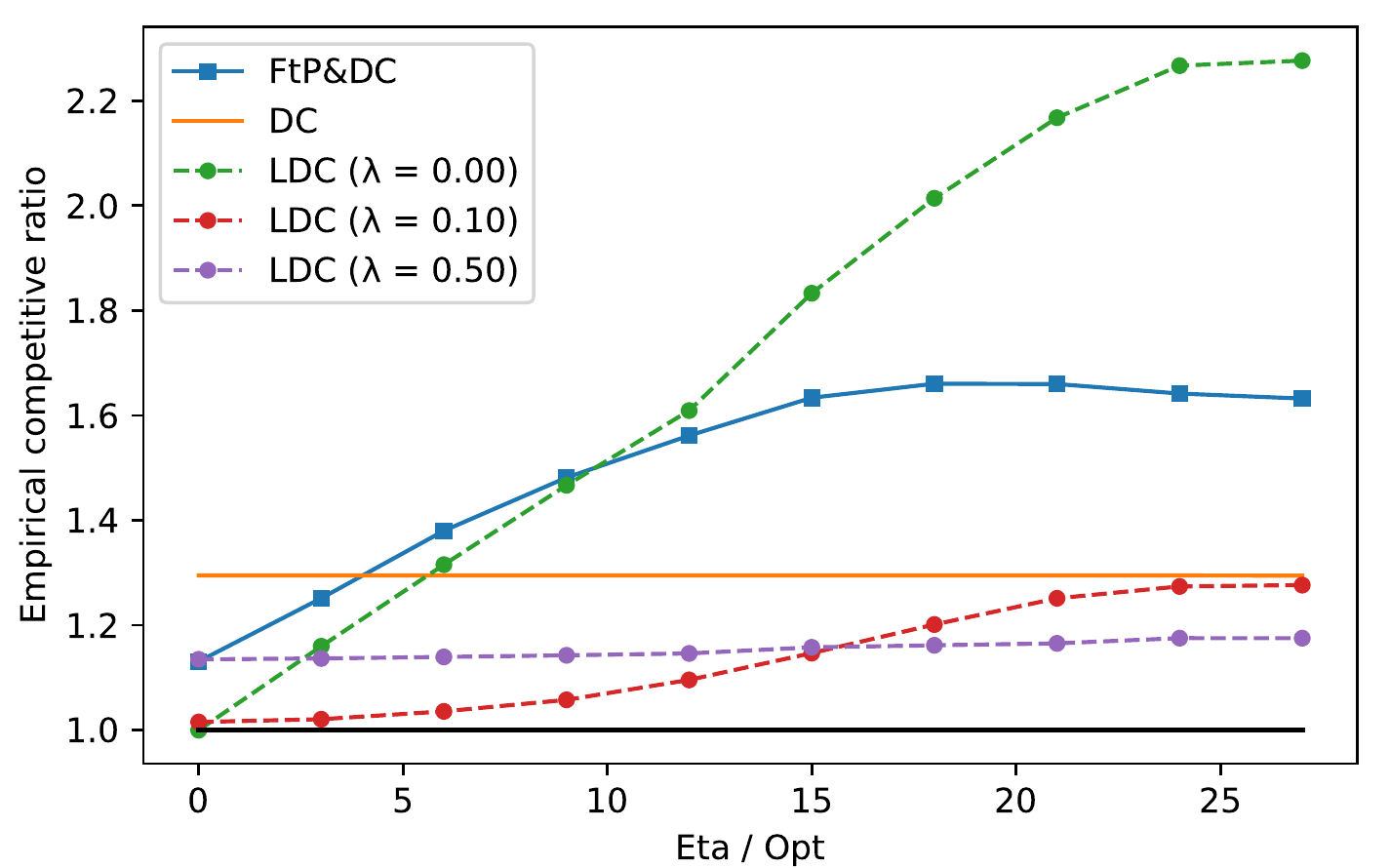}
        \caption{lazy}
    \end{subfigure}
    \caption{Results for $k=50$ and~$b=3$.}\label{fig:experimental-results-k50}
\end{figure}

\end{document}